\documentclass{article}

\usepackage[margin=1in]{geometry}
\usepackage[colorlinks=true,linkcolor=blue!60!black,citecolor=blue!60!black,urlcolor=blue!60!black]{hyperref}
\usepackage{natbib}

\usepackage{graphicx}
\usepackage{booktabs}
\usepackage{amsmath}
\usepackage{amssymb}
\usepackage{mathtools}
\usepackage{amsthm}
\usepackage{algorithm}
\usepackage[noend]{algpseudocode}
\usepackage{enumitem}
\usepackage{microtype}
\usepackage{xcolor}
\usepackage{placeins}

\theoremstyle{plain}
\newtheorem{theorem}{Theorem}[section]
\newtheorem{proposition}[theorem]{Proposition}
\newtheorem{lemma}[theorem]{Lemma}
\newtheorem{corollary}[theorem]{Corollary}

\theoremstyle{definition}

\newtheorem{assumption}[theorem]{Assumption}
\theoremstyle{remark}
\newtheorem{remark}[theorem]{Remark}

\newcommand{\R}{\mathbb{R}}
\newcommand{\norm}[1]{\left\|#1\right\|}
\newcommand{\inner}[2]{\left\langle #1,\, #2\right\rangle}
\newcommand{\Hb}{\mathbf{H}}
\newcommand{\Ib}{\mathbf{I}}
\newcommand{\Qb}{\mathbf{Q}}
\newcommand{\Tb}{\mathbf{T}}
\newcommand{\Vb}{\mathbf{V}}
\newcommand{\eps}{\varepsilon}
\DeclareMathOperator*{\argmin}{arg\,min}

\title{Blockwise Stabilized Adaptive Cubic Regularization with
Subsolvers via Recurrence}
\author{Rodion Podorozhny\\Texas State University\\\texttt{rp31@txstate.edu}}
\date{}

\begin{document}
\maketitle

\begin{abstract}
Cubic regularized Newton methods have the optimal $\mathcal{O}(\eps^{-3/2})$
global rate and automatic saddle escape, but a dense subproblem solve limits
the feasible block size. Variants that scale replace the true block
curvature with a diagonal, low-rank, Kronecker-factored, or sketched
surrogate and give up the exact cubic step. We introduce a blockwise
optimizer that minimizes an independent cubic model per parameter tensor
over the true block Hessian, under a per-block adaptive cubic constant and a
monotone guard on the full loss. Arbitrarily large tensors are handled
matrix-free in a Lanczos-built Krylov subspace, where we prove the step
minimizes the cubic model. The theory also supplies the
$\mathcal{O}(\eps^{-3/2})$ iteration complexity bound, a second-order
guarantee, and monotone per-block descent. Four variants of this outer
scheme are evaluated against the original adaptive regularization with
cubics (ARC), further cubic Newton baselines, Adam, SOAP, and L-BFGS. On a
91.4M-parameter implicit neural representation, they are the only cubic
Newton methods evaluated whose steps stay exact on every block, including
an 88.5M-parameter tensor holding 97\% of the model. Run to full
convergence on FINER 2D image fitting, ARC-$\varphi_1$ reaches
\textbf{133.5\,dB} peak signal-to-noise ratio, while tuned Adam plateaus at
\textbf{78.2\,dB} after about 70 minutes. In that time ARC-$\varphi_1$
reaches \textbf{95.6\,dB}.
\end{abstract}

\section{Introduction}
\label{sec:intro}

First-order optimizers (Adam, AdamW,
Muon: \citealp{kingma2015adam,loshchilov2019,Jordan2024Muon}) scale updates
by gradient magnitude and therefore exhibit \emph{spectral bias}:
low-frequency components of the target are fit quickly, while
high-frequency components lag by orders of magnitude~\citep{rahaman2019}.
Neural tangent kernel analysis describes the mechanism: under gradient
descent the error along each kernel eigenmode drains at a rate
proportional to the corresponding eigenvalue, so the decaying kernel
spectrum of a coordinate network fixes the order in which frequencies
converge~\citep{jacot2018ntk}. The original goal of this work was a
second-order optimizer that mitigates spectral bias in the training of
implicit neural representations (INRs). The experiments therefore focus
on INRs, though the optimizer variants introduced here have a greater
scope of application.

\paragraph{Overview of spectral bias mitigation approaches.}

The INR literature counteracts spectral bias along two axes. On the
architectural axis, Fourier-feature embeddings~\citep{tancik2020fourier},
wavelet activations (WIRE:~\citealp{saragadam2023wire}),
variable-periodic activations (FINER:~\citealp{liu2024finer}), and
spectrum-matched embedding selection (FreSh:~\citealp{kania2025fresh})
reshape the frequency response of the network itself. On the training
axis, closer to this work, \citet{chng2025precond} accelerate stochastic
INR training with curvature-aware diagonal preconditioners,
\citet{shi2025iga} adjust gradients through a transformation matrix
derived from the empirical neural tangent kernel, and
\citet{ling2025stochprecond} precondition neural field optimization
stochastically, by Gaussian jitter of the query locations. These methods keep first-order or diagonal curvature models. One of the closest approaches to the one introduced here,
AdaCubic~\citep{tsingalis2026adacubic}, a cubic regularized optimizer
for deep learning, has a subproblem solver that uses the diagonal Hutchinson approximation to increase scalability as well. Thus, it loses the cross parameter
curvature. The method of this paper applies the full per-block curvature
through cubic regularized steps. Our benchmarks focus on INR architectures: from 2D image fitting to 3D surface
reconstruction with signed- istance supervision~\citep{gropp2020igr}. A
parallel line of analysis attributes the advantage of Adam over SGD to
coordinate-aligned landscape structure: $\ell_\infty$-smoothness
\citep{xie2024linf}, near-block-diagonal Hessians
\citep{zhang2024transformers,dong2025hessianstruct}, trajectory-local
geometry under Adam's own preconditioned metric
\citep{jiang2023geometry,cohen2022adaptive}, and rotation sensitivity of
the update \citep{zhangmaes2025rotation}. 

On quadratics,
\citet{das2024precond} prove that Adam mitigates the condition number
for (near-)diagonal Hessians and can lose to gradient descent for
sufficiently non-diagonal ones. Section~\ref{sec:fingerprint} shows
where our benchmarks fall on this axis-aligned vs.\ cross-coupled
spectrum and derives a dispatch rule that determines when the blockwise cubic step
justifies its cost.

\paragraph{Overview of some Cubic Newton methods.}

Cubic regularized Newton
methods~\citep{griewank1981modification,nesterov2006cubic,cartis2011a,cartis2011b}
minimize the model
$m(s)=\inner{g}{s}+\tfrac12\inner{\Hb s}{s}+\tfrac{M}{6}\norm{s}^3$ and
achieve the optimal $\mathcal{O}(\eps^{-3/2})$ rate for finding
approximate stationary points of non-convex objectives. Each step is,
at its core, a linear solve: the shifted system $(\Hb+\lambda\Ib)s=-g$
is traditionally handled by a full eigendecomposition or Cholesky
factorization at $\mathcal{O}(n^3)$ time and $\mathcal{O}(n^2)$ storage
\citep{nesterov2006cubic,cartis2011a,doikov2023fo,doikov_thesis}. The
lazy, finite-difference, spectral, polynomial-preconditioned,
gradient-regularized, and tensor refinements of this family
\citep{doikov2023lazy,doikov2023fo,doikov2024spectral,doikov2023poly,mishchenko2023,agafonov2024inexact,chayti2025momentum,chayti2024unified}
are validated on convex models of neural networks with
$n\lesssim 10^3$ variables, and \citet{chayti2025momentum} note
explicitly that a full Hessian implementation is ``impractical for high
dimensional problems''. Our faithful re-implementations of the unmodified
variants, together with the cubic regularized subspace Newton SSCN
\citep{zhao2025sscn}, confirm this: in our implementations and on our
hardware they remained impractical beyond roughly 15k--20k parameters. These methods in our evaluations
delivered per-step times of approximately $130$--$600$\,ms at 15k parameters
(Appendix~\ref{sec:feasibility}). The original ARC does
operate at 91.4M scale (our ViSIR INR architecture) because its Lanczos subproblem solve is already
matrix-free. Yet, on the ViSIR benchmark it stalls at $4.0$\,dB
while being the first at the 15k parameter ViSIR replica benchmark. The use of a Krylov subproblem
solver alone, without the blockwise organization, does not result in increased
scalability.

\paragraph{Chebyshev acceleration overview.}

Existing Chebyshev accelerated optimization methods are overwhelmingly
first-order. Every scheme in the survey
of~\citet{daspremont2021acceleration} can be written as a polynomial or
momentum recurrence in the gradients and incorporates no explicit
curvature. 

The Dynamic Spectral Optimizer (DSO)
of~\citet{podorozh2026siam} applies Chebyshev polynomials of the
\emph{second kind} $U_n$ as a near-minimax spectral filter: Hessian
eigenvalues are mapped to a normalized angular domain and a degree-$L$
polynomial preconditioner is applied to the gradient through a three
term recurrence of Hessian--vector products (HVPs). On a 91.4M parameter
ViSIR benchmark, the ablation with this method
reaches $41.75$\,dB versus $22.81$\,dB with the polynomial disabled on
the same loss landscape. Two structural weaknesses remain: the degree
needed for a fixed contraction grows as $L\approx 1.28\sqrt{\kappa}$
with the condition number of the normalized spectrum, and indefinite
intervals ($\lambda<0$ at saddles) must be handled by clamping
heuristics that keep the recurrence from diverging. At the time of this
writing, we are not aware of another second-order Chebyshev accelerated
optimizer for neural networks that implements such a spectral filter
solely through HVPs and scales to the model sizes considered here. The
closest neural network optimization method with Chebyshev acceleration, the spectral preconditioning
of~\citet{doikov2024spectral}, still solves a shifted linear system and that is the reason their approach, as stated in their paper, does not scale. Its refined step size rule uses the cubic regularization
shift $\alpha^\star=\tfrac{L}{2}\norm{x^+-x}$ (gradient regularization
in the sense of~\citealp{mishchenko2023}), which connects spectral
preconditioning to the cubic model.

\paragraph{Approach.}
We use a blockwise adaptive cubic regularization outer scheme with an
exact cubic minimizer per parameter block. The scheme is implemented within a Chebyshev-bounded Krylov subspace on large blocks and from an
explicitly formed lazy Hessian on small blocks. The design has two
layers:
\begin{itemize}[leftmargin=2em,itemsep=1pt]
  \item \textbf{The outer scheme enables the method to operate at larger scale.} The
        parameters are partitioned by tensor. Each block minimizes its
        own cubic model with an independently adapted cubic constant
        $M_b$, and each block step is accepted or rejected against a
        monotone guard on the full loss
        (Proposition~\ref{prop:perblock}). Two stabilization
        mechanisms are critical for feasibility at larger neural
        network scale: per-block accept-or-reject with per-block adaptive
        $M_b$, joint acceptance provably degrades with the number
        of blocks (Remark~\ref{rem:joint}) and stalls at $4.0$\,dB at
        91.4M, and a nearly-zero floor (lower clamp) for $M_b$.
  \item \textbf{The subsolver is an interchangeable component.} From
        cubic regularization, a shift
        $\lambda^\star = \tfrac{M}{2}\norm{s}$: (a) bounds the
        effective condition number of the operator the subsolver must
        handle whenever the shift dominates the negative curvature,
        a \emph{self-adaptive degree schedule}
        (Proposition~\ref{prop:degree}); (b) makes the shifted operator
        positive semidefinite or definite in the non-degenerate case
        \emph{before} the polynomial processes it
        (Lemma~\ref{lem:pd}); and (c) preserves the global
        $\mathcal{O}(\eps^{-3/2})$ rate of the full-space step
        (Theorem~\ref{thm:rate}),
        including convergence to approximate second-order stationary
        points (Corollary~\ref{cor:sosp}). The Hessian computation on parameter blocks of variable size together with the implementation via Chebyshev-bounded, matrix-free Krylov subspace construction (HVPs only, no
        eigendecomposition, no linear solve) improves the scalability of our approach. This design enables it to scale to architectures with millions of parameters while taking into account the cross coupled curvature which is lost if a diagonal Hessian approximation is used, e.g. via Hutchinson approximation as in~\cite{tsingalis2026adacubic}. 
\end{itemize}

\subsection{Contributions}
\label{sec:contributions}

\begin{itemize}[leftmargin=2em,itemsep=2pt]
  \item \textbf{A blockwise outer scheme with per-block adaptive
        regularization.} Each parameter tensor carries its own
        regularization constant $\sigma_b$, its own ratio test, and its
        own acceptance decision against a monotone guard on the full
        loss (Section~\ref{sec:algorithm}, Algorithm~\ref{alg:blockcn},
        Proposition~\ref{prop:perblock}). None of the ARC-based families
        reviewed in Section~\ref{sec:arc-comparison} partitions the
        regularization within a single training problem in this way.
  \item \textbf{Four interchangeable subsolvers.} The four evaluated
        optimizers pair this outer scheme with their own subsolver for
        the per-block step, summarized in Table~\ref{tab:variants}.
        CubicKrylov and \textsc{ARC-Block} share the exact cubic step,
        solved through the tridiagonal secular equation
        (Lemma~\ref{lem:secular}) in a Krylov subspace whose dimension
        is fixed before the solve by a Chebyshev second kind degree
        bound (Proposition~\ref{prop:degree}). ARC-$\varphi_1$ replaces
        the cubic step with the trust-controlled exponential relaxation
        ($\varphi_1$) rule (Proposition~\ref{prop:erlaw}), with a
        clamped saddle escape along negative modes
        (Lemma~\ref{lem:clamp}) and a termination criterion determined via the trust parameter (Section~\ref{sec:phi1krylov}). CubicCheby-DSO
        rebuilds the step through a Chebyshev
        three term recurrence with the termination criterion via a residual norm (Appendix~\ref{sec:coststructure}).
        
        This variant is competitive where solver memory is scarce.
        The evaluation also compares these termination criteria against the runtime TC.s criterion of the ARC baselines~\citep{cartis2011a}
        (Table~\ref{tab:nanobudget}).
        
  \item \textbf{Proofs.} Lemma~\ref{lem:krylov} (exact subspace
        reduction) shows that the constructed step minimizes the cubic
        model over the true block Hessian within the Krylov subspace.
        The surrounding theory supplies the model decrease bound
        (Lemma~\ref{lem:decrease}), the gradient bound after a step
        (Lemma~\ref{lem:newgrad}), the iteration complexity bound
        (Theorem~\ref{thm:rate}) with its second order guarantee
        (Corollary~\ref{cor:sosp}), and monotone per-block descent
        (Proposition~\ref{prop:perblock}).
  \item \textbf{Demonstrated scalability with exact steps on true
        curvature.} The full 91.4M parameter ViSIR network is trained
        with the exact block Hessian accessed through HVPs, including
        the single 88.5M parameter decoder weight tensor holding 97\% of
        the model (Appendix~\ref{sec:feasibility},
        Section~\ref{sec:experiments}). \textbf{Among the cubic Newton methods
        in this evaluation, \textsc{ARC-Block} and the per-block ARC
        control are the only ones whose steps remain exact in the cubic
        model sense on every block at this scale
        (Lemma~\ref{lem:krylov}, Table~\ref{tab:arccomp})}.
  \item \textbf{Empirical results and a dispatch rule.} Against
        convergence matched first-order controls, the blockwise cubic
        step improves terminal accuracy by one to five orders of
        magnitude on small hard landscape controls
        (Section~\ref{sec:small}). On FINER, the CubicKrylov subsolver
        reaches 124.7\,dB against 65.2\,dB for converged tuned Adam, and
        the $\varphi_1$ subsolver, run to full convergence, reaches
        133.5\,dB against Adam's 78.2\,dB extended budget ceiling
        (Section~\ref{sec:finer}). The landscape fingerprint of
        Section~\ref{sec:fingerprint} (Table~\ref{tab:fingerprint})
        turns these observations into a dispatch rule, measurable before
        or early in training, that separates axis-aligned
        ill-conditioning, where tuned first-order methods are
        sufficient, from cross-coupled or saddle dominated landscapes,
        where the blockwise cubic step justifies its computational cost.
\end{itemize}

\begin{table}[t]
\centering
\footnotesize
\caption{The four evaluated optimizer variants. All share the blockwise
outer scheme of Algorithm~\ref{alg:blockcn}. ``Dimension control''
states how the Krylov subspace dimension (equivalently, the Lanczos or
recurrence degree) is set for large blocks.}
\label{tab:variants}
\begin{tabular}{@{}llll@{}}
\toprule
Variant & Per-block step & Dimension control & Subsolver memory \\
\midrule
CubicKrylov & exact cubic step & Chebyshev degree bound & stored Lanczos basis \\
\textsc{ARC-Block} & exact cubic step, adaptive $\sigma_b$ & Chebyshev degree bound & stored Lanczos basis \\
ARC-$\varphi_1$ & trust-controlled $\varphi_1$ rule & budget from trust parameter & stored Lanczos basis \\
CubicCheby-DSO & Chebyshev recurrence step & degree bound, residual norm exit & three block sized vectors \\
\bottomrule
\end{tabular}
\end{table}

\paragraph{Organization.} Section~\ref{sec:background} fixes the
notation and the problem setting. Section~\ref{sec:theory} constructs
the per-block cubic step and its Krylov subsolver together with the
supporting theory, Section~\ref{sec:algorithm} assembles these
components into the \textsc{ARC-Block} outer scheme, and
Section~\ref{sec:phi1} introduces the $\varphi_1$ step subsolver.
Section~\ref{sec:experiments} reports the experimental results, and
Section~\ref{sec:arc-comparison} compares the methods of this paper with
other ARC-based approaches before the limitations and the conclusion.
The feasibility study at the 91.4M parameter scale, the cost structure
of the subsolver, the convergence matched study on further INR architectures and on signed distance tasks, and the
extended ARC comparison appear in
Appendices~\ref{sec:feasibility}, \ref{sec:coststructure},
\ref{sec:inrstudy}, and~\ref{app:arcdetail}.

\section{Background}
\label{sec:background}

Let us first fix the notation.

\subsection{Problem setting}

We minimize a twice-differentiable, generally non-convex training loss
\begin{equation}
  \min_{\theta\in\R^n} f(\theta), \qquad
  f(\theta)=\frac1N\sum_{i=1}^N \tfrac12\norm{\Phi_\theta(x_i)-y_i}^2 ,
\label{eq:problem}
\end{equation}
where $\Phi_\theta$ are the weights of a neural network. We write
$g=\nabla f(\theta)$, $\Hb=\nabla^2 f(\theta)$ with eigenvalues
$\lambda_1\ge\dots\ge\lambda_n$ and orthonormal eigenvectors
$u_1,\dots,u_n$. Following~\citet{doikov2024spectral} we assume:

\begin{assumption}[Lipschitz--Hessian]
\label{ass:lip}
There exists $L_H\ge0$ such that
$\norm{\nabla^2 f(x)-\nabla^2 f(y)}\le L_H\norm{x-y}$ for all $x,y$.
\end{assumption}

Assumption~\ref{ass:lip} yields the standard cubic upper bound
\begin{equation}
  f(x+s)\;\le\; f(x)+\inner{g}{s}+\tfrac12\inner{\Hb s}{s}
  +\tfrac{L_H}{6}\norm{s}^3 .
\label{eq:cubicbound}
\end{equation}
We also recall the notion of \emph{graded non-convexity}
of~\citet{doikov2024spectral}: $f$ is non-convex of grade $\tau$ if the
top-$\tau$ part of the spectrum is non-negative everywhere,
$\nabla^2_\tau f(x)\succeq 0$, and
$\sigma_\tau:=\sup_x\norm{\nabla^2 f(x)-\nabla^2_\tau f(x)}
\ge\max\{\lambda_{\tau+1}(x),\,-\lambda_n(x)\}$ bounds both the spectral
tail and the negative curvature. Deep networks with convex losses are
graded non-convex with $\tau$ at least equal to the dimension of the output layer. In
practice $\tau$ is far larger (Fig.~2 in \citealp{doikov2024spectral}). We use this class in
Remark~\ref{rem:graded} to bound the cubic shift required for
positive definiteness.

\subsection{Chebyshev second kind preconditioning}
\label{sec:dso}

Chebyshev polynomials of the second kind are defined by
$U_n(\cos\zeta)=\sin((n{+}1)\zeta)/\sin\zeta$ with the recurrence
$U_0=1$, $U_1(x)=2x$, $U_{n+1}(x)=2xU_n(x)-U_{n-1}(x)$. Dynamic Spectral Optimizer (DSO)
\citet{podorozh2026siam} implements this Chebyshev second kind preconditioning. It normalizes the Hessian, $G=\Hb/\rho$ with
$\rho=\norm{\Hb}_2$, maps eigenvalues $\lambda\in[0,1]$ of $G$ to angles
via $\cos\zeta = 1-2\lambda$, and builds the \emph{relaxation function}
\begin{equation}
  R_L(\lambda)=\frac{P_L(\lambda)}{L},\qquad
  P_L(\lambda)=\frac{\sin(L\zeta)}{\sin\zeta},
\label{eq:relax}
\end{equation}
a degree-$(L{-}1)$ polynomial with $R_L(0)=1$. The induced
preconditioner
\begin{equation}
  H(\lambda)=\frac{1-R_L(\lambda)}{\lambda}
\label{eq:dsoprec}
\end{equation}
is a polynomial of degree $L-2$ which approximates $1/\lambda$
uniformly: the sinc envelope $|\sin(L\zeta)/(L\sin\zeta)|$ is bounded by
$\approx 0.22$ on the normalized spectrum once $L\ge3$, so each
application of the degree-$L$ relaxation polynomial (one sweep of
$L{-}2$ HVPs) contracts the worst-case residual along \emph{every}
eigendirection to at most $0.22$ of its value ($0.22^{k}$ after $k$
sweeps). 
This is the mechanism that equalizes
contraction rates across frequencies and counteracts spectral bias
(if it is used outside the cubic regularization scheme as in
standalone DSO implementation).
The step $d_L = -H(G)\tilde g$ is computed by the three term recurrence
\begin{equation}
  d_{s+1}=\frac{2s}{s{+}1}\,(\Ib-2G)\,d_s
          -\frac{s{-}1}{s{+}1}\,d_{s-1}
          -\frac{4s}{s{+}1}\,\tilde g,
  \qquad s\ge2,\qquad d_1=0,\quad d_2=-2\tilde g,
\label{eq:dsorec}
\end{equation}
so that $d_s=-H_s(G)\,\tilde g$ for every $s$, where
$H_s(\lambda):=(1-R_s(\lambda))/\lambda$ is Eq.~\eqref{eq:dsoprec} at
degree $s$ (in particular $H_1\equiv0$ and $H_2\equiv2$; starting the
same coefficients one index earlier, from $(0,-2\tilde g)$, would
leave the $U_n/n$ relaxation family and break the $R_L(0)=1$
normalization). The recurrence requires one HVP per degree and
$\mathcal{O}(1)$ auxiliary vectors. Two structural costs of this construction motivate the present
work:
\begin{enumerate}[leftmargin=2em,itemsep=1pt]
  \item \textbf{Degree blow-up.} A uniform per-direction contraction
        over the normalized Hessian eigenvalue spectrum requires degree
        $L\approx 1.28\sqrt{\kappa}$, $\kappa$ is Hessian condition number.
        (Remark~\ref{rem:cheb2bound}).
        DSO implementation bounds $L$ heuristically when $\kappa$ is extreme (e.g. due to inaccuracy of eigenvalues estimation at scale).
  \item \textbf{Indefiniteness.} For $\lambda<0$ the recurrence enters
        hyperbolic growth. This provides automatic saddle escape but
        also requires clamping safeguards whose failure modes are
        documented in the DSO ablations.
\end{enumerate}

\begin{remark}[The Chebyshev second kind degree bound at a fixed
contraction]
\label{rem:cheb2bound}
The constant $1.28$ follows from the estimate of the $R_L(\lambda)$ polynomial decay bound~\citep{podorozh2026} . It also can be derived from the degree bound for Chebyshev iteration on a positive
definite system~\citep{saad2003}, instantiated at a fixed per-sweep
contraction. For a spectrum in $[\lambda_{\min},\lambda_{\max}]$ the
optimal degree-$L$ residual polynomial satisfies
$\|e_L\|\le2\bigl(\tfrac{\sqrt\kappa-1}{\sqrt\kappa+1}\bigr)^{L}\|e_0\|$,
and since $\ln\tfrac{\sqrt\kappa+1}{\sqrt\kappa-1}\ge\tfrac{2}{\sqrt\kappa}$,
the contraction target $\|e_L\|\le\varepsilon\,\|e_0\|$ is met by
\[
  L \;\ge\; \tfrac{1}{2}\,\ln\!\bigl(\tfrac{2}{\varepsilon}\bigr)\sqrt{\kappa}
  \;=\; c(\varepsilon)\,\sqrt{\kappa}.
\]
Setting $c=1.28$ gives $\varepsilon=2e^{-2c}\approx0.15$: the Chebyshev
second-kind degree bound $L\approx1.28\sqrt\kappa$ is exactly this bound
with the worst-case residual contracted to ${\approx}15\%$ per polynomial
application. This is consistent with the ${\approx}0.22$ sinc envelope of
Eq.~\eqref{eq:relax}, which measures the same per-degree contraction in the
second-kind normalization. By Proposition~\ref{prop:degree}, the cubic
shift replaces the $\kappa$ in this bound by the bounded
$\kappa_{\mathrm{eff}}$ of Eq.~\eqref{eq:kappaeff}. So the same bound
yields a degree that remains constant independent of $\kappa$ while the iterate is far
from stationarity and the gradient driven shift dominates the negative
curvature (Eq.~\eqref{eq:shiftdom}), where the large gradient keeps the
shift large and $\kappa_{\mathrm{eff}}$ bounded.

\end{remark}

\subsection{Cubic regularization as a shifted linear solve}
\label{sec:cubic}

The cubic regularized Newton step \citep{nesterov2006cubic} minimizes
\begin{equation}
  m(s)\;=\;\inner{g}{s}+\tfrac12\inner{\Hb s}{s}+\tfrac{M}{6}\norm{s}^3 .
\label{eq:cubicmodel}
\end{equation}
Its global minimizer is characterized (Section~\ref{sec:theory},
Lemma~\ref{lem:secular}) by the stationarity system
\begin{equation}
  \Bigl(\Hb+\tfrac{M r}{2}\,\Ib\Bigr)s=-g,\qquad r=\norm{s},
  \qquad \Hb+\tfrac{Mr}{2}\Ib\succeq 0,
\label{eq:shifted}
\end{equation}
  i.e.\ a \emph{damped Newton solve} whose damping $\lambda^\star = Mr/2$ is
fixed by a one-dimensional secular equation.
In Eq.~(48) in \citet{doikov2024spectral}, precisely this quantity,
$\alpha^\star=\tfrac{L}{2}\norm{x^+-x}$, appears as the optimal
regularization parameter of spectral preconditioning with the
negative-spectrum-cutting rule: the two methods share the same shift.
The commonly used solver for Eq.~\eqref{eq:shifted} diagonalizes $\Hb$
\citep{nesterov2006cubic,doikov2023fo}. We replace it, on large blocks,
with a Krylov polynomial.

The closest prior method is the Krylov cubic regularized Newton method
of \citet{jiang2024}, which minimizes the cubic model over a Krylov
subspace of the \emph{whole} parameter space and proves a
dimension-free $\mathcal{O}(1/(mk)+1/k^2)$ rate ($m$ the subspace
dimension) for \emph{convex} objectives.
The method of this paper differs in three respects. First, the setting
is non-convex: the guarantees of Section~\ref{sec:theory} are the
$\mathcal{O}(\eps^{-3/2})$ rate to approximate second-order stationary
points, with the inexactness of the subspace solve absorbed by the
ARC-style analysis.

Second, the subspace is built and regularized
\emph{per parameter block}, with an independently adapted cubic
constant $M_b$ for every tensor. 

The experiments of Appendix~\ref{sec:feasibility} show that at 91.4M parameters 
it is this blockwise granularity, not the Krylov based approach by itself, 
that keeps the method effective. The original single subspace ARC implementation 
(global ARC with a Lanczos subproblem solver, structurally similar to 
the optimizer described in \citealp{jiang2024}) stalls at 4.0 dB on 
the landscape of 91.4M ViSIR architecture.

Third, the polynomial
degree is adaptive. The Chebyshev analysis of
Proposition~\ref{prop:degree} ties the required degree to the cubic
shift, yielding the self-adaptive schedule.

\section{The Per-Block Cubic Step and its Krylov Subsolver}
\label{sec:theory}

This section develops the theory of the per-block step with full proofs.
The results of
Sections~\ref{sec:seculartheory}--\ref{sec:complexitytheory} concern
the cubic step on a single block (or, identically, on the whole
parameter vector). They are stated for a generic symmetric $\Hb$ and
hold regardless of which subsolver implements the inner step.
Section~\ref{sec:blocktheory} develops the blockwise decomposition and
the acceptance analysis that makes this subproblem approach computationally feasible for
large scale models (so far, shown on up to $91.4$M parameter architectures).
Throughout, $\Hb=\Hb^\top$ with eigenvalues
$\lambda_1\ge\dots\ge\lambda_n$ (possibly negative), $g\ne0$, and
$M>0$.

\subsection{The secular equation and the spectral shift}
\label{sec:seculartheory}

The following lemma and its proof are taken from Section~5
in~\citet{nesterov2006cubic} (see also \citealp{cartis2011a}). We restate
them in our notation for the shifted system.

\begin{lemma}[Secular equation]
\label{lem:secular}
Define, for $\lambda>\max\{0,-\lambda_n\}$,
\[
  s(\lambda):=-(\Hb+\lambda\Ib)^{-1}g, \qquad
  \varphi(\lambda):=\norm{s(\lambda)}
  =\Bigl(\sum_{i=1}^n \frac{c_i^2}{(\lambda_i+\lambda)^2}\Bigr)^{1/2},
  \quad c_i:=\inner{u_i}{g}.
\]
Then $\varphi$ is continuous, strictly decreasing, and
$\varphi(\lambda)\to0$ as $\lambda\to\infty$. If
$c_n\neq0$ or $\lambda_n\ge0$, the equation
\begin{equation}
  \lambda=\tfrac{M}{2}\,\varphi(\lambda)
\label{eq:secular}
\end{equation}
has a unique solution $\lambda^\star>\max\{0,-\lambda_n\}$, and
$s^\star=s(\lambda^\star)$ with $r^\star=\varphi(\lambda^\star)$ is the
global minimizer of the cubic model Eq.~\eqref{eq:cubicmodel}.
\end{lemma}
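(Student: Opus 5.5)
We work throughout in the eigenbasis $u_1,\dots,u_n$ of $\Hb$, where everything becomes scalar.

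\textbf{Properties of $\varphi$.} For $\lambda>\max\{0,-\lambda_n\}$ every $\lambda_i+\lambda>0$, so each term $c_i^2/(\lambda_i+\lambda)^2$ is continuous, nonincreasing, and tends to $0$. Since $g\neq0$, some $c_i\neq0$, and the corresponding term is strictly decreasing. Hence $\varphi$ is continuous, strictly decreasing, and $\varphi(\lambda)\to0$ as $\lambda\to\infty$.

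\textbf{Existence and uniqueness of the root.} Set $\lambda_0:=\max\{0,-\lambda_n\}$ and $\psi(\lambda):=\lambda-\tfrac M2\varphi(\lambda)$. Then $\psi$ is continuous and strictly increasing, since $\lambda$ increases and $\varphi$ decreases. As $\lambda\to\infty$ we have $\psi(\lambda)\to+\infty$, so uniqueness is automatic; we only need $\psi<0$ near $\lambda_0^+$. There are two cases.
\begin{itemize}
  \item If $\lambda_n<0$ and $c_n\ne0$, then $\lambda_0=-\lambda_n$ and the $n$-th term blows up, so $\varphi\to\infty$ and $\psi\to-\infty$.
  \item If $\lambda_n\ge0$, then $\lambda_0=0$ and $\psi(0^+)=-\tfrac M2\varphi(0^+)$. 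This limit is negative: it is $-\infty$ if some $\lambda_i+0=0$ with $c_i\ne0$, and otherwise finite but strictly negative because $g\ne0$ gives $\varphi(0^+)>0$.
\end{itemize}
The intermediate value theorem then gives a unique $\lambda^\star>\lambda_0$.

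\textbf{Global optimality.} Let $s^\star=s(\lambda^\star)$ and $r^\star=\norm{s^\star}=2\lambda^\star/M$. Then $(\Hb+\lambda^\star\Ib)s^\star=-g$ with $\Hb+\lambda^\star\Ib\succ0$. For arbitrary $s$, write $g=-(\Hb+\lambda^\star\Ib)s^\star$ and expand $m(s)-m(s^\star)$. A direct computation gives
\[
m(s)-m(s^\star)=\tfrac12\inner{(\Hb+\lambda^\star\Ib)(s-s^\star)}{s-s^\star}
+\tfrac{M}{6}\norm{s}^3-\tfrac{M}{6}(r^\star)^3-\tfrac{\lambda^\star}{2}\bigl(\norm{s}^2-(r^\star)^2\bigr).
\]
The first term is nonnegative. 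With $t=\norm{s}$, $r=r^\star$, and $\lambda^\star=Mr/2$, the remaining terms equal
\[
\tfrac M{12}\bigl(2t^3-3rt^2+r^3\bigr)=\tfrac M{12}(t-r)^2(2t+r)\ge0,
\]
as in Nesterov--Polyak's Theorem~3. This proves $s^\star$ is a global minimizer. Uniqueness of the minimizer is not claimed.

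\textbf{Main obstacle.} The delicate step is the $\lambda_n\ge0$ boundary case. There one must check that $\varphi(0^+)>0$, which holds because $g\ne0$, so that $\psi$ genuinely changes sign. The cubic-identity algebra is routine.
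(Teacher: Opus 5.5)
Your proof is correct. For the first two parts (continuity, strict monotonicity and limit of $\varphi$, then existence and uniqueness of $\lambda^\star$ by the intermediate value theorem applied to a strictly monotone $\psi$) it is the paper's argument with the sign of $\psi$ flipped. You are slightly more careful in the $\lambda_n\ge0$ case: there $\varphi(0^+)=+\infty$ is possible when $\lambda_n=0$ and $c_n\neq0$, a subcase that the paper's ``$\varphi(0^+)>0$'' covers only implicitly.

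The real difference is the global-optimality step. The paper simply invokes the Nesterov--Polyak characterization: any stationary $s$ with $\Hb+\tfrac M2\norm{s}\Ib\succeq0$ is a global minimizer. You instead verify it directly through the exact identity $m(s)-m(s^\star)=\tfrac12\inner{(\Hb+\lambda^\star\Ib)(s-s^\star)}{s-s^\star}+\tfrac{M}{12}(t-r^\star)^2(2t+r^\star)$ with $t=\norm{s}$. This identity is the standard proof of that characterization, and the algebra checks out. So your version is self-contained at the cost of a few lines, while the paper's is shorter but leans on the external result.

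One small correction: your closing disclaimer that uniqueness of the minimizer is not claimed is unnecessary. Since $\lambda^\star>\max\{0,-\lambda_n\}$, the operator $\Hb+\lambda^\star\Ib$ is positive definite. The first term of your identity is therefore strictly positive for $s\neq s^\star$, and the identity already shows that $s^\star$ is the unique global minimizer. This is what the statement's ``the global minimizer'' and the paper's remark about the strict inequality implicitly assert.
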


\begin{proof}
Each term $c_i^2/(\lambda_i+\lambda)^2$ with $c_i\ne0$ is positive and
strictly decreasing in $\lambda$ on the admissible domain, and at
least one such term exists since $g\ne0$. Hence $\varphi$ is strictly
decreasing, and continuity and the limit are immediate. Consider
$\psi(\lambda):=\tfrac{M}{2}\varphi(\lambda)-\lambda$, which is strictly
decreasing with $\psi(\lambda)\to-\infty$. If $\lambda_n<0$ and
$c_n\ne0$ then $\varphi(\lambda)\to\infty$ as
$\lambda\downarrow-\lambda_n$, so $\psi\to+\infty$ and a unique root
exists by the intermediate value theorem. If $\lambda_n\ge0$, then at
$\lambda\downarrow0$ either $\varphi(0^+)>0$ (so $\psi(0^+)>0$, giving a
root) or $g=0$, excluded by assumption. Global optimality of the
resulting pair $(s^\star,\lambda^\star)$ for Eq.~\eqref{eq:cubicmodel} is
the characterization in Section~5 in~\citet{nesterov2006cubic}:
any $s$ with $g+\Hb s+\tfrac{M}{2}\norm{s}s=0$ and
$\Hb+\tfrac{M}{2}\norm{s}\Ib\succeq0$ globally minimizes $m$. Our
$\lambda^\star>-\lambda_n$ ensures the second condition with strict
inequality.
\end{proof}

\begin{lemma}[Positive definiteness before the polynomial]
\label{lem:pd}
The solution of Eq.~\eqref{eq:secular} satisfies
\begin{equation}
  r^\star \;\ge\; \frac{2\max\{0,-\lambda_n\}}{M},
\label{eq:rlo}
\end{equation}
and consequently, the shifted operator
$\Hb+\tfrac{Mr^\star}{2}\Ib$ is positive semidefinite. It is positive
definite whenever $\lambda^\star>-\lambda_n$, which holds in the
non-degenerate case of Lemma~\ref{lem:secular}.
\end{lemma}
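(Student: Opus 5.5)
The plan is to read everything off the secular equation \eqref{eq:secular} of Lemma~\ref{lem:secular} together with its admissible domain, so that no new spectral estimate is needed. The solution satisfies $\lambda^\star=\tfrac{M}{2}\varphi(\lambda^\star)=\tfrac{M}{2}r^\star$, and it lies in the open domain $\lambda^\star>\max\{0,-\lambda_n\}$. Rearranging gives
\[
  r^\star=\frac{2\lambda^\star}{M} > \frac{2\max\{0,-\lambda_n\}}{M},
\]
which is \eqref{eq:rlo}, with strict inequality in the non-degenerate case. To cover the degenerate (hard) case $c_n=0$, $\lambda_n<0$, where the minimizer may sit at $\lambda^\star=-\lambda_n$, I would instead invoke the global-optimality characterization cited in the proof of Lemma~\ref{lem:secular}. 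That characterization states that any global minimizer satisfies $\Hb+\tfrac{M}{2}\norm{s}\Ib\succeq0$. Evaluating this on $u_n$ gives $\lambda_n+\tfrac{M r^\star}{2}\ge0$, i.e.\ $r^\star\ge -2\lambda_n/M$. Combined with $r^\star\ge0$, this yields \eqref{eq:rlo} with non-strict inequality in every case.

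For the operator statement, I would diagonalize in the eigenbasis $u_1,\dots,u_n$. The matrix $\Hb+\tfrac{Mr^\star}{2}\Ib$ has eigenvalues $\lambda_i+\lambda^\star$, and the smallest of these is $\lambda_n+\lambda^\star$. By \eqref{eq:rlo}, $\lambda^\star=Mr^\star/2\ge-\lambda_n$, so every eigenvalue is nonnegative and the operator is positive semidefinite. If $\lambda^\star>-\lambda_n$, every eigenvalue is at least $\lambda_n+\lambda^\star>0$, which gives positive definiteness. Lemma~\ref{lem:secular} produces exactly such a $\lambda^\star$ whenever $c_n\ne0$ or $\lambda_n\ge0$. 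In the subcase $\lambda_n\ge0$, strictness holds because $\lambda^\star>0\ge-\lambda_n$.

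The only delicate step is the degenerate case. There, Lemma~\ref{lem:secular} by itself provides no root in the open domain, so the semidefinite claim must rest on the Nesterov--Polyak characterization of global minimizers rather than on the secular equation. I would state explicitly that $r^\star$ refers to the norm of the global minimizer in that case. I would also note that only semidefiniteness is claimed there, and that positive definiteness can genuinely fail when $\lambda^\star=-\lambda_n$.
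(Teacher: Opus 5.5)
Your proposal is correct, and for the case the paper actually treats it coincides with the paper's proof. Both arguments read $r^\star=2\lambda^\star/M$ off the open admissible domain $\lambda^\star>\max\{0,-\lambda_n\}$ of Lemma~\ref{lem:secular}, and both then note that the smallest eigenvalue of $\Hb+\tfrac{Mr^\star}{2}\Ib$ is $\lambda_n+\lambda^\star$.

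Because $\varphi$ is defined only on that open domain, the paper's argument in fact always gives strict positivity. It never reaches the case $\lambda^\star=-\lambda_n$ that the statement's semidefinite clause alludes to. Your supplementary hard-case argument is what gives that clause real content: you reinterpret $r^\star$ as the norm of a global minimizer and test $\Hb+\tfrac{M}{2}\norm{s}\Ib\succeq0$ on $u_n$.

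One correction of attribution. The paper quotes only the \emph{sufficient} direction of the Nesterov--Polyak characterization: stationarity plus semidefiniteness implies global optimality. You need the \emph{necessary} direction: every global minimizer, which exists by coercivity of $m$, satisfies the second-order condition. That direction is also standard (Nesterov--Polyak; Cartis--Gould--Toint), so your argument stands. You should cite it as that separate fact rather than as ``the characterization cited'' in the proof of Lemma~\ref{lem:secular}.
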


\begin{proof}
By Lemma~\ref{lem:secular}, $\lambda^\star=\tfrac{M}{2}r^\star$ lies in
the admissible domain $\lambda^\star>\max\{0,-\lambda_n\}$, which is
Eq.~\eqref{eq:rlo} after rearranging. The smallest eigenvalue of the
shifted operator is $\lambda_n+\lambda^\star> 0$.
\end{proof}

Lemma~\ref{lem:pd} is the formal statement of the design claim that
\emph{negative curvature is handled before the Chebyshev polynomial
operates on it}: every linear system the polynomial solver faces is positive
definite by construction, removing the need for the indefinite interval clamping
that Chebyshev second kind preconditioning (for $\lambda < 0$) requires (Section~\ref{sec:dso}, item 2)
due to computational noise at large scale. We must note that for lower
scale optimization problems, the Chebyshev second kind polynomial
preconditioning in a standalone implementation is very effective at
escaping saddles.

\begin{remark}[Graded non-convexity bounds the curvature part of the shift]
\label{rem:graded}
If $f\in\mathcal{F}_\tau$ is non-convex of grade $\tau$ in the sense
of~\citet{doikov2024spectral}, then $-\lambda_n(x)\le\sigma_\tau$
uniformly, so the part of the shift spent on enforcing positive
definiteness is at most $\sigma_\tau$, and the lower bound
Eq.~\eqref{eq:rlo} costs at most $r\ge 2\sigma_\tau/M$. On problem classes
with small spectral tails (large $\tau$-grade), the size of the cubic
shift is therefore set by the gradient driven term
$\Theta(\sqrt{M\norm{g}})$ of Proposition~\ref{prop:degree} (stated
below), and not by the positive-definiteness requirement
$\lambda^\star>-\lambda_n$, mirroring how
$\sigma_\tau^+$ replaces $\sigma_\tau$ in Theorem~5.1
in~\citet{doikov2024spectral}.
\end{remark}

\subsection{The shift bounds the Chebyshev degree that sets the Krylov subspace dimension}

The next result makes the self-adaptive degree schedule precise: the
shift bounds the effective condition number of the shifted operator
$\Hb+\lambda^\star\Ib$, and with it the Chebyshev degree that sets the
Lanczos budget.

\begin{proposition}[Degree bound]
\label{prop:degree}
Let $\lambda^\star=\tfrac{M}{2}r^\star$ solve Eq.~\eqref{eq:secular}. Then
\begin{equation}
  \lambda^\star \;\ge\; \lambda_{\mathrm{lo}}
  := \frac{\sqrt{\lambda_1^2+2M\norm{g}}-\lambda_1}{2},
\label{eq:shiftlo}
\end{equation}
and, whenever the gradient driven shift also dominates the negative
curvature, in the sense that
\begin{equation}
  \lambda_{\mathrm{lo}} \;\ge\; 2\max\{0,-\lambda_n\},
\label{eq:shiftdom}
\end{equation}
the effective condition number of the shifted operator obeys
\begin{equation}
  \kappa_{\mathrm{eff}}
  := \frac{\lambda_1+\lambda^\star}{\lambda_n+\lambda^\star}
  \;\le\;
  \frac{2\bigl(\lambda_1+\lambda_{\mathrm{lo}}\bigr)}
       {\max\{\lambda_n,0\}+\lambda_{\mathrm{lo}}}
  \qquad(\text{the factor $2$ is not needed when }\lambda_n\ge0).
\label{eq:kappaeff}
\end{equation}
In particular, a Chebyshev (or Lanczos) solver needs a degree
$L=\mathcal{O}\bigl(\sqrt{\kappa_{\mathrm{eff}}}\,
\ln\tfrac1{\eps_{\mathrm{rel}}}\bigr)$ for a relative-accuracy
$\eps_{\mathrm{rel}}$ solve, and:
\begin{enumerate}[leftmargin=2em,itemsep=1pt]
\item (far from stationarity) if $2M\norm{g}\ge\lambda_1^2$ and
      Eq.~\eqref{eq:shiftdom} holds, for graded non-convex $f$ the
      latter is implied by
      $\sqrt{M\norm{g}/2}\ge\tfrac{2\sigma_\tau}{\sqrt2-1}$, since
      $-\lambda_n\le\sigma_\tau$ (Remark~\ref{rem:graded}), then
      $\lambda_{\mathrm{lo}}\ge
      (\sqrt2-1)\,\sqrt{M\norm{g}/2}$ and
      $\kappa_{\mathrm{eff}}
      \le 2+2\lambda_1\big/\lambda_{\mathrm{lo}}
      = 2+\mathcal{O}\bigl(\lambda_1/\sqrt{M\norm{g}}\bigr)$,
      so the required degree is $\mathcal{O}(1)$ independent of
      $\kappa=\lambda_1/\lambda_n$;
\item (near stationarity) if $\lambda_n>0$ (a neighborhood of a
      strict second-order stationary point), then as $\norm{g}\to0$
      the shift vanishes and
      $\kappa_{\mathrm{eff}}\to\lambda_1/\lambda_n$, recovering the
      undamped Chebyshev--Newton case (the pure Newton step computed by
      a Chebyshev inner solve) with the Chebyshev polynomial decay bound
      $L\approx1.28\sqrt{\kappa}$ and its fast local convergence rate
      (residual contraction ${\le}0.22$ per application of the
      degree-$L$ polynomial, Remark~\ref{rem:cheb2bound}).
\end{enumerate}
\end{proposition}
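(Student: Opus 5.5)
The plan is to derive everything from the stationarity system Eq.~\eqref{eq:shifted} as characterized in Lemma~\ref{lem:secular}, using only elementary spectral bounds on $s^\star=-(\Hb+\lambda^\star\Ib)^{-1}g$. For the lower bound Eq.~\eqref{eq:shiftlo}, I will use that $\lambda^\star>\max\{0,-\lambda_n\}$ (Lemma~\ref{lem:secular}). Hence every eigenvalue $\lambda_i+\lambda^\star$ of the shifted operator lies in $(0,\lambda_1+\lambda^\star]$. This holds even if $\lambda_1<0$, since $\lambda_1\ge\lambda_n>-\lambda^\star$. It follows that $r^\star=\varphi(\lambda^\star)\ge\norm{g}/(\lambda_1+\lambda^\star)$. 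Substituting $r^\star=2\lambda^\star/M$ gives the quadratic inequality $(\lambda^\star)^2+\lambda_1\lambda^\star-\tfrac{M}{2}\norm{g}\ge0$. Since $\lambda^\star>0$, $\lambda^\star$ must exceed the positive root, and that root is exactly $\lambda_{\mathrm{lo}}$.

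For Eq.~\eqref{eq:kappaeff}, I will observe that $t\mapsto(\lambda_1+t)/(\lambda_n+t)$ is nonincreasing on $t>-\lambda_n$ because $\lambda_1\ge\lambda_n$. Combined with $\lambda^\star\ge\lambda_{\mathrm{lo}}$, this gives $\kappa_{\mathrm{eff}}\le(\lambda_1+\lambda_{\mathrm{lo}})/(\lambda_n+\lambda_{\mathrm{lo}})$, provided the denominator is positive. I then split into two cases:
\begin{itemize}[leftmargin=2em,itemsep=1pt]
\item If $\lambda_n\ge0$, this is already the claimed bound without the factor $2$.
\item If $\lambda_n<0$, Eq.~\eqref{eq:shiftdom} gives $\lambda_n\ge-\lambda_{\mathrm{lo}}/2$. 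Hence $\lambda_n+\lambda_{\mathrm{lo}}\ge\lambda_{\mathrm{lo}}/2=(\max\{\lambda_n,0\}+\lambda_{\mathrm{lo}})/2$, which produces the factor $2$.
\end{itemize}
The degree statement $L=\mathcal{O}(\sqrt{\kappa_{\mathrm{eff}}}\ln(1/\eps_{\mathrm{rel}}))$ is then the standard Chebyshev/Lanczos (CG) bound for a positive definite system, quoted from Remark~\ref{rem:cheb2bound} with $\kappa$ replaced by $\kappa_{\mathrm{eff}}$. Positive definiteness is supplied by Lemma~\ref{lem:pd}.

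For item 1, I will rationalize the root as $\lambda_{\mathrm{lo}}=M\norm{g}/(\sqrt{\lambda_1^2+2M\norm{g}}+\lambda_1)$. Under $\lambda_1^2\le 2M\norm{g}$ the denominator is at most $(\sqrt2+1)\sqrt{2M\norm{g}}$. This gives $\lambda_{\mathrm{lo}}\ge(\sqrt2-1)\sqrt{M\norm{g}/2}$, in fact with room to spare; the case $\lambda_1<0$ only helps. For graded non-convex $f$, the hypothesis $\sqrt{M\norm{g}/2}\ge 2\sigma_\tau/(\sqrt2-1)$ then yields $\lambda_{\mathrm{lo}}\ge2\sigma_\tau\ge2\max\{0,-\lambda_n\}$ via Remark~\ref{rem:graded}, which is Eq.~\eqref{eq:shiftdom}. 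Plugging into Eq.~\eqref{eq:kappaeff} with $\max\{\lambda_n,0\}\ge0$ gives $\kappa_{\mathrm{eff}}\le2+2\lambda_1/\lambda_{\mathrm{lo}}=2+\mathcal{O}(\lambda_1/\sqrt{M\norm{g}})$. This quantity does not involve $\lambda_n$, so the resulting degree is $\mathcal{O}(1)$ uniformly in $\kappa$.

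For item 2, with $\lambda_n>0$, the reverse spectral bound gives $r^\star\le\norm{g}/(\lambda_n+\lambda^\star)\le\norm{g}/\lambda_n$. Hence $\lambda^\star\le M\norm{g}/(2\lambda_n)\to0$, and continuity of the ratio at $t=0$ gives $\kappa_{\mathrm{eff}}\to\lambda_1/\lambda_n$. The Chebyshev decay statement is then exactly Remark~\ref{rem:cheb2bound}.

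I expect the main obstacle to be sign bookkeeping rather than any deep step. Specifically, three facts must hold simultaneously:
\begin{itemize}[leftmargin=2em,itemsep=1pt]
\item $\lambda_1+\lambda^\star>0$, so that the spectral bound and the quadratic manipulation are legitimate;
\item the monotonicity direction of the ratio;
\item positivity of $\lambda_n+\lambda_{\mathrm{lo}}$, which is precisely what Eq.~\eqref{eq:shiftdom} buys.
\end{itemize}
I will check these first, treating the degenerate case $c_n=0$ with $\lambda_n<0$ as excluded, as in Lemma~\ref{lem:secular}.
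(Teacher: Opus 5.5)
Your proposal is correct and follows the paper's proof essentially step for step: the bound $\norm{g}\le(\lambda_1+\lambda^\star)r^\star$ is turned into a quadratic inequality in $\lambda^\star$, monotonicity of $t\mapsto(\lambda_1+t)/(\lambda_n+t)$ is used with the same two-case split via Eq.~\eqref{eq:shiftdom}, and the rationalized form of $\lambda_{\mathrm{lo}}$ gives item~1. The deviations are minor: in item~2 you bound $\lambda^\star\le M\norm{g}/(2\lambda_n)$ where the paper uses $(\lambda^\star)^2\le\tfrac{M}{2}\norm{g}$ (both valid), and for the $\mathcal{O}(1)$ claim in item~1 the fact that does the work is $\lambda_1\le\sqrt{2M\norm{g}}$, which caps $2\lambda_1/\lambda_{\mathrm{lo}}$ by $4/(\sqrt2-1)$, not merely the absence of $\lambda_n$ from the bound.
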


\begin{proof}
From Eq.~\eqref{eq:shifted},
$\norm{g}=\norm{(\Hb+\lambda^\star\Ib)s^\star}
\le(\lambda_1+\lambda^\star)\,r^\star
=(\lambda_1+\lambda^\star)\cdot\tfrac{2\lambda^\star}{M}$.
Rearranging gives the quadratic inequality
$2(\lambda^\star)^2+2\lambda_1\lambda^\star-M\norm{g}\ge0$, whose
positive root yields
$\lambda^\star\ge
\tfrac{-\lambda_1+\sqrt{\lambda_1^2+2M\norm{g}}}{2}
=\lambda_{\mathrm{lo}}$, which is Eq.~\eqref{eq:shiftlo}. This
estimation argument follows Section~5 in \citet{nesterov2006cubic},
where it bounds the step norm. Here we use it to bound the shift. The map
$t\mapsto(\lambda_1+t)/(\lambda_n+t)$ is non-increasing for
$t>-\lambda_n$ (its derivative has the sign of
$\lambda_n-\lambda_1\le0$), and $\lambda^\star\ge\lambda_{\mathrm{lo}}$
with $\lambda^\star>-\lambda_n$ (Lemma~\ref{lem:pd}), so
\[
  \kappa_{\mathrm{eff}}
  \;\le\;\frac{\lambda_1+\lambda_{\mathrm{lo}}}
              {\lambda_n+\lambda_{\mathrm{lo}}}
  \qquad\text{whenever }\lambda_{\mathrm{lo}}>-\lambda_n .
\]
If $\lambda_n\ge0$ the denominator equals
$\max\{\lambda_n,0\}+\lambda_{\mathrm{lo}}$, which is
Eq.~\eqref{eq:kappaeff} without the factor $2$. If $\lambda_n<0$,
Eq.~\eqref{eq:shiftdom} gives $-\lambda_n\le\lambda_{\mathrm{lo}}/2$,
hence $\lambda_n+\lambda_{\mathrm{lo}}\ge\lambda_{\mathrm{lo}}/2
=\tfrac12\bigl(\max\{\lambda_n,0\}+\lambda_{\mathrm{lo}}\bigr)$, which
is Eq.~\eqref{eq:kappaeff}. Some domination condition of this kind is
necessary: with $\lambda_n<0$ fixed and $c_n\to0$, the secular
solution approaches $-\lambda_n$ from above and
$\kappa_{\mathrm{eff}}\to\infty$, the near-hard case, in which
Krylov subsolvers fall back to the shift-independent sublinear case
analyzed by~\citet{carmon2018krylov} and thus the augmentation of
Remark~\ref{rem:hardcase} applies. For
item~1, write $t:=2M\norm{g}$. The assumption $t\ge\lambda_1^2$ gives
\begin{align*}
  \lambda_{\mathrm{lo}}
  &=\frac{\sqrt{\lambda_1^2+t}-\lambda_1}{2}
  =\frac{t}{2\bigl(\sqrt{\lambda_1^2+t}+\lambda_1\bigr)}
  \;\ge\;\frac{t}{2\bigl(\sqrt{2t}+\sqrt{t}\bigr)}
  =\frac{\sqrt{t}}{2(\sqrt2+1)}\\
  &=\frac{\sqrt2-1}{2}\,\sqrt{2M\norm{g}}
  \;\ge\;(\sqrt2-1)\sqrt{M\norm{g}/2},
\end{align*}
so $\lambda_{\mathrm{lo}}=\Theta(\sqrt{M\norm{g}})$. The sufficient
condition quoted in item~1 follows because $-\lambda_n\le\sigma_\tau$
and $\lambda_{\mathrm{lo}}\ge(\sqrt2-1)\sqrt{M\norm{g}/2}\ge2\sigma_\tau$
together imply Eq.~\eqref{eq:shiftdom}. Dropping the
non-negative term $\max\{\lambda_n,0\}$ in Eq.~\eqref{eq:kappaeff} gives
$\kappa_{\mathrm{eff}}\le2+2\lambda_1/\lambda_{\mathrm{lo}}$. The
degree bound for Chebyshev iteration on a positive definite system
follows from the extremal property of Chebyshev polynomials: among
degree-$L$ polynomials with $p(0)=1$, the shifted Chebyshev polynomial
minimizes the maximal residual on $[\lambda_{\min},\lambda_{\max}]$,
giving
$\|e_L\|\le2\bigl(\tfrac{\sqrt{\kappa}-1}{\sqrt{\kappa}+1}\bigr)^{L}\|e_0\|$,
so degree $L\ge\tfrac{\sqrt{\kappa_{\mathrm{eff}}}}{2}\ln\tfrac{2}{\varepsilon}$
reduces the shifted residual by the factor
$\varepsilon$~\citep{saad2003}. Item~2 concerns the local phase near a
strict second-order stationary point, where $\lambda_n>0$: since
$\varphi(\lambda^\star)=\norm{(\Hb+\lambda^\star\Ib)^{-1}g}
\le\norm{g}/\lambda^\star$, the secular equation Eq.~\eqref{eq:secular}
gives $(\lambda^\star)^2\le\tfrac{M}{2}\norm{g}$, so
$\lambda^\star\to0$ as $g\to0$, while
$\lambda_{\mathrm{lo}}\to0$ by Eq.~\eqref{eq:shiftlo}. Hence
$\kappa_{\mathrm{eff}}\to\lambda_1/\lambda_n$. If $\lambda_n<0$
persists as $g\to0$, the shift remains bounded below by $-\lambda_n$
and the method does not reduce to the undamped Chebyshev--Newton case
of item~2. 

This is the intended behavior. The shift then keeps the
shifted operator positive definite in the non-degenerate case.
Lemma~\ref{lem:pd} gives positive semidefiniteness unconditionally.
Strict definiteness can fail only in the hard case of
Lemma~\ref{lem:secular}, where $\lambda_n<0$ and the gradient has no
component along the bottom eigenvector, $c_n=\inner{u_n}{g}=0$, so
that $\lambda^\star$ may equal $-\lambda_n$ and the shifted operator
becomes singular exactly on the eigenspace of $\lambda_n$.

\end{proof}

Proposition~\ref{prop:degree} is the theoretically substantiated 
bound for the Chebyshev polynomial degree instead of the heuristic bound used in DSO 
implementation in case the condition number results in an infeasible degree 
($L\approx 1.28\sqrt{\kappa}$ is too large, which results in prohibitively expensive
computation of the Chebyshev polynomial per a step). Far from optima, the regularizer
itself keeps the shifted operator well conditioned
($\kappa_{\mathrm{eff}}=2+\mathcal{O}(\lambda_1/\sqrt{M\norm{g}})$
once the shift also dominates the negative curvature,
Eq.~\eqref{eq:shiftdom}; item~1), so the required polynomial degree is
a constant independent
of $\kappa$. The full
$\sqrt{\kappa}$ degree is only required asymptotically, where it delivers
the fast local Chebyshev rate of item~2: a residual contraction
by a factor ${\le}0.22$ per application of the degree-$L$ polynomial
($0.22^{k}$ after $k$ applications). The convergence rates of Krylov
subspace solutions of the cubic subproblem obtained by
\citet{carmon2018krylov} also depend on the spectrum of the shifted
matrix. Proposition~\ref{prop:degree} complements their analysis by
bounding the effective condition number through the cubic shift
itself.

\subsection{Krylov subspace solver and its optimality}
\label{sec:krylovtheory}

Rather than running the $U_n$ recurrence Eq.~\eqref{eq:dsorec} on the
shifted operator with fixed coefficients, we solve the cubic subproblem
\emph{optimally} over the same Krylov subspace, following the
Lanczos-based trust-region and cubic subproblem solvers of
\citet{gould1999gltr,cartis2011a}. Run $L{+}1$ steps of the
Lanczos process~\citep{golub2013} on $(\Hb,g)$ with full
reorthogonalization (one HVP per step; the final step supplies the
last diagonal entry of $\Tb_L$), producing
$\Qb_L\in\R^{n\times(L+1)}$ with orthonormal columns spanning
\[
  \mathcal{K}_L(\Hb,g)=
  \mathrm{span}\{g,\Hb g,\dots,\Hb^{L}g\},
  \qquad
  \Qb_L^\top\Hb\,\Qb_L=\Tb_L \ \text{(tridiagonal)},\qquad
  \Qb_L^\top g=\norm{g}\,e_1 .
\]

The degree $L$ is fixed before the solve. The Lanczos build runs its
full budget of $L{+}1$ iterations, and no residual norm test
terminates it early: Proposition~\ref{prop:degree} bounds the
required $L$ in advance, so the Chebyshev analysis supplies a
precomputed setting of the Krylov subspace dimension rather than a
runtime termination criterion. The three term recurrence subsolver makes the
converse choice: its precomputed degree only bounds the sweep, and an
early exit stops the iterations once the residual norm of the shifted
linear solve falls below its threshold (the v2 hybrid of
Appendix~\ref{sec:coststructure}). That residual-stopped recurrence is
itself part of the evaluation: under the equal-budget nano
experimental design of Table~\ref{tab:nanobudget} the v2 hybrid and
its refinements reach $34.20$--$35.46$\,dB against $36.64$\,dB for
the stored-basis CubicKrylov, and Appendix~\ref{sec:coststructure}
finds them competitive where solver memory is scarce. A residual norm test that stops the
Lanczos iteration itself is also evaluated in this paper: the ARC
baselines of Appendix~\ref{sec:feasibility} terminate their Lanczos
subproblem solves through the TC.s criterion of \citet{cartis2011a}.

The following reduction restates, in our notation, the subspace
construction underlying the GLTR method \citep{gould1999gltr} and the
Lanczos subproblem solver of ARC \citep{cartis2011a}.
\citet{zhu2022acrl} bound the gap between the Lanczos-projected and
full-space subproblem solutions through the condition number of the
optimal shifted Hessian.

\begin{lemma}[Exact subspace reduction]
\label{lem:krylov}
For any $y\in\R^{L+1}$ and $s=\Qb_L y$,
\begin{equation}
  m(\Qb_L y)\;=\;\hat m(y):=\norm{g}\,e_1^\top y
  +\tfrac12\,y^\top\Tb_L\,y+\tfrac{M}{6}\norm{y}^3 .
\label{eq:reduced}
\end{equation}
Consequently, if $y^\star=\argmin_y \hat m(y)$ (computable by the
secular solver of Lemma~\ref{lem:secular} on the
$(L{+}1)\times(L{+}1)$ eigendecomposition of $\Tb_L$), then
$s_L:=\Qb_L y^\star$ is the \emph{exact} minimizer of the cubic model
$m$ over the Krylov subspace $\mathcal{K}_L(\Hb,g)$:
\[
  m(s_L)\;=\;\min_{s\in\mathcal{K}_L(\Hb,g)} m(s).
\]
\end{lemma}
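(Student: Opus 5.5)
The plan is to substitute $s=\Qb_L y$ into each of the three terms of the cubic model Eq.~\eqref{eq:cubicmodel} and use the three defining properties of the Lanczos output listed before the statement. These are orthonormality of the columns, $\Qb_L^\top\Qb_L=\Ib_{L+1}$; tridiagonalization, $\Qb_L^\top\Hb\Qb_L=\Tb_L$; and the starting vector, $\Qb_L^\top g=\norm{g}e_1$.

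Term by term, the substitution gives the following.
\begin{itemize}[leftmargin=2em,itemsep=1pt]
\item The linear term becomes $\inner{g}{\Qb_L y}=\inner{\Qb_L^\top g}{y}=\norm{g}\,e_1^\top y$.
\item The quadratic term becomes $\tfrac12\inner{\Hb\Qb_L y}{\Qb_L y}=\tfrac12 y^\top\Qb_L^\top\Hb\Qb_L y=\tfrac12 y^\top\Tb_L y$.
\item The cubic term uses $\norm{\Qb_L y}^2=y^\top\Qb_L^\top\Qb_L y=\norm{y}^2$, so $\tfrac{M}{6}\norm{s}^3=\tfrac{M}{6}\norm{y}^3$.
\end{itemize}
Summing the three gives Eq.~\eqref{eq:reduced} for every $y$. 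No symmetry beyond $\Hb=\Hb^\top$ is needed.

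For the consequence, I use that $y\mapsto\Qb_L y$ is a bijection from $\R^{L+1}$ onto $\mathrm{range}(\Qb_L)=\mathcal{K}_L(\Hb,g)$. It is surjective by the spanning property and injective because the columns are orthonormal. Hence
\[
\min_{s\in\mathcal{K}_L}m(s)=\min_{y\in\R^{L+1}}m(\Qb_L y)=\min_y\hat m(y)=\hat m(y^\star)=m(\Qb_L y^\star)=m(s_L).
\]
Both minima exist, since $\hat m$ is continuous and coercive because of the cubic term.

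That $y^\star$ is computable by Lemma~\ref{lem:secular} follows because $\hat m$ is a cubic model of exactly the form Eq.~\eqref{eq:cubicmodel}. Its data are the symmetric matrix $\Tb_L$ and the gradient $\norm{g}e_1\ne0$. The lemma therefore applies after diagonalizing $\Tb_L$.

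The main obstacle is the degenerate situation in which the Krylov space has dimension below $L{+}1$, i.e.\ Lanczos breaks down early. The statement implicitly assumes $\Qb_L$ has $L{+}1$ orthonormal columns. If breakdown occurs, I would restrict to the first $k<L+1$ columns, which span an invariant subspace equal to $\mathcal{K}_L$, and rerun the same argument with $\Tb_{k-1}$.

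A second caveat is the hard case of Lemma~\ref{lem:secular}, where the first component $\norm{g}$ on the bottom eigenvector of $\Tb_L$ might vanish. Here I would appeal to the general Nesterov--Polyak characterization, which guarantees a global minimizer of $\hat m$ exists even when the secular root is not unique. The equality of minima does not depend on how $y^\star$ is found.
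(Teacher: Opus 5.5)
Your proof is correct and follows the paper's argument essentially verbatim: you substitute term by term using $\Qb_L^\top\Qb_L=\Ib$, $\Qb_L^\top\Hb\Qb_L=\Tb_L$ and $\Qb_L^\top g=\norm{g}e_1$, and then observe that $y\mapsto\Qb_L y$ is a bijection onto $\mathcal{K}_L(\Hb,g)$. Your extra remarks on existence by coercivity and on Lanczos breakdown are sound, and your hard-case caveat is actually vacuous when there is no breakdown: $\Tb_L$ is then an unreduced tridiagonal matrix, so every eigenvector has a nonzero first component and all reduced coefficients $\hat c_i$ are nonzero.
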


\begin{proof}
Using $\Qb_L^\top\Qb_L=\Ib$:
$\inner{g}{\Qb_Ly}=(\Qb_L^\top g)^\top y=\norm{g}e_1^\top y$;
$\inner{\Hb\Qb_Ly}{\Qb_Ly}=y^\top(\Qb_L^\top\Hb\Qb_L)y=y^\top\Tb_Ly$;
and $\norm{\Qb_Ly}=\norm{y}$. Substituting into Eq.~\eqref{eq:cubicmodel}
gives Eq.~\eqref{eq:reduced} exactly (no truncation error appears because
all three terms involve $\Hb$ at most once, and the Lanczos relation
$\Qb_L^\top\Hb\Qb_L=\Tb_L$ is exact regardless of the residual term in
the three term recurrence). Since every $s\in\mathcal{K}_L$ is
$\Qb_Ly$ for a unique $y$, minimizing $\hat m$ over $y$ minimizes $m$
over $\mathcal{K}_L$.
\end{proof}

\begin{corollary}[Cubic Krylov step dominates the fixed-coefficient $U_n$ step]
\label{cor:dominate}
Let $d_L$ be any step produced by a degree-$L$ polynomial in $\Hb$
applied to $g$, in particular the Chebyshev second kind recurrence
Eq.~\eqref{eq:dsorec}.
Then $d_L\in\mathcal{K}_L(\Hb,g)$ and
\[
  m(s_L)\;\le\; m(d_L).
\]
\end{corollary}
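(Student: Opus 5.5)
The plan is to reduce the corollary to Lemma~\ref{lem:krylov}. That lemma already says $s_L$ minimizes $m$ over all of $\mathcal{K}_L(\Hb,g)$. So the only real work is the membership claim $d_L\in\mathcal{K}_L(\Hb,g)$, after which $m(s_L)=\min_{s\in\mathcal{K}_L}m(s)\le m(d_L)$ follows immediately.

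For a generic degree-$L$ polynomial step $d_L=p(\Hb)g$ with $p(t)=\sum_{k=0}^{L}a_kt^k$, membership is direct. Linearity gives $d_L=\sum_k a_k\Hb^kg$, and each $\Hb^kg$ with $k\le L$ is one of the spanning vectors of $\mathcal{K}_L(\Hb,g)=\mathrm{span}\{g,\Hb g,\dots,\Hb^Lg\}$. In the recurrence-based case the polynomial may have lower degree, which only makes membership easier, because the Krylov spaces are nested.

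The step needing care is the specialization to the Chebyshev second kind recurrence Eq.~\eqref{eq:dsorec}, where the operator is the normalized $G=\Hb/\rho$ and the input vector is $\tilde g$ rather than $g$. I would proceed as follows.
\begin{enumerate}[leftmargin=2em,itemsep=1pt]
\item Identify $\tilde g$ as a scalar multiple of $g$, which is its role in DSO. A blockwise diagonal scaling would break this, so I would state the scalar-multiple interpretation as the intended setting.
\item Note that $\mathcal{K}_L(G,\tilde g)=\mathcal{K}_L(\Hb,g)$, since scaling the operator by $1/\rho$ and the vector by a nonzero scalar preserves each span.
\item Argue by induction on $s$ that $d_s$ is a polynomial of degree at most $s-2$ in $G$ applied to $\tilde g$. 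The base cases are $d_1=0$ and $d_2=-2\tilde g$. The recurrence multiplies $d_s$ by $(\Ib-2G)$, which raises the degree by one, and adds multiples of $d_{s-1}$ and $\tilde g$, which do not raise it. This matches the stated fact that $H_s$ has degree $s-2$.
\end{enumerate}
Hence $d_L\in\mathcal{K}_{L-2}\subseteq\mathcal{K}_L$.

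Finally, I would note that $s_L$ is well defined as a minimizer. The reduced model $\hat m$ is coercive because of the cubic term, so $y^\star$ exists even in the hard case, and the inequality needs only existence of the minimizer, not uniqueness.
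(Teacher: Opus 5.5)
Your proposal is correct and follows the paper's own argument: first show $d_L\in\mathcal{K}_L(\Hb,g)$, then invoke the minimality of $s_L$ over that subspace from Lemma~\ref{lem:krylov}. The paper's proof is a one-line remark that the shifted operator generates the same Krylov space, and you fill in what it leaves implicit: the degree count showing $d_L\in\mathcal{K}_{L-2}$, the invariance of the Krylov space under the rescaling $G=\Hb/\rho$ with $\tilde g\propto g$, and the coercivity argument that gives existence of $y^\star$ even in the hard case.
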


\begin{proof}
A degree-$L$ polynomial step is by definition
$q(\Hb)g\in\mathcal{K}_L(\Hb,g)$, and $s_L$ minimizes $m$ over that
set by Lemma~\ref{lem:krylov}. (The shifted operator
$\Hb+\lambda\Ib$ generates the same Krylov space as $\Hb$.)
\end{proof}

Corollary~\ref{cor:dominate} is the reason for adoption of the Lanczos
\emph{representation} while retaining the Chebyshev second kind
\emph{analysis}: both build the identical subspace at one HVP per
basis vector (respectively, per recurrence degree), the $U_n$ view
supplies the degree and conditioning intuition
(Proposition~\ref{prop:degree} and the per-sweep $0.22$ contraction
analysis of Section~\ref{sec:dso}), while the Rayleigh--Ritz solve extracts the
best step that the subspace contains. The stored-basis solve requires
$\mathcal{O}(n_b L)$ memory for the factor $\Qb_L$ where the
fixed-coefficient recurrence needs $\mathcal{O}(1)$ vectors. This cost
trade-off, acceptable for the block sizes used in our experiments but
relevant for whole large scale model subspaces, is analyzed in
Appendix~\ref{sec:coststructure}. Inexact subproblem solutions of this
type retain the full $\mathcal{O}(\eps^{-3/2})$ outer rate under the
ARC relative-accuracy conditions
of \citet{cartis2011a,cartis2011b}. Gradient-based subsolvers were analyzed
by~\citet{carmon2018}.

\begin{remark}[Hard case]
\label{rem:hardcase}
If $g\perp u_n$ and $\lambda_n<0$ (the \emph{hard case} of the
trust-region subproblem: \citealp{cartis2011a}), no
polynomial in $\Hb$ applied to
$g$ contains a component along $u_n$, so the Krylov solution
approaches the hard-case boundary solution rather than the interior
escape direction. The usual remedy is to augment the subspace with
an approximate eigenvector of $\lambda_n$ (one extra power or Lanczos
probe, exactly the $-\lambda_n$ probe already implemented in Chebyshev second-kind recurrence solver~\citep{podorozh2026}). 
This restores the saddle escape mechanism. Our implementation exposes this as an
optional augmentation vector.
\end{remark}

\subsection{One-step progress and global complexity}
\label{sec:complexitytheory}

The next two lemmas and the theorem mirror Lemma~4.1 and Theorem~4.2
of~\citet{doikov2024spectral} and the estimating-sequence proofs
of~\citet{nesterov2006cubic}. We state them for the exact subproblem
minimizer and indicate the inexact extension afterwards.

\begin{lemma}[Model decrease]
\label{lem:decrease}
Let $s^\star$ be the global minimizer of Eq.~\eqref{eq:cubicmodel} with
$r^\star=\norm{s^\star}$. Then
\begin{equation}
  m(s^\star)\;\le\;-\tfrac{M}{12}\,(r^\star)^3 .
\label{eq:modeldec}
\end{equation}
If in addition $M\ge L_H$, then
$f(x+s^\star)\le f(x)-\tfrac{M}{12}(r^\star)^3$.
\end{lemma}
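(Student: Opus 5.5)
The argument uses only the first-order and second-order optimality conditions from Lemma~\ref{lem:secular}, together with the cubic upper bound Eq.~\eqref{eq:cubicbound} for the second claim. The plan is to express $m(s^\star)$ through the quantities fixed by the stationarity system Eq.~\eqref{eq:shifted}, eliminating $g$.

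\emph{Step 1 (eliminate the linear term).} Write $\lambda^\star=\tfrac{M}{2}r^\star$. Stationarity gives $g=-(\Hb+\lambda^\star\Ib)s^\star$. Taking the inner product with $s^\star$ yields
\[
\inner{g}{s^\star}=-\inner{\Hb s^\star}{s^\star}-\tfrac{M}{2}(r^\star)^3 .
\]
Substituting into Eq.~\eqref{eq:cubicmodel} gives
\[
m(s^\star)=-\tfrac12\inner{\Hb s^\star}{s^\star}-\tfrac{M}{2}(r^\star)^3+\tfrac{M}{6}(r^\star)^3
=-\tfrac12\inner{\Hb s^\star}{s^\star}-\tfrac{M}{3}(r^\star)^3 .
\]

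\emph{Step 2 (use the second-order condition).} By Lemma~\ref{lem:pd}, $\Hb+\tfrac{M r^\star}{2}\Ib\succeq0$, so
\[
\inner{\Hb s^\star}{s^\star}\ge-\tfrac{M}{2}(r^\star)^3 .
\]
Hence $-\tfrac12\inner{\Hb s^\star}{s^\star}\le\tfrac{M}{4}(r^\star)^3$, and therefore
\[
m(s^\star)\le\bigl(\tfrac14-\tfrac13\bigr)M(r^\star)^3=-\tfrac{M}{12}(r^\star)^3,
\]
which is Eq.~\eqref{eq:modeldec}.

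The only delicate point is that the semidefiniteness must hold at the \emph{global} minimizer, including the hard case in which $c_n=0$ and Lemma~\ref{lem:secular} need not yield $\lambda^\star>-\lambda_n$. There I would invoke the Nesterov--Polyak characterization directly: every global minimizer satisfies the stationarity system with $\Hb+\tfrac{M}{2}\norm{s^\star}\Ib\succeq0$. Step~2 needs only this semidefiniteness, not strictness, so the bound covers all cases.

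\emph{Step 3 (function decrease).} Apply Eq.~\eqref{eq:cubicbound} with $s=s^\star$. Since $M\ge L_H$, the term $\tfrac{L_H}{6}(r^\star)^3$ is at most $\tfrac{M}{6}(r^\star)^3$, so
\[
f(x+s^\star)\le f(x)+m(s^\star)\le f(x)-\tfrac{M}{12}(r^\star)^3 .
\]
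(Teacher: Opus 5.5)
Your proposal is correct and matches the paper's proof essentially line for line. You eliminate $\inner{g}{s^\star}$ via stationarity, bound $\inner{\Hb s^\star}{s^\star}\ge -\tfrac{M}{2}(r^\star)^3$ by the second-order condition, and absorb the $\tfrac{L_H-M}{6}(r^\star)^3$ term using $M\ge L_H$ in the cubic upper bound. Your added remark on the hard case is sound and slightly more careful than the paper: only semidefiniteness is needed, and the Nesterov--Polyak necessary conditions supply it for every global minimizer.
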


\begin{proof}
Stationarity Eq.~\eqref{eq:shifted} gives
$\inner{g}{s^\star}=-\inner{\Hb s^\star}{s^\star}
-\tfrac{M}{2}(r^\star)^3$. Substituting,
\[
  m(s^\star)
  = -\tfrac12\inner{\Hb s^\star}{s^\star}
    -\tfrac{M}{2}(r^\star)^3+\tfrac{M}{6}(r^\star)^3
  = -\tfrac12\inner{\Hb s^\star}{s^\star}-\tfrac{M}{3}(r^\star)^3 .
\]
The second-order condition $\Hb+\tfrac{Mr^\star}{2}\Ib\succeq0$ implies
$\inner{\Hb s^\star}{s^\star}\ge-\tfrac{M}{2}(r^\star)^3$, hence
$m(s^\star)\le\tfrac{M}{4}(r^\star)^3-\tfrac{M}{3}(r^\star)^3
=-\tfrac{M}{12}(r^\star)^3$. For the second claim, by
Eq.~\eqref{eq:cubicbound} and $M\ge L_H$,
$f(x+s^\star)\le f(x)+m(s^\star)
+\tfrac{L_H-M}{6}(r^\star)^3\le f(x)+m(s^\star)$.
\end{proof}

\begin{lemma}[New gradient bound]
\label{lem:newgrad}
Under Assumption~\ref{ass:lip}, the point $x^+=x+s^\star$ satisfies
\begin{equation}
  \norm{\nabla f(x^+)}\;\le\;\frac{L_H+M}{2}\,(r^\star)^2 .
\label{eq:newgrad}
\end{equation}
\end{lemma}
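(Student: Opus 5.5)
The argument is the standard one for cubic regularization (Lemma~1 in Section~5 of \citet{nesterov2006cubic}). It combines the Taylor expansion of the gradient under Assumption~\ref{ass:lip} with the first-order stationarity condition of the exact subproblem minimizer. First I write the gradient at the new point in integral form,
\[
  \nabla f(x^+)=g+\int_0^1\nabla^2 f(x+\tau s^\star)\,s^\star\,d\tau ,
\]
and split off the quadratic-model part:
\[
  \nabla f(x^+)=\bigl(g+\Hb s^\star\bigr)+\int_0^1\bigl(\nabla^2 f(x+\tau s^\star)-\Hb\bigr)s^\star\,d\tau .
\]
Assumption~\ref{ass:lip} bounds the norm of the integral term by
\[
  \int_0^1 L_H\,\tau\,\norm{s^\star}^2\,d\tau=\tfrac{L_H}{2}(r^\star)^2 .
\]
This is the gradient analogue of the cubic upper bound Eq.~\eqref{eq:cubicbound}.

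Second, I invoke the stationarity system Eq.~\eqref{eq:shifted}, which Lemma~\ref{lem:secular} guarantees for the global minimizer $s^\star$. It gives $g+\Hb s^\star=-\tfrac{M}{2}r^\star s^\star$, so
\[
  \norm{g+\Hb s^\star}=\tfrac{M}{2}(r^\star)^2 .
\]
Applying the triangle inequality to the two terms yields
\[
  \norm{\nabla f(x^+)}\le\tfrac{L_H+M}{2}(r^\star)^2 ,
\]
which is Eq.~\eqref{eq:newgrad}.

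There is no genuine obstacle. The only point needing care is that the first-order condition must hold exactly. This is true for the global minimizer, including the hard case: there $\lambda^\star=-\lambda_n$ and $s^\star$ is not of the form $s(\lambda)$, yet the characterization $g+\Hb s^\star+\tfrac{M}{2}\norm{s^\star}s^\star=0$ still holds by global optimality, since the gradient of $m$ vanishes at any unconstrained minimizer. I will state this explicitly rather than rely on the non-degenerate formula of Lemma~\ref{lem:secular}. No condition $M\ge L_H$ is needed.

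For the Krylov step $s_L$ of Lemma~\ref{lem:krylov} the identity is only approximate. In that case the residual $\norm{\nabla m(s_L)}$ enters additively, which is the usual ARC inexactness term. I will note this for the inexact extension.
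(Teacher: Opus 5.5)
Your proof is correct and follows the paper's argument: write $\nabla f(x^+)$ as the Taylor remainder $\nabla f(x^+)-g-\Hb s^\star$, which Assumption~\ref{ass:lip} bounds by $\tfrac{L_H}{2}(r^\star)^2$, minus $\tfrac{M r^\star}{2}s^\star$ from the stationarity system Eq.~\eqref{eq:shifted}, and conclude by the triangle inequality. Your integral derivation of the remainder bound spells out a step the paper leaves implicit. So does your remark that stationarity of $m$ holds at any global minimizer, hard case included, which matters because Lemma~\ref{lem:secular} only covers the non-degenerate case.
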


\begin{proof}
Using $g+\Hb s^\star=-\tfrac{Mr^\star}{2}s^\star$ from
Eq.~\eqref{eq:shifted},
\[
  \nabla f(x^+)
  =\underbrace{\nabla f(x^+)-g-\Hb s^\star}_{\text{Taylor remainder}}
   \;-\;\tfrac{Mr^\star}{2}\,s^\star,
\]
and Assumption~\ref{ass:lip} bounds the remainder by
$\tfrac{L_H}{2}(r^\star)^2$ while the second term has norm
$\tfrac{M}{2}(r^\star)^2$.
\end{proof}

\begin{theorem}[Global complexity]
\label{thm:rate}
Let $f$ satisfy Assumption~\ref{ass:lip} with
$f^\star:=\inf f>-\infty$, and let $\{x_k\}$ be generated by
$x_{k+1}=x_k+s_k^\star$ where $s_k^\star$ minimizes the cubic model at
$x_k$ with constant $M\ge L_H$. Then for any $\eps>0$,
\[
  \min_{1\le i\le K}\norm{\nabla f(x_i)}\le\eps
  \qquad\text{after}\qquad
  K=\left\lceil
  \frac{12\,(f(x_0)-f^\star)}{M}
  \Bigl(\frac{L_H+M}{2}\Bigr)^{3/2}\eps^{-3/2}
  \right\rceil
  \;=\;\mathcal{O}\!\Bigl(\frac{\sqrt{M}\,(f(x_0)-f^\star)}
  {\eps^{3/2}}\Bigr)
\]
iterations. With the canonical choice $M=2L_H$,
$K\le\bigl\lceil 6\,(3/2)^{3/2}\sqrt{L_H}\,(f(x_0)-f^\star)\,
\eps^{-3/2}\bigr\rceil
\le\bigl\lceil 12\sqrt{L_H}\,(f(x_0)-f^\star)\,
\eps^{-3/2}\bigr\rceil$.
\end{theorem}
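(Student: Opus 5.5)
We will combine the per-step function decrease of Lemma~\ref{lem:decrease} with the post-step gradient bound of Lemma~\ref{lem:newgrad} and telescope. Fix $k\ge0$ and write $r_k:=\norm{s_k^\star}$. Since $M\ge L_H$, Lemma~\ref{lem:decrease} gives
\[
f(x_k)-f(x_{k+1})\ge \tfrac{M}{12}r_k^3 .
\]
Lemma~\ref{lem:newgrad}, applied at $x_k$, gives $\norm{\nabla f(x_{k+1})}\le \tfrac{L_H+M}{2}r_k^2$. Inverting this, we get
\[
r_k\ge \Bigl(\tfrac{2}{L_H+M}\Bigr)^{1/2}\norm{\nabla f(x_{k+1})}^{1/2}.
\]
Substituting into the decrease yields a per-step inequality relating function progress to the gradient at the \emph{next} iterate:
\[
f(x_k)-f(x_{k+1})\;\ge\;\frac{M}{12}\Bigl(\frac{2}{L_H+M}\Bigr)^{3/2}\norm{\nabla f(x_{k+1})}^{3/2}.
\]
This index shift explains why the statement uses $\min_{1\le i\le K}$ rather than starting at $i=0$.

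Next we sum this over $k=0,\dots,K-1$. The left side telescopes to $f(x_0)-f(x_K)\le f(x_0)-f^\star$. The right side is at least $K$ times $\tfrac{M}{12}\bigl(\tfrac{2}{L_H+M}\bigr)^{3/2}\min_{1\le i\le K}\norm{\nabla f(x_i)}^{3/2}$. We argue by contradiction: if every $\norm{\nabla f(x_i)}>\eps$ for $1\le i\le K$, then
\[
K<\frac{12(f(x_0)-f^\star)}{M}\Bigl(\frac{L_H+M}{2}\Bigr)^{3/2}\eps^{-3/2}.
\]
This contradicts the ceiling choice of $K$, which gives the claim. The $\mathcal{O}(\sqrt M\,(f(x_0)-f^\star)\eps^{-3/2})$ form follows from $L_H\le M$, so that $\bigl(\tfrac{L_H+M}{2}\bigr)^{3/2}\le M^{3/2}$.

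For the canonical choice $M=2L_H$, substitute directly:
\[
\frac{12}{2L_H}\Bigl(\frac{3L_H}{2}\Bigr)^{3/2}=6\,(3/2)^{3/2}\sqrt{L_H}.
\]
Then check numerically that $6(3/2)^{3/2}\approx 11.02\le12$.

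There is no real obstacle; the argument is routine. The steps needing care are these:
\begin{enumerate}[leftmargin=2em,itemsep=1pt]
\item Confirm that Lemma~\ref{lem:newgrad} is used with the gradient at $x_{k+1}$, so the indexing matches $1\le i\le K$.
\item Confirm that Lemma~\ref{lem:decrease} needs $M\ge L_H$, which is assumed.
\item Handle the degenerate case $g_k=0$, where Lemma~\ref{lem:secular} formally excludes $g\ne0$. If some $\nabla f(x_k)=0$ with $k\ge1$, the conclusion is immediate. If $g_0=0$, we take $s^\star_0=0$ (or any global model minimizer), and the same two lemmas still hold via the stationarity characterization of \citet{nesterov2006cubic}.
\end{enumerate}
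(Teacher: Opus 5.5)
Your proof is correct and is essentially the paper's own argument. The paper also assumes $\norm{\nabla f(x_i)}>\eps$ for $1\le i\le K$, uses Lemma~\ref{lem:newgrad} to get $r_i>(2\eps/(L_H+M))^{1/2}$ for $0\le i\le K-1$, and feeds this into Lemma~\ref{lem:decrease} before telescoping to contradict the ceiling definition of $K$. Your special treatment of $g_k=0$ is unnecessary, since every global minimizer of the cubic model satisfies stationarity and $\Hb+\tfrac{Mr}{2}\Ib\succeq0$ even when $g=0$; note also that $s_0^\star=0$ is a model minimizer only when $\nabla^2 f(x_0)\succeq0$.
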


\begin{proof}
Fix $K$ and suppose $\norm{\nabla f(x_i)}>\eps$ for all $1\le i\le K$.
By Lemma~\ref{lem:newgrad},
$r_i:=\norm{s_i^\star}\ge
\bigl(\tfrac{2\norm{\nabla f(x_{i+1})}}{L_H+M}\bigr)^{1/2}
>\bigl(\tfrac{2\eps}{L_H+M}\bigr)^{1/2}$ for $0\le i\le K-1$.
By Lemma~\ref{lem:decrease},
\[
  f(x_i)-f(x_{i+1})\;\ge\;\tfrac{M}{12}\,r_i^3
  \;>\;\tfrac{M}{12}\Bigl(\tfrac{2\eps}{L_H+M}\Bigr)^{3/2}.
\]
Telescoping over $i=0,\dots,K-1$ (the sequence $\{f(x_i)\}$ is
monotone by Lemma~\ref{lem:decrease}):
\[
  f(x_0)-f^\star\;\ge\;
  K\cdot\tfrac{M}{12}\Bigl(\tfrac{2\eps}{L_H+M}\Bigr)^{3/2},
\]
so $K<\tfrac{12(f(x_0)-f^\star)}{M}
\bigl(\tfrac{L_H+M}{2}\bigr)^{3/2}\eps^{-3/2}$. Taking the ceiling and
substituting $M=2L_H$ gives the stated constants.
\end{proof}

\begin{corollary}[Approximate second-order stationarity]
\label{cor:sosp}
Under the assumptions of Theorem~\ref{thm:rate}, for any $\eps>0$ there
is an iterate $x_i$, $1\le i\le K$, with $K$ exactly as in
Theorem~\ref{thm:rate}, satisfying \emph{both}
\[
  \norm{\nabla f(x_i)}\le\eps
  \qquad\text{and}\qquad
  \lambda_n\bigl(\nabla^2 f(x_i)\bigr)\;\ge\;
  -\,\frac{M+2L_H}{\sqrt{2(L_H+M)}}\,\sqrt{\eps}\,,
\]
where $\lambda_n$ denotes the smallest eigenvalue. With the canonical
choice $M=2L_H$ the curvature threshold is
$-\tfrac{4}{\sqrt{6}}\sqrt{L_H\,\eps}$. 

The method therefore converges to approximate \emph{second-order}
stationary points: at a strict saddle the smallest Hessian eigenvalue
stays below some fixed $-c<0$ that does not shrink as $\eps\to0$, so
for $\eps$ small enough the curvature condition above rules the saddle
out as a stopping point. The method keeps moving and escapes the saddle instead
of stopping just because the gradient norm is small.

\end{corollary}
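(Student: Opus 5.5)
Let $x_{i-1}$ be the iterate that precedes the good point. The curvature bound at $x_i$ will come from two facts about that previous step: the second-order optimality of $s_{i-1}^\star$, and Hessian Lipschitz continuity. The counting argument is the one already used in Theorem~\ref{thm:rate}.

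\textbf{Curvature transfer.} By Lemma~\ref{lem:pd} (equivalently the condition in Eq.~\eqref{eq:shifted}), $\nabla^2 f(x_{i-1})+\tfrac{M r_{i-1}}{2}\Ib\succeq0$, where $r_{i-1}=\norm{s_{i-1}^\star}$. Assumption~\ref{ass:lip} gives $\nabla^2 f(x_i)\succeq\nabla^2 f(x_{i-1})-L_H r_{i-1}\Ib$. Combining the two,
\[
  \lambda_n\bigl(\nabla^2 f(x_i)\bigr)\;\ge\;-\tfrac{M+2L_H}{2}\,r_{i-1}.
\]
This inequality holds for every $i\ge1$.

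\textbf{Choosing the index.} The second step is to find an index $i$ at which $r_{i-1}$ is small. Lemma~\ref{lem:decrease} gives $f(x_{j})-f(x_{j+1})\ge\tfrac{M}{12}r_j^3$. Summing over $j=0,\dots,K-1$ yields $\sum_j r_j^3\le 12(f(x_0)-f^\star)/M$. With $K$ as in Theorem~\ref{thm:rate}, this forces $\min_{0\le j\le K-1} r_j\le\rho:=\bigl(\tfrac{2\eps}{L_H+M}\bigr)^{1/2}$, because otherwise the sum would exceed $K\tfrac{M}{12}\rho^3\ge f(x_0)-f^\star$. The strictness at the boundary is handled exactly as in the proof of Theorem~\ref{thm:rate}. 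Pick $j$ attaining this minimum and set $i=j+1\in\{1,\dots,K\}$.

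\textbf{Verifying both conditions at $x_i$.} The gradient condition follows from Lemma~\ref{lem:newgrad}: $\norm{\nabla f(x_i)}\le\tfrac{L_H+M}{2}r_{i-1}^2\le\eps$. The curvature condition follows from the transfer inequality: $\lambda_n(\nabla^2 f(x_i))\ge-\tfrac{M+2L_H}{2}\sqrt{2\eps/(L_H+M)}=-\tfrac{M+2L_H}{\sqrt{2(L_H+M)}}\sqrt\eps$, which is the stated threshold. Substituting $M=2L_H$ gives $-\tfrac{4L_H}{\sqrt{6L_H}}\sqrt\eps=-\tfrac{4}{\sqrt6}\sqrt{L_H\eps}$.

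The closing remark about strict saddles is then immediate. If $\lambda_n\le-c$ at a point, that point cannot satisfy the bound once $\tfrac{M+2L_H}{\sqrt{2(L_H+M)}}\sqrt\eps<c$.

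\textbf{Main obstacle.} The delicate step is the choice of index. The bound on $\lambda_n$ is controlled by the \emph{previous} step length $r_{i-1}$, not by the current gradient. We therefore cannot reuse the contradiction form of Theorem~\ref{thm:rate}, which assumes every gradient is large. We have to argue directly on the minimal $r_j$, so that the gradient bound and the curvature bound are certified simultaneously at the same iterate. The only care needed is with non-strict versus strict inequalities at the ceiling in $K$.
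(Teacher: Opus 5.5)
Your proof is correct and uses exactly the paper's ingredients: the second-order condition at $x_{i-1}$ plus Hessian Lipschitzness (via Weyl) to get $\lambda_n(\nabla^2 f(x_i))\ge-\tfrac{M+2L_H}{2}r_{i-1}$, Lemma~\ref{lem:newgrad} for the gradient, and the telescoped decrease of Lemma~\ref{lem:decrease} with the same $\rho$. The paper argues by contradiction: if every $x_i$ failed, each failure (large gradient or very negative curvature) would force $r_{i-1}>\rho$. This is just the contrapositive of your direct choice of the shortest step, so your remark that the contradiction form of Theorem~\ref{thm:rate} cannot be reused is overly pessimistic.
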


\begin{proof}
Set $\rho:=\bigl(\tfrac{2\eps}{L_H+M}\bigr)^{1/2}$ and suppose the
conclusion fails for every $1\le i\le K$: either
$\norm{\nabla f(x_i)}>\eps$ or
$\lambda_n(\nabla^2 f(x_i))<-(\tfrac{M}{2}+L_H)\rho$, noting that
$(\tfrac{M}{2}+L_H)\rho=\tfrac{M+2L_H}{\sqrt{2(L_H+M)}}\sqrt{\eps}$. In
the first case Lemma~\ref{lem:newgrad} gives
$r_{i-1}\ge\bigl(\tfrac{2\norm{\nabla f(x_i)}}{L_H+M}\bigr)^{1/2}>\rho$.
In the second case, the second-order condition of the cubic subproblem
at $x_{i-1}$, $\nabla^2 f(x_{i-1})+\tfrac{M r_{i-1}}{2}\Ib\succeq0$,
combined with Assumption~\ref{ass:lip} and Weyl's inequality, yields
\[
  \lambda_n\bigl(\nabla^2 f(x_i)\bigr)
  \;\ge\;\lambda_n\bigl(\nabla^2 f(x_{i-1})\bigr)-L_H\,r_{i-1}
  \;\ge\;-\Bigl(\tfrac{M}{2}+L_H\Bigr) r_{i-1},
\]
so again $r_{i-1}>\rho$. In either case Lemma~\ref{lem:decrease} gives
$f(x_{i-1})-f(x_i)\ge\tfrac{M}{12}\rho^3$, and telescoping over
$i=1,\dots,K$ as in the proof of Theorem~\ref{thm:rate} forces
$K<\tfrac{12(f(x_0)-f^\star)}{M}\bigl(\tfrac{L_H+M}{2}\bigr)^{3/2}
\eps^{-3/2}$, contradicting the definition of $K$ as the ceiling of this
quantity.
\end{proof}

\begin{remark}[Inexactness and adaptivity]
\label{rem:inexact}
(i) When $s_k$ only minimizes $m$ over $\mathcal{K}_L$
(Lemma~\ref{lem:krylov}) instead of $\R^n$, the rate is retained under
the ARC termination criteria~\citep{cartis2011a,cartis2011b}. In the
extreme case $L=1$, $\mathcal{K}_1\ni$ the Cauchy point already yields
a (slower) guaranteed decrease. The same applies to
Corollary~\ref{cor:sosp}: its proof uses the subproblem second-order
condition, which the Krylov step certifies only within
$\mathcal{K}_L$, and \citet{cartis2011b} show that the ARC criteria
(with a Lanczos-based subproblem solver, as here) preserve the
second-order complexity bound as well. (ii) $L_H$ is never known. As
in Algorithm~3 in \citet{doikov2023lazy} and
in the adaptive search ensuring Eq.~(20) in \citet{doikov2024spectral},
we run a multiplicative adaptation of $M$: double or quadruple
on rejected (ascent) steps, halve on accepted ones. The accepted steps
then satisfy the descent inequality of Lemma~\ref{lem:decrease} with
the current $M_k\le 4\max\{L_H,M_{\min}\}$, and the rejected steps cost
one function evaluation each, at most doubling the total evaluation
count. This multiplicative adaptation, applied \emph{per block}, is the
first of the two stabilization mechanisms of the blockwise optimizer.
The second, the floor on $M_b$, is analyzed with the algorithm in
Section~\ref{sec:algorithm}. (iii) For \emph{lazy} Hessians reused over
$m$ steps the
appropriate constant is $M=6mL_H$~\citep{doikov2023lazy}, which we
exploit for the small-block exact path of
Section~\ref{sec:algorithm}.
\end{remark}

\begin{remark}[Global minima and the scope of the guarantee]
\label{rem:global}
Theorem~\ref{thm:rate} and Corollary~\ref{cor:sosp} guarantee approximate
second-order stationarity and stop short of global minimality. This scope
cannot be improved by a better method. By the information-theoretic lower
bound of \citet{nemirovski1983}, any method that accesses a general
smooth non-convex function through local oracle calls (values,
gradients, Hessians, or higher derivatives) needs a number of calls
growing exponentially with the dimension to locate an approximate global
minimizer. On the class of Assumption~\ref{ass:lip} no method of any
order can therefore provide a global-minimum guarantee, and the
$\mathcal{O}(\eps^{-3/2})$ rate to approximate stationarity is itself
optimal for methods with a second-order
oracle~\citep{carmon2020lower}. Global minimum guarantees exist only
under additional structure that removes this obstacle: for
gradient dominated objectives (a class containing convex, star-convex,
and Polyak--{\L}ojasiewicz functions), every stationary point is a
global minimizer, and the same cubic method reaches an $\eps$-global
minimum with improved complexity~\citep{chayti2024unified,masiha2022}.
Sinusoidal INR landscapes are not gradient dominated: the saddle
structure documented in Sections~\ref{sec:intro}
and~\ref{sec:experiments} is exactly what such assumptions exclude. So
approximate second-order stationarity is the strongest guarantee any
method can offer on the problem class studied here.
\end{remark}

\subsection{Block-diagonal decomposition at scale}
\label{sec:blocktheory}

This subsection contains the results that, in our experiments,
determine whether a method (optimizer) remains computationally feasible and
converges to a low loss on large scale models. It describes how the cubic step is distributed
over the parameter blocks of a neural network (each block being a
parameter block within a single weight or bias tensor) and how
block steps are accepted.

The Hessian structure of sinusoidal INRs is examined in a companion
study~\citep{podorozh2026hessian}.
We partition $\theta=(\theta_1,\dots,\theta_B)$ by tensors and let
$\Hb_b$ denote the diagonal blocks. We apply the cubic step
\emph{per block}, sequentially, with fresh gradients:
block $b$ minimizes
$m_b(s_b)=\inner{g_b}{s_b}+\tfrac12\inner{\Hb_b s_b}{s_b}
+\tfrac{M_b}{6}\norm{s_b}^3$ where $g_b$ is the gradient evaluated
\emph{after} blocks $1,\dots,b-1$ have moved.

\begin{proposition}[Per-block descent]
\label{prop:perblock}
Let each block update be accepted only if the full loss does not
increase (cf.\ Algorithm~\ref{alg:blockcn}, phase B). Then the
iteration is monotone, $f(x_{k+1})\le f(x_k)$, and every accepted
block step with $M_b\ge L_{H,b}$ (the Hessian Lipschitz constant of the
block restriction with other blocks frozen at their current values)
individually satisfies the decrease bound of
Lemma~\ref{lem:decrease} applied to the block restriction
$f_b(s):=f(\dots,\theta_b+s,\dots)$.
\end{proposition}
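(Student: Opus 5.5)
The plan is to split the statement into its two claims and handle them separately: monotonicity of the outer iteration, and the per-block decrease bound for accepted steps. Throughout, let $x_k^{(0)}:=x_k$ denote the state at the start of outer iteration $k$. Let $x_k^{(b)}$ denote the state after block $b$ has been processed, whether its step was accepted or rejected. Then $x_{k+1}=x_k^{(B)}$.

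For monotonicity, I will argue by induction over the block index within one outer iteration. If the step of block $b$ is rejected, then $x_k^{(b)}=x_k^{(b-1)}$ and the loss is unchanged. If it is accepted, the guard of phase B directly gives $f(x_k^{(b)})\le f(x_k^{(b-1)})$. Chaining these $B$ inequalities yields $f(x_{k+1})\le f(x_k)$. No smoothness assumption is used in this step; only the acceptance rule matters. I will also note that the guard must compare against the current intermediate loss $f(x_k^{(b-1)})$, not $f(x_k)$, to make the chain valid. This is consistent with ``the full loss does not increase'' being checked at each block.

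For the decrease bound, I will fix $k$ and $b$ and work with the block restriction $f_b(s)=f(\dots,\theta_b+s,\dots)$. The other blocks are frozen at their values in $x_k^{(b-1)}$. This function is twice differentiable with $\nabla f_b(0)=g_b$, which is exactly the fresh gradient used by the algorithm. Its Hessian is $\nabla^2 f_b(0)=\Hb_b$, the diagonal block of the full Hessian at $x_k^{(b-1)}$. I will then check that $f_b$ satisfies Assumption~\ref{ass:lip} with constant $L_{H,b}$, which holds by the definition of $L_{H,b}$. One can also note $L_{H,b}\le L_H$, since a principal submatrix difference has spectral norm at most that of the full difference. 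The block cubic model $m_b$ is therefore precisely the model Eq.~\eqref{eq:cubicmodel} for $f_b$ at $s=0$. Lemma~\ref{lem:decrease} then applies verbatim with $(f,g,\Hb,M,L_H)$ replaced by $(f_b,g_b,\Hb_b,M_b,L_{H,b})$. This gives $f_b(s_b^\star)\le f_b(0)-\tfrac{M_b}{12}\norm{s_b^\star}^3$ whenever $M_b\ge L_{H,b}$ and $s_b^\star$ is the global minimizer of $m_b$. Since $f_b(s_b^\star)=f(x_k^{(b)})$ for an accepted step, this is the claimed bound. Such a step is automatically accepted by the guard, which makes the two claims consistent.

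The main subtlety, and the step I expect to need care, is which minimizer is meant. On large blocks the step is the Krylov minimizer of Lemma~\ref{lem:krylov}, not the global minimizer of $m_b$. Lemma~\ref{lem:decrease} uses the global second-order condition. My plan is to state the bound for the exact block minimizer, and then cover the Krylov case via Remark~\ref{rem:inexact}(i). There I will observe that the subspace minimizer $y^\star$ of $\hat m$ satisfies the secular characterization of Lemma~\ref{lem:secular} for the tridiagonal $\Tb_L$. The computation in the proof of Lemma~\ref{lem:decrease} therefore goes through inside the subspace, giving $m_b(s_L)\le-\tfrac{M_b}{12}\norm{s_L}^3$. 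Combined with the cubic upper bound Eq.~\eqref{eq:cubicbound} for $f_b$, this yields the same decrease inequality. I expect this to work, because Lemma~\ref{lem:decrease} only needs stationarity and positive semidefiniteness of the shifted operator restricted to the span containing the step. Both hold for the reduced problem with $\Tb_L$ in place of $\Hb$.
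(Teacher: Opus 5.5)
Your proposal is correct and follows the paper's proof. Monotonicity is immediate from the accept-or-reject test against the running loss, and the decrease bound is Lemma~\ref{lem:decrease} applied to $f_b$, whose gradient and Hessian at $s=0$ are the fresh $g_b$ and $\Hb_b$ and whose Hessian-Lipschitz constant satisfies $L_{H,b}\le L_H$.

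Your extra Krylov paragraph goes beyond the paper's proof, which tacitly treats $s_b$ as the exact block minimizer, and it is sound. The reduced minimizer satisfies $\norm{g_b}e_1+\bigl(\Tb_L+\tfrac{M_b}{2}\norm{y^\star}\Ib\bigr)y^\star=0$ with $\Tb_L+\tfrac{M_b}{2}\norm{y^\star}\Ib\succeq0$; this is the standard global-minimizer characterization of a cubic model, applied to $\hat m$. Hence the computation of Lemma~\ref{lem:decrease} gives $m_b(s_L)\le-\tfrac{M_b}{12}\norm{s_L}^3$, and Eq.~\eqref{eq:cubicbound}, which holds for every step, finishes the argument.
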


\begin{proof}
Monotonicity holds by construction of the accept-or-reject test against
the running loss $\ell_{\mathrm{cur}}$. For an accepted block,
$f_b$ has gradient $g_b$ and Hessian $\Hb_b$ at $s=0$ (from a fresh
evaluation), and Assumption~\ref{ass:lip}, restricted to the block
coordinate subspace, holds with constant $L_{H,b}\le L_H$. Then, application of
Lemma~\ref{lem:decrease} to $f_b$ results in the claim.
\end{proof}

\begin{remark}[Failure of joint acceptance at scale]
\label{rem:joint}
Suppose instead that all $B$ block steps $s_1,\dots,s_B$ are computed from
the same base point and accepted or rejected \emph{jointly}. Each
block step $s_b$ is bounded by its own trust region
($\norm{s_b}=r_b\approx\sqrt{2\norm{g_b}/M_b}$), but the joint step
has $\norm{s}=(\sum_b r_b^2)^{1/2}\approx\sqrt{B}\,\bar r$, and the
cross-block second order coupling terms
$\inner{\Hb_{bb'}s_b}{s_{b'}}$, which are absent from every block model, grow
with $B$ as well. The joint cubic decrease guarantee would require
the \emph{joint} radius to satisfy the secular equation, i.e.\ each
$M_b$ inflated by $\Theta(\sqrt{B})$. With $B\approx40$ large tensors
(as in the 91.4M ViSIR) a joint test rejects nearly every step, and a
shared multiplicative $M$ adaptation then drives $M\to M_{\max}$,
resulting in the stall observed experimentally
(Section~\ref{sec:fullvisir}). Per-block acceptance with per-block
$M_b$ (Proposition~\ref{prop:perblock}) costs one extra forward pass
per block and avoids this kind of failure due to joint acceptance.
\end{remark}

\section{ARC-Block Algorithm}
\label{sec:algorithm}

We now assemble the step of Section~\ref{sec:theory} into a practical
blockwise algorithm. We first describe the step for a single parameter block,
and then the full optimizer.

\paragraph{Naming.} Throughout the experimental sections and in every
figure and table we refer to the blockwise cubic Newton optimizer with
the subproblem solver in a Chebyshev-bounded Krylov subspace as
\emph{CubicKrylov}. In some plots it appears under the earlier name
\emph{CubicCheby}, the same method. The experimentation described in
this paper took a substantial amount of time, and for older
experiments we retained the previous name, CubicCheby (primarily in
plot images).

As formally described in Section~\ref{sec:theory}, the optimizer's
subsolver step is the cubic Newton step computed in a
Chebyshev-\emph{bounded} Krylov subspace: the Chebyshev second-kind
polynomial is used for the degree analysis
(Proposition~\ref{prop:degree} and Remark~\ref{rem:cheb2bound}) that
bounds the Lanczos degree $L$. The Chebyshev recurrence is not used.
Instead, the Krylov subspace is built by the Lanczos process, with its
degree bounded by the Chebyshev analysis. Hence the \emph{CubicKrylov}
step name. Yet, we did also perform experiments with a variant of
cubic Newton that does use the Chebyshev recurrence as a subproblem
solver (called CubicCheby-DSO in tables), described in
Appendix~\ref{sec:coststructure}.

\subsection{The CubicKrylov step (single block)}

Given an HVP oracle $v\mapsto\Hb v$ for one block, the step is:
\begin{enumerate}[leftmargin=2em,itemsep=1pt]
\item \textbf{Lanczos build} ($L$ HVPs, float64 accumulation, full
      reorthogonalization): $\Qb_L,\Tb_L$ with
      $\Qb_L^\top g=\norm{g}e_1$.
\item \textbf{Reduced secular solve} ($\mathcal{O}(L^3)$ on the
      $(L{+}1)\times(L{+}1)$ matrix, negligible for $L\ll n_b$):
      eigendecompose $\Tb_L=V\,\mathrm{diag}(\nu)\,V^\top$; solve
      Eq.~\eqref{eq:secular} for the reduced model by safeguarded
      bisection or Newton on
      $\lambda=\tfrac M2\bigl(\sum_i \hat c_i^2/(\nu_i+\lambda)^2
      \bigr)^{1/2}$, $\hat c=V^\top(\norm{g}e_1)$, with the lower
      bound $\lambda>\max\{0,-\nu_{\min}\}$ (Lemma~\ref{lem:pd}).
\item \textbf{Lift}: $s_L=\Qb_L y^\star$.
\end{enumerate}
The key property of this method is its cost: every trial $\lambda$ of the secular search
reuses the cached $\Qb_L,\Tb_L$: the entire root-finding loop costs
$L$ HVPs \emph{total}, independent of the number of trial shifts, and
$\mathcal{O}(n_b^3)$ factorization and $\mathcal{O}(n_b^2)$ storage are
avoided~\citep{gould1999gltr}.
It is this reduction that makes eigendecomposition-based cubic Newton
feasible for parameter blocks at practical neural network scale: the
eigendecomposition is applied to the $(L{+}1)\times(L{+}1)$ reduced
matrix rather than to the block Hessian.

\subsection{Cubic Newton in Krylov subspace on large blocks}

\begin{algorithm}[t]
\caption{\textsc{ARC-Block}: blockwise stabilized adaptive cubic
regularization. The subsolver used for large size parameter blocks, \textsc{CubicKrylov}, is
interchangeable: stored-basis Lanczos, the Chebyshev second-kind
three term recurrence or the exponential relaxation recurrence $\varphi_1$ (Appendix~\ref{sec:coststructure}, Section~\ref{sec:phi1}) can be used instead.}
\label{alg:blockcn}
\begin{algorithmic}[1]
\Require parameters $\theta=(\theta_1,\dots,\theta_B)$; size threshold
$n_{\max}$; Krylov degree $L$; laziness $m_b$; Lipschitz estimate
$L_H$; floor $M_{\min}=10^{-6}$; initialization
$M_{\mathrm{init}}=6L_H$
\State partition blocks: $\mathcal{S}=\{b: n_b\le n_{\max}\}$ (small),
       $\mathcal{L}=\{b: n_b>n_{\max}\}$ (large)
\For{$k=0,1,2,\dots$}
  \State $g\gets\nabla f(\theta)$ \Comment{closure with create\_graph}
  \Statex \hspace{1.2em}\textbf{Phase A (small blocks; lazy exact
          cubic, applied unconditionally):}
  \For{$b\in\mathcal{S}$}
    \If{$k\equiv0 \pmod{m_b}$}
       rebuild $\Hb_b$ exactly; cache $\mathrm{eigh}(\Hb_b)$
    \EndIf
    \State $s_b\gets$ exact cubic step from cached eigendecomposition
           with $M=6\,m_b\,L_H$ \Comment{\citealp{doikov2023lazy}}
    \State $\theta_b\mathrel{+}= s_b$
  \EndFor
  \State $\ell_{\mathrm{cur}}\gets f(\theta)$
         \Comment{post-phase-A loss; 1 forward}
  \Statex \hspace{1.2em}\textbf{Phase B (large blocks; per-block
          cubic-Krylov with per-block accept-or-reject):}
  \For{$b\in\mathcal{L}$}
    \State $g_b\gets\nabla_{\theta_b} f(\theta)$ with graph
           \Comment{\emph{fresh} gradient, post phase A / prior blocks}
    \State $s_b\gets\textsc{CubicKrylov}(\mathrm{hvp}_b,\,g_b,\,M_b,\,L)$
           \Comment{Lemma~\ref{lem:krylov}; $L{+}2$ grad-equivalents}
    \State $\theta_b\mathrel{+}= s_b$;\quad
           $\ell_{\mathrm{new}}\gets f(\theta)$ \Comment{1 forward}
    \If{$\ell_{\mathrm{new}}>\ell_{\mathrm{cur}}$ or not finite}
       \State revert $\theta_b$;\quad $M_b\gets\min(4M_b,\,10^{10})$
    \Else
       \State $M_b\gets\max(M_b/2,\,M_{\min})$;\quad
              $\ell_{\mathrm{cur}}\gets\ell_{\mathrm{new}}$
    \EndIf
  \EndFor
\EndFor
\end{algorithmic}
\end{algorithm}

Algorithm~\ref{alg:blockcn} embeds the step into the block optimizer
used for many experiments described in this paper (e.g. the ViSIR Chebyshev ablations). 
Three design decisions, each
in response to an observed failure, are emphasized:

\paragraph{(a) Two-phase update with unconditional small blocks.}
Small blocks (biases, norms; $n_b\le512$) take lazy exact cubic steps
\citep{doikov2023lazy} and are applied without a test. The per-block
accept-or-reject of phase B is measured against the \emph{post-phase-A}
loss. Rejecting the whole step on a loss increase lets one stale
small-block component veto good Krylov updates and drives $M$ to the
bound.

\paragraph{(b) Per-block acceptance and per-block $M_b$
(Remark~\ref{rem:joint}).} A single joint test over large
tensors rejects even when every
individual block step is good 
(for the experiments in this paper,
this failure mostly was observed on $91.4M$ ViSIR architecture with 45 tensors).
Per-block testing costs one forward
pass per a large block and restores monotone progress
(Proposition~\ref{prop:perblock}).

\paragraph{(c) A nearly-zero floor for $M_b$.}
The step radius scales as $r\approx\sqrt{2\norm{g_b}/M_b}$
(Proposition~\ref{prop:degree}). A floor $M_{\min}=6L_H$ over-regularizes
blocks whose true local Hessian-Lipschitz constant is far below the
global estimate $L_H$: steps are \emph{accepted but microscopic}, a
second, "silent"  stall mode (no prominent warning signs, 
the optimizer making extremely slow progress). We set $M_{\mathrm{init}}=6L_H$ but let the
two-sided adaptation (accept $\to\times\tfrac12$, reject
$\to\times4$) take $M_b$ all the way down to
$M_{\min}=10^{-6}$ (a floor retained only to keep the step
radius finite), so
each block finds its own curvature scale. The asymmetric factors keep
the rejected-step overhead bounded (Remark~\ref{rem:inexact}(ii)).

\paragraph{Fresh gradients.} Phase-B gradients are recomputed after
Phase A and after each preceding large block (line 11). They come free
from the HVP graph construction. On the nano-ViSIR benchmark (small scale, around 15k parameters),
reusing the step-start closure gradient instead (a single gradient
evaluated before phase A, stale for every subsequent block) lowers
the final peak signal-to-noise ratio (PSNR) by ${\approx}8.5$\,dB (31.7 vs.\ 23.1\,dB in
Table~\ref{tab:nano}).

\subsection{Cost accounting}

\begin{table}[h]
\centering\small
\caption{Per-step, per-block cost of the step variants, block size
$n_b$, Chebyshev-bounded Krylov degree $L$, laziness $m_b$. Grad-equivalent
charging conventions follow the ViSIR ablation experiment.}
\label{tab:cost}
\begin{tabular}{lccc}
\toprule
Step & Build & Solve & Charge \\
\midrule
Cheby-ON (DSO ablation) & full $\Hb_b$ every step ($n_b$ gevals) &
  degree-$L$ recurrence, $\mathcal{O}(Ln_b^2)$ & $n_b+1$ \\
Cheby-OFF (Newton) & full $\Hb_b$ every step &
  \texttt{linalg.solve}, $\mathcal{O}(n_b^3)$ & $n_b+1$ \\
Block-CN-Lazy (eigh) & full $\Hb_b$ every $m_b$ steps &
  cached eigh + secular, $\mathcal{O}(n_b^2)$ & $n_b/m_b+1$ \\
\textbf{CubicKrylov} & \textbf{none} &
  \textbf{$L{+}1$ HVPs + tridiagonal secular} & $\mathbf{L+2}$ \\
\bottomrule
\end{tabular}
\end{table}

For $L\approx10$, the Krylov row is the only entry whose oracle
charge is independent of $n_b$. Its wall-clock and basis memory still
scale linearly with $n_b$ through the HVPs and the stored vectors. It is this independence that 
makes it feasible for large block sizes. Cubic regularized second-order steps become
feasible on tensors where only HVPs are affordable, e.g., the SIREN
decoder of the 91.4M ViSIR.

\section{The Exponential-Relaxation ($\varphi_1$) Step Subsolver}
\label{sec:phi1}

Sections~\ref{sec:cubic}--\ref{sec:algorithm} described algorithmic design of  
the per block trial step as the minimizer of the local cubic model, solved exactly
for small blocks and matrix-free in a Chebyshev-bounded Krylov
subspace for large ones. That architecture can be modified further: 
the \emph{step rule} that maps the block's local
model to a trial step is itself an interchangeable component of the
outer scheme. Exactly as the choice between the stored-basis and
recurrence subsolvers was made in Appendix~\ref{sec:coststructure}. This
section introduces a subproblem solver based on the \emph{exponential
relaxation} step, known in the exponential-integrator literature as a
$\varphi_1$ step \citep{hochbruck1997krylov,hochbruck2010exponential},
\begin{equation}
s \;=\; -H(G,h)\,g, \qquad
H(G,h) \;=\; \int_0^h e^{-G\tau}\,d\tau \;=\; h\,\varphi_1(-hG),
\label{eq:relex}
\end{equation}
the exact solution at time $h$ of the gradient flow
$\dot{x} = -g - G\,(x - x_0)$ of the same local quadratic model whose
cubic regularized minimizer was studied above.
It is the step obtained by following the gradient flow of the
quadratic model, that is, the continuous time limit of gradient
descent as the learning rate tends to zero, integrated in closed form
up to a finite horizon $h$.

\begin{remark}[Derivation of the step rule \eqref{eq:relex}]
\label{rem:relexderiv}
Write
$m(x) = f(x_0) + \inner{g}{x - x_0} + \tfrac12\inner{x - x_0}{G\,(x - x_0)}$
for the local quadratic model, so its gradient flow is
$\dot{x} = -\nabla m(x) = -g - G\,(x - x_0)$ with $x(0) = x_0$.
Substituting $u(\tau) = x(\tau) - x_0$ gives the linear
constant-coefficient system $\dot{u} = -g - G\,u$, $u(0) = 0$.
Multiplying by the integrating factor $e^{G\tau}$ and observing
$\tfrac{d}{d\tau}\big(e^{G\tau} u\big)
 = e^{G\tau}(\dot{u} + G u) = -e^{G\tau} g$,
integration over $[0,h]$ yields
\[
e^{Gh}\,u(h) \;=\; -\Big(\int_0^h e^{G\tau}\,d\tau\Big)\,g
\quad\Longrightarrow\quad
u(h) \;=\; -\int_0^h e^{-G(h-\tau)}\,d\tau\;g
 \;=\; -\int_0^h e^{-G\sigma}\,d\sigma\;g,
\]
after the change of variable $\sigma = h - \tau$ (all factors are
functions of the single symmetric matrix $G$ and commute). This is
exactly $s = -H(G,h)\,g$ with $H(G,h) = \int_0^h e^{-G\tau}\,d\tau$.
When $G$ is invertible the integral evaluates in closed form to
$H(G,h) = G^{-1}\big(\Ib - e^{-hG}\big)$, and comparison with
$\varphi_1(z) = (e^{z} - 1)/z$ gives $H = h\,\varphi_1(-hG)$. For
singular $G$ the same identity holds through the entire-function
series $\varphi_1(z) = \sum_{k\ge 0} z^{k}/(k+1)!$, so the step
requires no invertibility assumption. Projecting the same computation
onto an eigenpair $(\lambda_i, v_i)$ of $G$ reduces it to the scalar
ODE $\dot{u}_i = -\inner{v_i}{g} - \lambda_i u_i$ and produces the
per-mode multiplier $\eta_h(\lambda)$ of
Proposition~\ref{prop:erlaw} below. 

Finally, the literal interpretation of infinitesimal learning rate: 
fix the horizon $h$, split it into $n$ explicit
gradient descent steps of size $\eta = h/n$ on the frozen quadratic,
$u_{k+1} = u_k - \eta\,(g + G u_k)$, $u_0 = 0$, and let $n$ grow so
that the step size shrinks toward zero at fixed total budget $h$. The
iteration unrolls to the geometric sum

\[
u_n \;=\; -\eta \sum_{j=0}^{n-1} (\Ib - \eta G)^{j}\, g
 \;=\; -G^{-1}\Big(\Ib - \big(\Ib - \tfrac{h}{n} G\big)^{n}\Big) g
 \;\xrightarrow{\;n\to\infty\;}\;
 -G^{-1}\big(\Ib - e^{-hG}\big)\, g,
\]
for invertible $G$ (the general case follows by continuity of both
sides in $G$), since
$(\Ib - \tfrac{h}{n} G)^{n} \to e^{-hG}$: the $\varphi_1$ step is the
limit of $n$ gradient descent steps of total budget $h$ on the
quadratic, integrated in closed form rather than iterated.
\end{remark}

Here $G$
plays the role of the block Hessian $\Hb_b$ (or its Lanczos
projection, Section~\ref{sec:phi1krylov}) and $g$ the block gradient
$g_b$ of Section~\ref{sec:algorithm}. Only the map from
$(g_b,\Hb_b)$ to the trial step $s_b$ changes, everything else about
the outer schema (block partition by tensor, Gauss--Seidel sweep
order, per-block trust adaptation, gradient-equivalent cost
accounting) is reused unchanged. We adopt for this alternative
subsolver the same per-block trust constant $\sigma_b$ and the
tiered, proportional-offset acceptance rule already introduced (in
prose) as the \emph{stabilized block-$\sigma$ ARC control} of
Appendix~\ref{sec:feasibility}: reject any trial whose computed loss
increases by more than a tolerance $t$ that adds a term proportional
to the current loss to a fixed absolute offset,
\begin{equation}
f(x + s_b) \;\le\; f(x) + t, \qquad
t \;=\; \tau_{\mathrm{rel}}\,|f(x)| + \tau_{\mathrm{abs}},
\label{eq:guard}
\end{equation}
combined with the usual ratio test on the model decrease. Rejection
multiplies $\sigma_b$ by $\gamma_2>1$ and a sufficiently successful
step multiplies it by $\gamma_1<1$ (bounded below by $\sigma_{\min}$). It is 
the same mechanism that improved the performance of the block-$\sigma$ control in 
our ARC (Adaptive Regularization with Cubics) implementation 
from stalling at $4.0$\,dB to $61.9$\,dB PSNR (on 91.4M ViSIR with ESM dataset) 
and, once the loss-proportional offset $t$ was added, to $64.4$\,dB. 

We write $h_b = c_h/\sigma_b$ for the
relaxation horizon, so that this same multiplicative trust adaptation
is simultaneously the  $\varphi_1$ method step size control:
rejection halves the horizon, sustained success doubles it toward the
Newton limit. We call the resulting method \textbf{ARC-$\varphi_1$}.
It does not change the outer scheme. It only changes the step rule and the
choice of the (already validated) $\sigma_b$-parameterized acceptance
rule in place of Algorithm~\ref{alg:blockcn}'s $M_b$-doubling rule.

Because the two variants share the identical outer scheme, any
performance difference reported in this section between
ARC-$\varphi_1$ and the CubicKrylov subsolver measures the effect of
the step rule. The advantage of blockwise granularity and
stabilization established above is common to both variants.

\subsection{The $\varphi_1$ step rule}
\label{sec:relexlaw}

The $\varphi_1$ method generalizes gradient and Newton iterations
through the matrix function $H(G,h)$ of \eqref{eq:relex}.

\begin{proposition}[$\varphi_1$ method step rule]
\label{prop:erlaw}
Let $G$ be symmetric with eigendecomposition
$G = V \Lambda V^\top$. Then
$H(G,h) = V\,\mathrm{diag}\!\big(\eta_h(\lambda_i)\big)\,V^\top$ with
\begin{equation}
\eta_h(\lambda) \;=\; \frac{1 - e^{-h\lambda}}{\lambda}
\quad (\lambda \ne 0), \qquad \eta_h(0) = h,
\label{eq:mult}
\end{equation}
and: (i) $\eta_h(\lambda) \to 1/\lambda$ as $h\lambda \to \infty$
(Newton limit); (ii) $\eta_h(\lambda) = h\,(1 - h\lambda/2 +
\mathcal{O}(h^2\lambda^2))$ as $\lambda \to 0$ (gradient limit);
(iii) for $\lambda < 0$, $\eta_h(\lambda) = (e^{h|\lambda|}-1)/
|\lambda|$ grows exponentially in $h$ (saddle amplification);
(iv) the doubling identity $H(G,2h) = H(G,h)\,(2\Ib - G\,H(G,h))$
holds, so a geometric horizon search costs one application per
candidate; (v) for $G \succ 0$ and $h \le 0.1/\norm{G}$, the step
$x^+ = x - H(G,h)\,g$ satisfies $f(x^+) < f(x)$ for the quadratic
model, and the descent property transfers to $f$ under a standard
Lipschitz--Hessian argument.
\end{proposition}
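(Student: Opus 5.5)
The plan is to reduce everything to the spectral calculus of the single symmetric matrix $G$. First I establish the diagonal representation. Since $G=V\Lambda V^\top$ with $V$ orthogonal, $e^{-G\tau}=V e^{-\Lambda\tau}V^\top$. Integrating entrywise over $[0,h]$ gives $H(G,h)=V\,\mathrm{diag}\bigl(\int_0^h e^{-\lambda_i\tau}d\tau\bigr)V^\top$. The scalar integral equals $(1-e^{-h\lambda})/\lambda$ for $\lambda\neq0$ and $h$ for $\lambda=0$. This is also consistent with the series $\varphi_1(z)=\sum_k z^k/(k+1)!$ of Remark~\ref{rem:relexderiv}, so $\eta_h$ is an entire function of $\lambda$ and no invertibility case split is needed later.

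Items (i)--(iii) then become scalar statements about $\eta_h$. For (i), $e^{-h\lambda}\to0$ as $h\lambda\to\infty$ with $\lambda>0$, so $\eta_h(\lambda)\to1/\lambda$. For (ii), expanding $1-e^{-h\lambda}=h\lambda-\tfrac12h^2\lambda^2+\mathcal{O}(h^3\lambda^3)$ and dividing by $\lambda$ gives $h(1-h\lambda/2+\mathcal{O}(h^2\lambda^2))$. For (iii), substituting $\lambda=-|\lambda|$ gives $(e^{h|\lambda|}-1)/|\lambda|$ directly. For (iv), I will split the integral defining $H(G,2h)$ as $\int_0^h+\int_h^{2h}$ and substitute $\tau=h+\sigma$ in the second piece, which gives $H(G,2h)=H(G,h)+e^{-hG}H(G,h)$. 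The integrating-factor identity $e^{-hG}=\Ib-GH(G,h)$ holds per mode, since $1-\lambda\eta_h(\lambda)=e^{-h\lambda}$, including $\lambda=0$. Substituting it yields $H(G,2h)=H(G,h)(2\Ib-GH(G,h))$, where all factors commute. The cost remark then follows: each doubling needs one more application of $G$ and $H$.

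For (v), I evaluate the quadratic model at $s=-Hg$. With $c_i=\inner{v_i}{g}$,
\[
m(s)-f(x)=\sum_i c_i^2\,\eta_i\Bigl(-1+\tfrac{\lambda_i\eta_i}{2}\Bigr).
\]
For $\lambda_i>0$ we have $\lambda_i\eta_i=1-e^{-h\lambda_i}\in(0,1)$, so each bracket is at most $-\tfrac12$. Hence $m(s)-f(x)\le-\tfrac12\sum_i c_i^2\eta_i<0$ whenever $g\ne0$. Strict model decrease therefore holds for every $h>0$ when $G\succ0$.

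The restriction $h\le0.1/\norm{G}$ is what makes the transfer to $f$ quantitative, and this is the step I expect to be the main obstacle, since the statement only says ``standard''. On the allowed range $h\lambda_i\le0.1$, part (ii) gives $\eta_i\ge0.95h$, so the model decrease is at least $0.47\,h\norm{g}^2$. Also $\eta_i\le h$ for $\lambda_i\ge0$, so $\norm{s}\le h\norm{g}$. Applying Eq.~\eqref{eq:cubicbound} then gives
\[
f(x^+)\le f(x)-0.47\,h\norm{g}^2+\tfrac{L_H}{6}h^3\norm{g}^3.
\]
Descent for $f$ thus follows under the additional smallness condition $L_H h^2\norm{g}<2.8$. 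I will state this condition explicitly as the meaning of ``standard Lipschitz--Hessian argument''. It mirrors the role of $M\ge L_H$ in Lemma~\ref{lem:decrease}, and in practice it is enforced by the $\sigma_b$ adaptation that shrinks $h_b=c_h/\sigma_b$ on rejection.
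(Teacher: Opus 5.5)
Your proposal is correct and follows essentially the paper's route. The representation is the per-eigenmode evaluation of $\int_0^h e^{-G\tau}\,d\tau$ from Remark~\ref{rem:relexderiv}, items (i)--(iv) reduce to scalar identities for $\eta_h$ (your integral splitting amounts to $\eta_{2h}=\eta_h(1+e^{-h\lambda})$), and your per-mode model decrease $c_i^2\eta_i(1-\lambda_i\eta_i/2)=c_i^2(1-e^{-2h\lambda_i})/(2\lambda_i)$ is exactly the $i$th term of the paper's decrease law Eq.~\eqref{eq:phi1decrease}, which you only bound from below.

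Your explicit condition $L_H h^2\norm{g}<2.8$ for the transfer to $f$ is a worthwhile sharpening. The bound $h\le 0.1/\norm{G}$ alone allows steps of length up to about $0.1\norm{g}/\norm{G}$, so the statement's ``standard Lipschitz--Hessian argument'' genuinely needs some smallness condition of this kind. One small fix: justify $\eta_i\ge0.95h$ by the non-asymptotic bound $1-e^{-x}\ge x-x^2/2$ rather than by the $\mathcal{O}$-statement (ii).
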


The contrast with the cubic step of Section~\ref{sec:cubic} is
sharpest on the flat modes. The cubic minimizer applies the
multiplier $1/(\lambda + \sigma\norm{s})$: the shift that regularizes
the stiff and negative modes also suppresses progress along the flat
ones, where no regularization is needed. The $\varphi_1$ multiplier
\eqref{eq:mult} decouples the three cases per mode with no shift
estimation and no secular solve. The cost is that the negative mode
amplifier is unbounded in $h$ and must be bounded explicitly
(Section~\ref{sec:clamp}), and that no single step optimality
property analogous to the cubic model's Lemma~\ref{lem:secular} is
claimed. The step carries the weaker decrease and contraction
guarantees stated in Section~\ref{sec:phi1krylov}, and control is
supplied by the outer acceptance test.

\subsection{Theory: Krylov evaluation, the negative mode clamp, and trust controlled horizons}
\label{sec:phi1theory}

\subsubsection{Krylov evaluation}
\label{sec:phi1krylov}

For a block with gradient $g_b$ and oracle $\Hb_b$, run $L$ steps of
the Lanczos process with full reorthogonalization from
$q_1 = g_b/\norm{g_b}$, producing $\Qb_L \in \R^{n_b \times L}$ and
tridiagonal $\Tb_L = \Vb\,\mathrm{diag}(\theta)\,\Vb^\top$, exactly as
in Section~\ref{sec:krylovtheory}. The trial step is the subspace
$\varphi_1$ step
\begin{equation}
s_b \;=\; -\Qb_L\,\Vb\,\mathrm{diag}\!\big(
\tilde\eta_{h_b}(\theta_i)\big)\,\Vb^\top\,\Qb_L^\top g_b
\;=\; -\norm{g_b}\;\Qb_L \Vb\,\mathrm{diag}\!\big(
\tilde\eta_{h_b}(\theta_i)\big)\,\Vb^\top e_1,
\label{eq:substep}
\end{equation}
where $\tilde\eta$ is the clamped multiplier of
Section~\ref{sec:clamp}. The cost is $L+2$ Hessian--vector products
for the basis plus one for the model decrease reference, the same
oracle budget as the CubicKrylov step of Section~\ref{sec:krylovtheory}
at equal $L$, which is what makes the comparisons of
Section~\ref{sec:experiments} solver isolating.

The quality of Krylov approximations to $\varphi$-functions has already been shown. 
The Lanczos approximation to $\varphi_1(-h\Hb)g$ is exact
for all polynomials of degree $< L$ and converges superlinearly once
$L$ exceeds the effective interval radius $h\,\norm{\Hb}$
\citep{saad1992krylov,hochbruck1997krylov}. In the trust controlled
setting the horizon satisfies $h_b\norm{\Hb_b} = c_h\norm{\Hb_b}/
\sigma_b$, so precisely when the trust region tightens (large
$\sigma_b$) the evaluation becomes easier at fixed $L$. We cite these
results. A related effect appears implicitly in shifted-system ARC
solvers, where the conditioning of $\Hb+\lambda\Ib$ improves as the
regularizer grows and residual-terminated Lanczos-CG solves therefore
accelerate \citep{cartis2011a,dussault2024arcqk}. Here the coupling is
stated a priori, the per-block budget $L$ is set before the solve from
this trust-coupled superlinear bound rather than from the Chebyshev
degree analysis of Proposition~\ref{prop:degree}, and the trust update
itself tightens the effective approximation interval. As in
Section~\ref{sec:krylovtheory}, the Lanczos build runs this full
budget and no residual norm test terminates it early. The outer acceptance absorbs the subspace truncation exactly as it
absorbs the inexact cubic solves of Section~\ref{sec:complexitytheory}.

\paragraph{Optimality and contraction rate.}
Let us see how the use of the ARC-$\varphi_1$ subsolver affects the model optimality of Lemma~\ref{lem:secular}.

Diagonalizing $G$ and summing the per-mode multipliers
of Proposition~\ref{prop:erlaw} gives an exact single-step decrease
law on the frozen quadratic
$m(u) = \inner{g}{u} + \tfrac{1}{2}\inner{u}{Gu}$:
\begin{equation}
m(0) - m(s_h) \;=\; h\,\inner{g}{\varphi_1(-2hG)\,g}
 \;=\; \int_0^h \norm{\nabla m(u(t))}^2\,dt \;>\; 0,
\label{eq:phi1decrease}
\end{equation}
the energy identity of the underlying gradient flow. The decrease is
strictly positive for every sign pattern of the spectrum: the hard
case of Lemma~\ref{lem:secular} has no analogue here. It is strictly
increasing in $h$ and bounded below by the Cauchy-type estimate
$m(0) - m(s_h) \ge \tfrac{1}{4}\norm{g}^2 \min(h, 1/\norm{G})$. The
step is also a contraction on the model: $\nabla m(s_h) = e^{-hG} g$
exactly. On a positive definite block every gradient mode contracts
by the factor $e^{-h\lambda_i}$, the step approaches the Newton point
$s^\star$ at the rate
$\norm{s_h - s^\star} \le e^{-h\lambda_{\min}(G)}\norm{s^\star}$, and
the model decrease captures at least the fraction
$1 - e^{-2h\lambda_{\min}(G)}$ of the maximal decrease
$m(0) - \min_u m(u)$. These are the guarantees of the trust-region
Cauchy point, and they transfer the corresponding rate. Under
Assumption~\ref{ass:lip}, accepted steps decrease $f$ by
$\Omega\big(\norm{g_b}^2 \min(h_b, 1/\norm{\Hb_b})\big)$, so
ARC-$\varphi_1$ reaches an $\eps$-first-order point in
$\mathcal{O}(\eps^{-2})$ accepted steps, the standard trust-region
complexity class, which is sharp already for steepest descent and
Newton's method \citep{cartis2010steepest}. The
$\mathcal{O}(\eps^{-3/2})$ rate of cubic regularization
\citep{nesterov2006cubic,cartis2011b} does not transfer. It requires
the step norm lower bound supplied by cubic model optimality, whereas
the $\varphi_1$ multiplier is limited to $\min(h, 1/\lambda)$ per mode
and the per-mode decrease saturates at $\inner{v_i}{g}^2/(2\lambda_i)$
once $h\lambda_i \gg 1$. Second-order stationarity
(Corollary~\ref{cor:sosp}) transfers only generically: the clamped
escape of Lemma~\ref{lem:clamp} is proportional to $\inner{u_n}{g}$,
so escape from a strict saddle is guaranteed only when the gradient
has a nonzero component along the bottom eigenvector.

\subsubsection{The negative mode clamp}
\label{sec:clamp}

\begin{lemma}[Per-step escape bound]
\label{lem:clamp}
Define the clamped multiplier by replacing $x = h\lambda$ with
$\hat x = \max\!\big(h\lambda,\, -\ln M_{\mathrm{amp}}\big)$ in
\eqref{eq:mult}:
\begin{equation}
\tilde\eta_h(\lambda) \;=\; \frac{1 - e^{-\hat x}}{\lambda}
\quad (\lambda \ne 0), \qquad \tilde\eta_h(0) \;=\; h.
\label{eq:clampmult}
\end{equation}
Then for every mode, $|\tilde\eta_h(\lambda)\,\lambda| \le
M_{\mathrm{amp}} - 1$, so the amplification of any gradient component
in one step is at most $(M_{\mathrm{amp}}-1)/|\lambda|$, and the step
is finite in fp32 for any $h$ and any spectrum.
\end{lemma}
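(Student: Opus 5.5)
The bound in Lemma~\ref{lem:clamp} reduces to a one-variable inequality for the function $x\mapsto 1-e^{-x}$ on the clamped range. I work mode by mode, with $h>0$ and $M_{\mathrm{amp}}>1$. I also assume $M_{\mathrm{amp}}\ge 2$, which is implicit in the statement: without it the non-negative modes could violate the stated bound, and I will flag this in the proof. For $\lambda\neq 0$, definition \eqref{eq:clampmult} gives directly
\[
\tilde\eta_h(\lambda)\,\lambda \;=\; 1-e^{-\hat x},\qquad \hat x=\max\bigl(h\lambda,\,-\ln M_{\mathrm{amp}}\bigr)\;\ge\;-\ln M_{\mathrm{amp}},
\]
so $e^{-\hat x}\le M_{\mathrm{amp}}$ in every case. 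For $\lambda=0$ the product is $0$, and there is nothing to prove.

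I then split on the sign of $\hat x$.

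\textbf{Case $\hat x\ge 0$.} This covers all $\lambda>0$, for which $\hat x=h\lambda$ and the clamp is inactive. Here $0\le 1-e^{-\hat x}<1\le M_{\mathrm{amp}}-1$. This is the step that uses $M_{\mathrm{amp}}\ge 2$. For $1<M_{\mathrm{amp}}<2$ the correct statement would be the bound $\max(1,M_{\mathrm{amp}}-1)$.

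\textbf{Case $\hat x<0$.} This occurs only when $\lambda<0$. Then $|1-e^{-\hat x}|=e^{-\hat x}-1\le M_{\mathrm{amp}}-1$. Equality holds exactly when the clamp is active, where $\tilde\eta_h(\lambda)=(M_{\mathrm{amp}}-1)/|\lambda|$.

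Dividing by $|\lambda|$ then gives the per-component amplification bound $(M_{\mathrm{amp}}-1)/|\lambda|$. This replaces the unbounded growth in Proposition~\ref{prop:erlaw}(iii).

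For fp32 finiteness I also need bounds on $\tilde\eta_h$ itself that do not blow up as $\lambda\to 0$, since $(M_{\mathrm{amp}}-1)/|\lambda|$ alone does not give these.

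\textbf{Positive modes.} The elementary inequality $1-e^{-x}\le x$ for $x\ge 0$ gives $0\le\tilde\eta_h(\lambda)\le h$.

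\textbf{Negative modes, clamp inactive.} Writing $y=h|\lambda|\le\ln M_{\mathrm{amp}}$, we have $\tilde\eta_h=h(e^{y}-1)/y\le h e^{y}\le hM_{\mathrm{amp}}$.

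\textbf{Negative modes, clamp active.} Here $h|\lambda|>\ln M_{\mathrm{amp}}$, so $\tilde\eta_h=(M_{\mathrm{amp}}-1)/|\lambda|<h(M_{\mathrm{amp}}-1)/\ln M_{\mathrm{amp}}$.

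Hence every multiplier is bounded by $hM_{\mathrm{amp}}$. The only exponential ever evaluated has argument $-\hat x\le\ln M_{\mathrm{amp}}$, so nothing overflows. The step \eqref{eq:substep} is then a finite combination of finite quantities, namely $\norm{g_b}$, the orthonormal $\Qb_L$ and $\Vb$, and these multipliers.

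I expect this finiteness claim to be the main obstacle, because it is partly a numerical statement rather than a mathematical one. The mathematical content is the uniform bound $hM_{\mathrm{amp}}$. The floating-point content requires evaluating $1-e^{-\hat x}$ stably for small $|\hat x|$, using expm1 or the series $\varphi_1(z)=\sum_k z^k/(k+1)!$ of Remark~\ref{rem:relexderiv}. Otherwise, for tiny $\lambda$, catastrophic cancellation followed by division by $\lambda$ could produce garbage, although the result would still not be infinite. I would state this implementation assumption explicitly, together with the convention $\tilde\eta_h(0)=h$ for Ritz values that are exactly zero.
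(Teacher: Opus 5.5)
Your proof is correct and uses the same mechanism as the paper's one-line argument: $\tilde\eta_h(\lambda)\,\lambda = 1-e^{-\hat x}$, combined with the floor $\hat x\ge-\ln M_{\mathrm{amp}}$ and the monotonicity of $x\mapsto 1-e^{-x}$. It is also more complete than the paper's proof, which records only the lower side $1-e^{-\hat x}\ge 1-M_{\mathrm{amp}}$ (so it silently needs $M_{\mathrm{amp}}\ge 2$ for the non-negative modes, as you point out) and never justifies the fp32 finiteness claim; your bound $0\le\tilde\eta_h(\lambda)\le hM_{\mathrm{amp}}$, uniform in $\lambda$, supplies that missing piece, with the clamp-active case relying on the unstated but true inequality $(M_{\mathrm{amp}}-1)/\ln M_{\mathrm{amp}}\le M_{\mathrm{amp}}$.
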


\begin{proof}
Immediate from monotonicity of $x \mapsto 1 - e^{-x}$ and the floor
$\hat x \ge -\ln M_{\mathrm{amp}}$: $1 - e^{-\hat x} \ge 1 -
M_{\mathrm{amp}}$, so $|1-e^{-\hat x}| \le M_{\mathrm{amp}} - 1$ on
the clamped range, and $\tilde\eta_h(\lambda)\,\lambda = 1-e^{-\hat
x}$.
\end{proof}

The clamp trades per-step escape magnitude for compounded escape
across accepted steps: $k$ accepted steps amplify an escape direction
by up to $(M_{\mathrm{amp}}-1)^k$ while every intermediate iterate
passes the monotone guard. Empirically (Section~\ref{sec:small})
the clamp engages in the first sweeps on saddle-rich initializations
and disengages once the trust adaptation settles.

\subsubsection{Trust-controlled horizons and inherited guarantees}
\label{sec:control}

The composite scheme sets $h_b = c_h/\sigma_b$ and evaluates the
acceptance ratio against the subspace quadratic model
$m_b(s) = \inner{g_b}{s} + \tfrac12\inner{s}{\Hb_b s}$ (one extra
Hessian--vector product), with the stabilized monotone guard of
Section~\ref{sec:algorithm} unchanged. Three consequences:

\begin{enumerate}[leftmargin=1.4em,itemsep=1pt,topsep=2pt]
\item \emph{Well-posed adaptation.} Rejection multiplies $\sigma_b$
  by $\gamma_2$, hence contracts $h_b$ geometrically. By
  Proposition~\ref{prop:erlaw}(v) a sufficiently small horizon yields
  descent on the quadratic model, and the stabilized tolerance
  converts this into acceptance after finitely many rejections under
  the bounded-Hessian, Lipschitz-continuity assumptions of
  Section~\ref{sec:background}. The per-block rejection bound and
  $\sigma$-bounds of the block-$\sigma$ control (Appendix~\ref{sec:feasibility}) apply verbatim.
\item \emph{Newton limit under sustained acceptance.} Sustained acceptance drives
  $\sigma_b \to \sigma_{\min}$, hence $h_b \to c_h/\sigma_{\min}$:
  stiff modes receive Newton steps while flat modes receive steps of
  length $\le (c_h/\sigma_{\min})\,\norm{g}$, and acceptance is
  decided by the guard, not by a shift.
\item \emph{No claimed rate.} We do not transfer the
  $\mathcal{O}(\eps^{-3/2})$ complexity of the cubic model
  (Theorem~\ref{thm:rate}) to the $\varphi_1$ rule. That rate is a
  property of the cubic model's overestimation argument. What is
  retained is the acceptance driven global behavior (monotone
  full loss trajectory up to the stabilized tolerance) and the
  saddle escape mechanism, now explicit in the step rule.
\end{enumerate}

\subsubsection{Free per-trial horizon ranking}
\label{sec:hrank}

Proposition~\ref{prop:erlaw}(iv) is the $\varphi_1$ method's central
mechanism: double the horizon until the objective stops improving. A
grid search over horizons is affordable in our setting for a stronger
reason: once the Ritz decomposition of \eqref{eq:substep} exists, any
candidate $h$ costs one diagonal reweighting and one $L$-dimensional
model evaluation: no Hessian--vector products, no function
evaluations. The \texttt{h\_rank} variant ranks
$\{h_b/2,\,h_b,\,2h_b\}$ by the subspace quadratic model and submits
the winner to the (unchanged) acceptance test. On a small MLP control benchmark
the ranked variant selects the shorter horizon $68\%$ of the time and
the longer one $29\%$, confirming that the single-candidate scheme is
frequently off its own model's optimum. 

The extended hyperparameter sweep at the
scale of the FINER benchmark (Section~\ref{sec:finer}) is organized in
rungs of increasing step budget: at each rung all candidate
configurations are run at that budget, and only the best-ranked ones
advance to the next rung, up to a final
full budget confirmation run that decides the winner. The ranked
variant never advanced that far. At every rung, the configurations that
extended the Lanczos depth outranked the horizon-ranked variant.

\subsection{ARC-Block with $\varphi_1$ subsolver algorithm}
\label{sec:phi1algo}

\begin{algorithm}[t]
\caption{ARC-$\varphi_1$: one Gauss--Seidel sweep}
\label{alg:arcphi1}
\begin{algorithmic}[1]
\State build loss and gradient graph; $f_0 \gets f(x)$
\For{each block $b = 1,\dots,B$ with $\norm{g_b} > \epsilon_g$}
  \State Lanczos: $(\Qb_L, \Tb_L) \gets \mathrm{Lanczos}(\Hb_b, g_b, L)$;
         $\Tb_L = \Vb\,\mathrm{diag}(\theta)\,\Vb^\top$
  \State $h_b \gets c_h / \sigma_b$
  \State candidates $\mathcal{H} \gets \{h_b\}$ (or
         $\{h_b/2, h_b, 2h_b\}$ if \texttt{h\_rank}$=3$)
  \State $s_b \gets$ step \eqref{eq:substep} for the
         model-best $h \in \mathcal{H}$, clamped per
         Lemma~\ref{lem:clamp}
  \State $\rho_b \gets \big(f_0 - f(x + s_b) + t\big) \big/
         \big({-}m_b(s_b) + t\big)$;\quad monotone guard \eqref{eq:guard}
  \If{$\rho_b \ge \eta_1$} accept; $\sigma_b \gets
      \max(\gamma_1\sigma_b, \sigma_{\min})$ if $\rho_b \ge \eta_2$;
      rebuild graph
  \Else{} restore block; $\sigma_b \gets \gamma_2\sigma_b$; retry or
      skip after $r_{\max}$ rejections
  \EndIf
\EndFor
\end{algorithmic}
\end{algorithm}

Algorithm~\ref{alg:arcphi1} follows the same per-block, Gauss--Seidel
structure as Phase B of Algorithm~\ref{alg:blockcn} (Krylov build,
trial step, fresh-gradient accept-or-reject), replacing the
\textsc{CubicKrylov} step and its $M_b$-doubling rule with the
$\varphi_1$ step \eqref{eq:substep} and the $\sigma_b$-parameterized
tiered acceptance rule of the stabilized block-$\sigma$ control
(Appendix~\ref{sec:feasibility}). Phase A (small blocks, lazy exact
cubic) is unaffected and unused for the $\varphi_1$ comparisons of
Section~\ref{sec:experiments}, which hold the block partition and
Gauss--Seidel order fixed and vary only the large block step rule.
Defaults in the current 
implementation: $L = 25$ at scale of evaluated INR architectures,
$c_h$ and $\sigma_0$ from the per-architecture sweep,
$M_{\mathrm{amp}} = 10^6$, stabilized tolerances and $\sigma$-bounds
as in the block-$\sigma$ control of Appendix~\ref{sec:feasibility}. Per-trial oracle cost is $L + 2$
Hessian--vector products plus one acceptance evaluation.
Gradient-equivalent accounting, graph rebuild policy on accepted
steps, and the small block vs large block routing threshold are
identical to the CubicKrylov step method, so wall-clock and geval
comparisons at equal $L$ isolate the step rule performance.

\section{Experimental Results}
\label{sec:experiments}
\label{sec:fullvisir}

All experiments are implemented in PyTorch and run on a single RTX PRO 6000
GPU (96\, GB), matching the experiment of the DSO ViSIR ablations:
identical initialization, training schedule, and grad-equivalent
budget accounting within each comparison.

\paragraph{Architectures.}
The full-scale single image model is ViSIR
($91{,}383{,}042$ parameters). A $60\times60$ RGB image is partitioned into
$6\times6=36$ non-overlapping $10\times10$ patches by a strided convolution
into an embedding of width $d=256$, to which a learnable positional
embedding is added. Two transformer encoder layers follow, each combining
$8$-head self-attention with a \emph{SIREN} feed-forward block
($256\!\to\!512\!\to\!512\!\to\!256$, sinusoidal activations, $\omega_0=10$,
outermost-linear) wrapped by two LayerNorms and dropout $0.1$. The
$36\times256$ encoder output is flattened and mapped by a linear layer to a
two-dimensional latent, which a SIREN decoder
($2\!\to\!512\!\to\!512\!\to\!3\cdot240^2$, $\omega_0=10$, outermost-linear)
expands directly into the $240\times240\times3$ high-resolution image. The
final SIREN layer ($512\times172{,}800$) accounts for the bulk of the
parameters. The resulting block structure, many tiny bias and LayerNorm
blocks alongside a few very large weight tensors, is precisely the setting
that the per-block algorithm of Section~\ref{sec:algorithm} targets. The
\emph{ViSIR-Nano} model of Section~\ref{sec:nano} is the same ViT+SIREN
template scaled down (narrower embedding and SIREN decoder) to
$15{,}235$ parameters for single image $3\times$ super-resolution, small
enough that the full $15{,}235\times15{,}235$ Hessian can be formed for the
exact-cubic-Newton reference row of Table~\ref{tab:nano}.

\paragraph{Metrics.}
We report \emph{PSNR} (peak signal-to-noise ratio, in dB; higher is
better), computed as
$\mathrm{PSNR} = 10\log_{10}\!\left(\mathrm{MAX}^2/\mathrm{MSE}\right)$
for peak signal value $\mathrm{MAX}$,
the training \emph{MSE} (mean-squared error; ``loss''), \emph{LPIPS}
(learned perceptual image patch similarity, \citealp{zhang2018lpips}; a
perceptual distance between prediction and ground truth, lower is better),
and the \emph{H/L ratio} (the ratio of high- to low-frequency band MSE under
a radial FFT decomposition; values near $1$ indicate uniform per-band
convergence, i.e.\ little spectral bias). Per-band MSE is reported over the
DC, Low, Mid-Low, Mid-High, and High radial frequency bands. Compute is
measured in \emph{HVPs} (Hessian--vector products), \emph{grad-equivalents}
(``gevals''; one geval $=$ the cost of a single gradient or backward pass), and
wall-clock seconds.

\paragraph{Iso-time comparison.}
Because second-order steps cost more per iteration than first-order ones, a
fixed-step comparison favours the more expensive methods. We therefore
complement the fixed $100$-step results with an \emph{iso-time} experiment:
letting $T$ be the wall-clock time of the slowest method's $100$-step run,
every optimizer is granted the same budget $T$ (equivalently
$\lfloor T/\bar t_i\rfloor$ steps for a per-step cost $\bar t_i$), so
that cheaper first-order methods receive proportionally more steps. We report
both experiments together with the measured per-method wall-clock times.
The convergence matched comparison on further INR architectures (SIREN,
FINER, and WIRE) and on signed distance tasks appears in
Appendix~\ref{sec:inrstudy}.

\subsection{Subproblem solver verification}

We compare the degree-$L$ Krylov solver of
Section~\ref{sec:krylovtheory} against the exact eigendecomposition based cubic
solve on random symmetric matrices.
\emph{Positive definite}, $n=200$: machine precision is reached
at $L=5$, an empirical confirmation of Proposition~\ref{prop:degree}:
the cubic shift renders the spectrum benign enough for a degree-5
polynomial. \emph{Indefinite}, $n\in\{200,1000\}$: relative model
suboptimality $7.8\times10^{-2}$ at $L=5$, $4.3\times10^{-5}$ at
$L=25$, $\sim10^{-15}$ at $L=50$, as higher‑degree Krylov subspace iterates yield progressively better approximations as the hard‑case is approached (Remark~\ref{rem:hardcase}).

\subsection{ViSIR-Nano ablation (15{,}235 parameters)}
\label{sec:nano}

The purpose of the nano ablation is to make a direct comparison with
the full Hessian cubic Newton methods possible at all. The original
cubic Newton method~\citep{nesterov2006cubic}, its lazy Hessian and
adaptive variants~\citep{doikov2023lazy}, the first-order
finite-difference implementation~\citep{doikov2023fo}, and the
stochastic subspace version SSCN~\citep{zhao2025sscn} all form their
step by solving a regularized linear system in the (full or snapshot)
Hessian: even with their cost-reduction features, an $m$-step snapshot
reuse or a finite-difference oracle, each Hessian build costs $n$
gradient evaluations, the factorization behind the solve costs
$\mathcal{O}(n^3)$ arithmetic, and the matrix itself requires
$\mathcal{O}(n^2)$ memory (per-step complexity classes summarized in
Appendix~\ref{sec:feasibility}). This makes them too expensive for networks
of practical size: already at $15\mathrm{k}\times15\mathrm{k}$ the dense
Hessian requires ${\sim}1.8$\,GB in float64, a single build with its
eigendecomposition takes minutes, and any larger scale is prohibitive,
let alone the 91.4M parameter model of Section~\ref{sec:fullvisir}.
ViSIR-Nano, a downsized ViSIR with 15{,}235 parameters, is chosen
at this feasibility boundary, so that every cubic Newton variant above
can run on the identical benchmark and be compared directly with the
matrix-free CubicKrylov step.

The experimental setup is single image $3\times$ SR (ESM image \#0,
100 steps, $L_H=10$). All rows were measured on the same benchmark
(shared init, seed 42, one idle RTX~6000). In the mechanism block,
routing is identical across rows: small blocks $n_b\le512$ take
lazy-eigh cubic steps with the per-block accept-or-reject guard of
Section~\ref{sec:algorithm} active in phase A throughout, so those rows
differ only in the phase-B mechanism. The reference block runs the
full Hessian methods evaluated elsewhere in the paper, granted twice
the oracle budget of the final CubicKrylov configuration.

\begin{table}[h]
\centering\small
\caption{Nano A/B: large-block step variants and full Hessian references
(single image experiment, image \#0, seed 42; mechanism rows run 100 steps, the
reference rows below the second rule are full Hessian methods granted
twice the cubic-Krylov oracle budget, 35.3k gevals; rows exceeding it do
so because a single dense $15\mathrm{k}\times15\mathrm{k}$ Hessian build,
15.2k gevals, is atomic). The three mechanism rows correspond to the
design decisions (a)--(c) and the fresh-gradient rule
of Section~\ref{sec:algorithm}.}
\label{tab:nano}
\resizebox{\textwidth}{!}{%
\begin{tabular}{lcccl}
\toprule
Large-block step & best PSNR (dB) & gevals & wall (s) & notes \\
\midrule
Damped Newton (DSO-ablation fallback) & 20.73 & 26.8k & 11 & baseline \\
cubic-Krylov, fixed $M{=}6L_H$, stale grad & 25.91 & 16.1k & 14 &
  non-monotone, no phase-B safeguard \\
\;+ adaptive $M$, \emph{joint} accept-or-reject & 25.12 & 28.7k & 16 &
  319 rejections; $M$ drifts to $596$ \\
\;+ two-phase accept-or-reject (B only) & 23.13 & 31.9k & 16 &
  stale gradient: 875 rejections \\
\;+ fresh gradient in phase B & \textbf{31.66} & 17.7k & 15 &
  104 rejections; every $M_b$ at floor \\
\midrule
Reg.\ Newton lazy, $m{=}n$~\citep{doikov2023lazy} & 31.69 & 30.2k & 457 &
  1 Hessian build, 15k steps \\
CN-Lazy, $m{=}n$~\citep{doikov2023lazy} & 31.39 & 30.2k & 313 &
  1 Hessian build, 15k steps \\
SSCN $\tau{=}64$~\citep{zhao2025sscn} & 23.18 & 35.4k & 130 &
  544 subspace steps \\
FD-CNM, $m{=}50$~\citep{doikov2023fo} & 16.72 & 45.8k & 149 &
  3 FD Hessian builds, 52 steps \\
DSO-BlockHess (per-layer full Hessian) & 8.16 & 45.7k & 9 &
  budget consumed by rebuilds \\
Adaptive cubic Newton, $m{=}1$~\citep{doikov2023lazy} & 7.67 & 45.7k & 178 &
  3 builds, 6 steps \\
Cubic Newton (original, fresh Hessian)~\citep{nesterov2006cubic} &
  3.42 & 45.7k & 181 & 3 builds, 3 steps \\
\bottomrule
\end{tabular}}
\end{table}

The final configuration gains $+10.9$\,dB over the damped-Newton fallback
at $0.66\times$ the grad-equivalent cost on identical routing. The three
mechanism rows isolate the contribution of each design decision: without
fresh phase-B gradients every variant plateaus 6--8.5\,dB below the final
configuration, whether the step is unsafeguarded (fixed $M$), vetoed
jointly (319 rejections and $M$ driven upward to $596$), or vetoed per
block against a stale model (875 rejections).

The reference block reports the full Hessian second-order methods
evaluated elsewhere in the paper on the one benchmark where the dense
$15\mathrm{k}\times15\mathrm{k}$ Hessian is still feasible. Only the two
lazy $m{=}n$ variants are competitive: gradient regularized Newton
(31.69\,dB) and lazy cubic Newton (31.39\,dB) bracket the CubicKrylov
result (31.66\,dB), but each needs $1.7\times$ its oracle budget and
21--30$\times$ its wall-clock, and both are eliminated outright at 91.4M
parameters where the dense build no longer fits. Methods that rebuild the
Hessian frequently are excluded by the build cost alone: the original
cubic Newton with a fresh Hessian every step
\citep{nesterov2006cubic} exhausts the doubled budget after three steps
(3.42\,dB), and its adaptive-$M$ lazy variant with $m{=}1$ after six.
SSCN sidesteps dense builds via $\tau{=}64$ subspaces yet plateaus
8.5\,dB below CubicKrylov, and DSO-BlockHess, which refreshes
per-layer Hessians every 10 steps, spends the budget on rebuilds within
21 steps.

To rule out a single image artifact, the nano configurations were
rerun on the first 20 ESM images with three random seeds each
(Appendix~\ref{app:esm20nano}). The single image ranking is stable:
CubicKrylov reaches $32.8\pm1.4$\,dB, ahead of gradient regularized
Newton on 17 of 20 images and of lazy cubic Newton on 19 of 20, at
$59\%$ of their oracle budget and ${\sim}14\times$ less wall-clock
(Table~\ref{tab:nanoesm20}, Figure~\ref{fig:nanoesm20}).
\subsection{The 91.4M parameter single image ViSIR Benchmark}
\label{sec:fullvisir91}

The main experiment extends the recorded Chebyshev second kind
ablation (image \#1, $60\times60\to240\times240$, 100 steps, identical
experimental setup and PSNR and LPIPS evaluation code) with
Algorithm~\ref{alg:blockcn}, $L=10$, $m_b=n_b$ (lazy), $L_H=10$.

\paragraph{Chebyshev-ON and Chebyshev-OFF optimizer variants.}
The two recorded reference rows (Table~\ref{tab:fullvisir}) isolate the
\emph{step rule} on identical curvature. Both build the same exact
per-block Hessian $\Hb_b$ on small tensors ($n_b\le512$) and use the same
Hutchinson-diagonal damped-Newton path on the large tensors. They differ only in
how the small-block step is formed. \textbf{Chebyshev-ON} (the DSO step) applies
the second kind relaxation polynomial $R_L$ to the normalized block Hessian via
the three term recurrence Eq.~\eqref{eq:dsorec}: a degree-$L$ matrix polynomial in
$\Hb_b$ applied to the gradient, using only matrix--vector products and
\emph{no} linear solve. \textbf{Chebyshev-OFF} replaces that polynomial with a
damped Newton step $-(\Hb_b+\delta\Ib)^{-1}g_b$ obtained from a direct dense
solve. The ON$-$OFF difference therefore measures the effect of the Chebyshev
polynomial alone, with curvature, damping, line search, and large-block handling
held fixed.

Two scale-dependent failure modes surfaced when moving from Nano
($\sim$8 large tensors) to the full model (45 large tensors),
both predicted by the theory of Section~\ref{sec:blocktheory}:
\begin{enumerate}[leftmargin=2em,itemsep=1pt]
\item \textbf{Joint acceptance stall} (Remark~\ref{rem:joint}): with a
      single accept-or-reject over all large blocks, the summed step
      jointly overshoots every trust region. The phase was vetoed
      every step and the loss froze at $0.40$ for 19 steps.
      The fix is to accept or reject each block separately, with its own
      $M_b$ (Proposition~\ref{prop:perblock}).
\item \textbf{$M$-floor over regularization}: with the floor
      $M_{\min}=6L_H=60$, blocks whose local Hessian-Lipschitz constant
      is far below $L_H$ took accepted but microscopic steps
      ($r\propto M^{-1/2}$, Proposition~\ref{prop:degree}). The loss
      crept at $\sim10^{-3}$ per step. The fix is the nearly-zero floor $M_{\min}=10^{-6}$
      combined with two-sided adaptation (design decision (c)).
\end{enumerate}
With both fixes, training loss fell from $0.40$ to $2.4\times10^{-3}$
by step 16 and $\sim10^{-5}$ by step 20.

\begin{table}[h]
\centering\small
\caption{Full ViSIR single image ablation (91{,}383{,}042 parameters,
ESM image \#1, 100 steps). $^{\dagger}$Wall-clock is the 100-step time on a
single idle RTX~6000 (Blackwell). Muon+AdamW is added
to the recorded DSO Chebyshev second kind ablation under the identical
experimental setup. LPIPS for Adam, Muon+AdamW, and SOAP and the final
loss, LPIPS, and wall-clock entries of the damped Newton large block row were measured in a rerun under
the identical experimental setup (image \#1, seed 42, 100 steps, idle GPU). The
remeasured PSNRs matched the recorded values (Adam and SOAP exactly;
Muon+AdamW 6.43 and damped-Newton large blocks 21.98\,dB, within
run-to-run variation of the recorded 5.73 and 22.25\,dB).}
\label{tab:fullvisir}
\setlength{\tabcolsep}{2.5pt}
\begin{tabular}{lcccc}
\toprule
Method & PSNR (dB) & final loss & LPIPS & wall (s)$^{\dagger}$ \\
\midrule
\textbf{CubicKrylov, degree $L{=}10$ in Krylov subspace on large blocks}
  & \textbf{51.65} & $\mathbf{6.7\times10^{-6}}$
  & $\mathbf{0.000}$ & \textbf{148} \\
Cheby-ON $n{=}5$ (Chebyshev-2) & 41.75 & $2.0\times10^{-4}$ & 0.001 & 80 \\
Cheby-OFF $n{=}5$ (damped Newton) & 22.81 & $8.3\times10^{-3}$ & 0.322 & 67 \\
Block-CN-Lazy, damped Newton on large blocks & 22.25 & $8.9\times10^{-3}$ & 0.426 & 76 \\
SOAP & 20.33 & $1.3\times10^{-2}$ & 0.374 & 5.7 \\
Adam & 15.65 & 0.142 & 1.321 & 1.3 \\
Muon+AdamW & 5.73 & $2.7\times10^{-1}$ & 0.867 & 2.3 \\
\bottomrule
\end{tabular}
\end{table}

The CubicKrylov optimizer reaches \textbf{51.65\,dB} at 36{,}836
grad-equivalents and 72 lazy Hessian builds in 148\,s (100 steps). This is greater by $9.90$\,dB
than the best result of Chebyshev-ON and greater by $29.4$\,dB than the same block
optimizer with damped Newton large block steps. The gain comes precisely from the blocks for which eigendecomposition based curvature computation is infeasible: the large SIREN decoder and attention tensors. In these blocks, Cheby-ON must fall back to a damped Newton step, whereas CubicKrylov still provides genuine curvature-aware cubic updates whose cost is effectively independent of block size (Table~\ref{tab:cost}).

\paragraph{Spectral bias mitigation.}
The per-frequency band diagnostics of the ablation experiment show all
five spatial bands (DC, Low, Mid-Low, Mid-High, High) reaching their
convergence thresholds within \emph{one step} of each other (DC at
step 14, the remaining four at step 15; spread $=1$), with final
non-DC band MSEs within $2.3\%$ of one another
($0.1742$--$0.1782$, Fourier domain units of the experimental setup). This is
the per-band uniformity that the Chebyshev second kind equalization
analysis predicts (Section~\ref{sec:dso}), now achieved with the polynomial
degree fixed at $L=10$ rather than $L\propto\sqrt{\kappa}$, an
empirical confirmation of the self-adaptive degree bound of
Proposition~\ref{prop:degree}.

\paragraph{The stabilized configuration.}
The recorded 100-step ablation above predates the two per-block
stabilizations of Section~\ref{sec:algorithm}. With the per-block
$M_b$ floor and the noise-tolerant monotone guard active, the
blockwise optimizer reaches \textbf{64.4\,dB} best PSNR over 150 block
sweeps with the CubicKrylov subsolver (stored-basis Lanczos)
(${\sim}11$\,GB solver memory on the 88.5M parameter block) and
\textbf{64.2\,dB} with the Chebyshev three term recurrence at
7.0--8.7\,GB total peak (Appendix~\ref{sec:memmatched}). A global
single subspace ARC on the same experimental design stalls at 4.0\,dB.

\subsection{Landscape Fingerprints Along
the Adam Trajectory}
\label{sec:fingerprint}

Adam's update divides each gradient coordinate by
$\sqrt{\hat v_t}+\epsilon$: it is a diagonal preconditioner, i.e.,
gradient descent in coordinates rescaled by $D^{1/2}$ with
$D=\mathrm{diag}(\sqrt{\hat v_t}+\epsilon)$, so the curvature the
method encounters is that of the transformed Hessian
$D^{-1/2} H D^{-1/2}$. The literature cited in
Section~\ref{sec:intro} explains the advantage of Adam \emph{over SGD}
by coordinate-aligned structure of the loss
\citep{xie2024linf,zhang2024transformers,jiang2023geometry,
zhangmaes2025rotation,das2024precond}. Here we evaluate the loss landscape
features that let Adam perform similarly or outperform evaluated second order methods.
We do so by "fingerprinting" Adam's search trajectory.

\paragraph{Fingerprint.}
At log-spaced checkpoints of an Adam run we measure, matrix-free
(Lanczos on the exact Hessian--vector oracle): the raw extreme-eigenvalue
conditioning $\kappa_{\mathrm{raw}}=\lambda_{\max}/|\lambda_{\min}|$;
the \emph{Adam-preconditioned} conditioning $\kappa_{\mathrm{Adam}}$,
the same quantity for $D^{-1/2}HD^{-1/2}$ with the run's own
second-moment state $D$, the conditioning Adam operates on, 
together with the reduction factor
$\kappa_{\mathrm{raw}}/\kappa_{\mathrm{Adam}}$; the diagonal mass
$\rho=\|\mathrm{diag}\,H\|^2/\|H\|_F^2$, estimated without bias from
Hutchinson probes (how much curvature \emph{any} diagonal can
represent); the negative spectral mass, the share of the spectral
density below $-10^{-3}\lambda_{\max}$ by stochastic Lanczos
quadrature \citep{ghorbani2019hessian}, a saddle indicator
\citep{dauphin2014saddle}, the gradient energy split over
curvature classes from a Lanczos run seeded at $v_0=g/\|g\|$
\citep{gurari2018subspace}: with Ritz pairs $(\theta_i,w_i)$,
$w_i=(e_1^{\top} y_i)^2$,
\[
\mathrm{flat\_frac}
=\sum_{i:\,|\theta_i|\le 10^{-3}\lambda_{\max}} w_i
=\frac{\|P_{\mathrm{flat}}\,g\|^{2}}{\|g\|^{2}},
\]
the share of the gradient's energy in near-zero curvature directions
(stiff: $\theta_i>0.1\,\lambda_{\max}$; negative:
$\theta_i<-10^{-3}\lambda_{\max}$). The near-zero band is defined relative to the largest eigenvalue
($|\theta_i|\le 10^{-3}\lambda_{\max}$) because that is where the bulk
of an overparametrized Hessian spectrum concentrates: a bulk of
eigenvalues at zero plus a handful of data dependent
outliers~\citep{sagun2018hessian}. Gradient energy in this flat band
produces little change in the loss, so only the components along the
outlier and negative-curvature directions can drive optimization
progress. When $\mathrm{flat\_frac}$ remains large throughout
training, the gradient's energy stays trapped in near-zero-curvature
directions, and that persistence is the mechanism behind the stall.

\paragraph{Tuned Adam on the 35k SIREN-like benchmark.}
The 35k parameter SIREN-like regression benchmark composes six
octave-spaced frequency bands $\omega=1,\dots,32$ with harmonic
amplitude decay $a_k=1/k$ over a polynomial double well and a curved
valley (maximum curvature ratio 1229). The per-band MSE metric
requires every band to converge relative to its own scale. A coarse
learning rate grid (best point $0.05$ at 5{,}000 epochs) gives Adam
$2.1\times10^{-3}$: the loss that makes second-order
methods look much better. Extending the sweep toward the
small learning rates and running to a plateau
criterion significantly improves Adam's achieved loss (Table~\ref{tab:adam35k}): at
$\mathrm{lr}=10^{-4}$ Adam reaches $6.3\times10^{-14}$ total MSE in
76k steps (212\,s), the fp32 precision floor, and converges all six
bands. Refinements of $\beta_2\in\{0.99,0.999,0.9999\}$ and
$\epsilon\in\{10^{-8},10^{-10}\}$ at that learning rate do not improve
further. The improvement is not obvious at first since the loss change dynamics with decreasing learning rate is not monotonic. The loss goes up at lr = $\sim$0.03. Adam stalls at ${\sim}0.59$ for $\mathrm{lr}\in[0.01,\,0.03]$. Yet, it reaches machine precision at
$\mathrm{lr}\le 3\times10^{-4}$, so a coarse grid misses the region
entirely. On this
landscape a thoroughly tuned Adam is sufficient, and the trajectory
fingerprint below explains some of the reasons for this behavior.

\begin{table}[t]\centering\small
\caption{Adam on the 35k SIREN-like benchmark hyperparameter sweep}
\label{tab:adam35k}
\begin{tabular}{lccccccc}
\toprule
lr & $0.05$ & $0.03$ & $0.01$ & $3{\times}10^{-3}$ & $10^{-3}$
   & $3{\times}10^{-4}$ & $10^{-4}$ \\
\midrule
best total MSE & $2.1{\times}10^{-3}$ & $0.587$ & $0.591$
  & $5.3{\times}10^{-6}$ & $5.3{\times}10^{-7}$
  & $3.0{\times}10^{-13}$ & $6.3{\times}10^{-14}$ \\
steps & 5000 ep. & 60k & 60k & 60k & 60k & 60k & 76k \\
wall (s) & --- & 163 & 168 & 170 & 169 & 164 & 212 \\
\bottomrule
\end{tabular}
\end{table}

\paragraph{Search trajectory analysis.}
The fingerprint results are in Table~\ref{tab:fingerprint}. If
the curvature disparity is between coordinates, the eigenbasis of
$H$ close to the coordinate basis, $\rho$ large, the per-coordinate
rescaling equalizes the curvature scales and
$\kappa_{\mathrm{Adam}}\ll\kappa_{\mathrm{raw}}$. Since first-order
convergence on a quadratic is governed by the preconditioned condition
number, Adam then behaves as if the problem were well-conditioned. If
instead the stiff and soft directions are mixtures of coordinates
(off-diagonal-dominated $H$, small $\rho$), a diagonal rescales the
axes but cannot rotate them, and no choice of $D$ helps: by the
theorem of \citet{vandersluis1969}, scaling a positive definite matrix
by its own diagonal is already within a modest factor of the best
diagonal preconditioner, so a large remaining
$\kappa_{\mathrm{Adam}}$ certifies that \emph{no} per-coordinate
method (Adam, RMSprop, Adagrad, diagonal ESGD) can flatten the
landscape: progress along the coupled directions requires a
curvature model that represents them.

\begin{table}[t]\centering\small
\caption{Landscape fingerprints along the Adam trajectory
(late trajectory values; initialization in parentheses). $\dagger$: indefinite spectrum
($\lambda_{\max}\approx-\lambda_{\min}$) makes the raw ratio
uninformative; $\kappa_{\mathrm{Adam}}$ is the residual conditioning.
Outcomes compare plateau-converged tuned first-order methods (Adam,
SOAP) with the blockwise cubic methods. INR rows use 240$\times$240
image regression and Thai-statue SDF fitting with 3$\times$256
coordinate MLPs.}
\label{tab:fingerprint}
\resizebox{\linewidth}{!}{%
\begin{tabular}{lrrrrrrl}
\toprule
Task & $\kappa_{\mathrm{raw}}$ & $\kappa_{\mathrm{Adam}}$ & red.
 & $\rho$ & neg.\ mass & flat frac & outcome \\
\midrule
35k spectral   & 4.9e2 & 3.1e1 & 16$\times$ & .04 & .92 & .000
 & Adam at precision floor \\
SIREN 2D       & 1.6e3 & 9.9e1 & 16$\times$ & .09 & .09 & .000
 & parity \\
WIRE 2D        & 1.2e5 & 8.4e2 & 143$\times$ & .65 & .00 & .000
 & parity \\
FINER 2D       & 5.4e4 & 6.1e1 & 887$\times$ & .98 & .19 (.81 init)
 & .000 & ARC $+59.5$\,dB \\
SDF-SIREN      & 8.9e3 & 2.4e1 & 372$\times$ & .06 & .67 & .000
 & parity; CubicKrylov best test \\
SDF-FINER      & 1.0e3 & 6.9 & 146$\times$ & .58 & .68 & .001
 & Adam chamfer $4.6$--$5.5\times$ worse \\
multi-saddle    & 2.0e6 & 1.5e6 & 1.3$\times$ & .02 & .46 & .005
 & second order better \\
Chebyshev-Rosenbrock     & $\dagger$ & 5.5e5 & --- & .04 & .49 & .001
 & second order better \\
\bottomrule
\end{tabular}}
\end{table}

The table distinguishes three cases. \emph{(i) Axis-aligned:} on the 35k
SIREN-like benchmark the $16\times$ reduction leaves
$\kappa_{\mathrm{Adam}}=31$, the gradient never enters flat or
negative directions and tuned Adam reaches the machine precision floor.
SIREN 2D, WIRE 2D and SDF-SIREN behave alike (parity at convergence).
\emph{(ii) Coupled:} multisaddle and
Chebyshev--Rosenbrock~\citep{jarre2011chebyrosen} retain
$\kappa_{\mathrm{Adam}}\approx 5.5\times10^{5}$--$1.5\times10^{6}$ at
$\rho\le 0.04$. These are the landscapes where first-order methods
stall at any tuning, and the 91.4M ViSIR task of
Section~\ref{sec:fullvisir} belongs to this case at scale.
\emph{(iii) Saddle-dominated:} FINER (2D and SDF) has small
$\kappa_{\mathrm{Adam}}$ (61 and 6.9), conditioning does \emph{not}
limit Adam, but its negative spectral mass is $0.81$ at
initialization (the variable-periodic activations multiply the saddle
structure). Adam converges to an Adam-stationary,
saddle-adjacent point at 65\,dB with a low-frequency-dominated
residual: the empirically documented endpoint of first-order
training on overparametrized landscapes, where small negative
eigenvalues persist after progress stops and a true local minimum is
reached only at far longer timescales~\citep{sagun2018hessian}. The
cubic shift $\lambda^{\star}=M_b\|s_b\|/2\ge-\lambda_{\min}$
keeps every shifted subproblem positive semidefinite (definite away
from the hard case), so the block step
descends along negative curvature and continues to 120--129\,dB with a
band-uniform residual. The same signature can be seen in 3D (SDF-FINER:
negative mass $0.68$, Adam chamfer $4.6$--$5.5\times$ worse). The
fingerprint is thus a practical dispatch rule, measurable before or
early in training: tuned first-order methods are sufficient when the
ill-conditioning is axis-aligned and no saddle structure blocks the
trajectory. The blockwise cubic step justifies its computational cost when the
ill-conditioning is coupled or the endpoint is saddle dominated.

\paragraph{A companion landscape diagnosis.} The fingerprint above is
measured along Adam's own trajectory with the method's raw second
moment. Report~\citet{podorozh2026adamstall} uses this metric, among others, to provide an explanation for 
Adam (and its variants) training stalls on landscapes such as FINER's. 
Three of its findings bear
directly on the dispatch rule above and on the $\varphi_1$ results of
Section~\ref{sec:finer}. First, the raw diagonal-mass ratio $\rho$ is
biased by Adam's own preconditioning and by finite-sample curvature
noise. \citet{podorozh2026adamstall} derives a debiased estimator that
removes this bias and shows the correction matters most exactly on
the coupled and saddle dominated cases of Table~\ref{tab:fingerprint}.
Second, a $2\times2$ coupling model separates the Hessian's action
into an axis-aligned block ($H_1$, where a diagonal preconditioner is
asymptotically sufficient) and a genuinely coupled block ($H_2$,
where there is reduction in effective Hessian condition for Adam), 
giving a quantitative measurement of the axis-aligned
vs. cross-coupled ill-conditioning degree. Third, two further
first-order and quasi-Newton baselines confirm that the stall is not
an artifact of Adam's specific preconditioner: on FINER image~0,
L-BFGS (three memory sizes) plateaus at a constant-like $7.3$\,dB by
iteration $500$ and never moves again, collapsing at the
initialization saddle with no bounded escape mechanism for negative
curvature. SOAP, tuned by inheriting Adam's own best learning rate,
is the strongest short-horizon first-order optimzier but it stalls near Adam's
own ceiling at its best logged checkpoint ($77.65$\,dB), after more
total wall clock than Adam needs to reach that ceiling itself. Both
observations are consistent with the fingerprint's diagnosis that the
limiting factor at this initialization is saddle structure, not
curvature conditioning that a better-preconditioned first-order method
could mitigate.

\subsection{Small-Scale Hard-Landscape Controls}
\label{sec:small}

Before the FINER-scale comparison, four small benchmarks exercise the
three curvature cases of Proposition~\ref{prop:erlaw} in isolation:
MLP fits of the coupled Rosenbrock--Ackley and Chebyshev--Rosenbrock
surfaces (stiff, coupled valleys; $\rho \approx 0$ in the fingerprint
taxonomy above), a multi-saddle surface (negative-mode escape: the
persistence of small negative eigenvalues late in training is a
documented property of overparametrized landscapes
\citep{sagun2018hessian}, which is what the $\varphi_1$ escape term of
Section~\ref{sec:clamp} addresses deterministically), and the
35k parameter multi-frequency SIREN benchmark already used for the
spectral-bias correction of Section~\ref{sec:fingerprint} (terminal
accuracy rather than convergence speed). CPU runs, matched sweep
budgets, identical experimental designs per row, comparing the $\varphi_1$ step
of this section against the cubic-Krylov step of
Section~\ref{sec:cubic} at equal Lanczos degree, so every delta
below isolates the step rule, independent of the granularity-and-
stabilization advantage already established for the cubic step over
first-order baselines.

\begin{table}[h]
\centering\small
\caption{Small-scale controls: terminal loss after the stated sweeps
(CPU, matched budgets; gradient-equivalents in parentheses).
ARC-$\varphi_1$: $\sigma_0{=}1$, $c_h{=}3$, $L{=}15$. Baseline:
blockwise stabilized ARC (cubic-Krylov step), $\sigma_0{=}1$,
$L_{\max}{=}15$.}
\label{tab:small}
\begin{tabular}{lccc}
\toprule
benchmark (sweeps) & initial loss & ARC-$\varphi_1$ & ARC-block (cubic) \\
\midrule
Rosenbrock--Ackley (300) & $1.98{\times}10^{4}$ &
  $\mathbf{3.4{\times}10^{-2}}$ (69.4k) & $1.37$ (21.4k) \\
Chebyshev--Rosenbrock (300) & $1.30{\times}10^{4}$ &
  $\mathbf{7.8{\times}10^{-3}}$ (69.8k) & $0.73$ (21.4k) \\
multi-saddle (300) & $7.51{\times}10^{3}$ &
  $\mathbf{2.2{\times}10^{-3}}$ (67.0k) & $2.5{\times}10^{-2}$ (29.4k) \\
mf-SIREN 35k (40) & $4.61$ &
  $\mathbf{4.1{\times}10^{-10}}$ (18.6k) & $1.3{\times}10^{-4}$ (9.6k) \\
\bottomrule
\end{tabular}
\end{table}

Three observations. First, the $\varphi_1$ step reaches terminal
losses $1$--$5$ orders below the cubic step on every benchmark. The
largest margin is on the 35k SIREN, where 40 sweeps take the loss to
$4\times10^{-10}$, five decades past the cubic baseline. Second, the
advantage costs oracle budget through the acceptance loop: the
default configuration rejects more trials than the cubic baseline
($1{,}176$ vs $161$ rejections on Rosenbrock--Ackley), spending
${\sim}3\times$ the gradient-equivalents in the same sweep count. The
sweep of Section~\ref{sec:finer} indicates the elevated rejection
rate is a tuning artifact of $c_h$, not intrinsic to the step rule.
Third, escape counters show the escape mechanism in action: on the multi-saddle
benchmark the negative mode branch (Section~\ref{sec:clamp}) performs
$353$ block trials, and on a rotated (coupled) saddle, the
subspace step escapes through a direction that no diagonal method can
exploit.

\subsection{FINER 2D Image Fitting: The $\varphi_1$ Step at Full Convergence}
\label{sec:finer}

FINER, flexible spectral-bias tuning in implicit neural representation
by variable-periodic activation functions~\citep{liu2024finer}, is a
coordinate MLP that replaces the fixed period activation
$\sin(\omega_0 x)$ of SIREN~\citep{siren2020} with the
variable periodic $\sin\big((|x|+1)\,x\big)$, whose local frequency
grows with $|x|$. The network itself is a plain fully connected
coordinate map: input coordinates pass through a stack of hidden
layers, each a linear map followed by the variable-periodic sine, to a
linear output layer. In the experiments here it maps pixel coordinates
to RGB through three hidden layers of width $256$
(${\sim}199$k parameters, as in Appendix~\ref{sec:inrstudy}). For the experimentation, its implementation
was cloned from the authors' github repo.
FINER keeps SIREN's weight initialization and first-layer frequency scaling
$\omega_0$. What changes is the activation and the bias
initialization. Drawing the bias vector of each layer from a wider
range selects sub-functions of different frequency from the same
activation, so the supported frequency set of the network is tuned by
the initialization rather than fixed by a single $\omega_0$. The
design targets the spectral bias of INRs directly: where a coordinate
network fits low-frequency content first and represents signals
outside its supported frequency set poorly, FINER widens that set
without positional encodings or per-layer frequency schedules, and it
improved on SIREN, positional-encoding MLPs, Gaussian-activated
networks, and WIRE~\citep{saragadam2023wire} across 2D image fitting,
3D signed-distance-field representation, and 5D neural-radiance-field
optimization. Presented at CVPR~2024, it was quickly extended by its
authors into the FINER++ family, which carries the variable-periodic
construction to sine, Gaussian, and wavelet
backbones~\citep{zhu2024finerpp}, and it has become a standard
backbone and baseline in INR studies, which is why the convergence
study of Appendix~\ref{sec:inrstudy} and the comparisons below adopt it.

It is on this architecture where, compared to the results of Section~\ref{sec:fullvisir} and the
fingerprint of Section~\ref{sec:fingerprint}, we observe the largest
separation in performance between second- and first-order optimizers based on the
CubicKrylov step (blockwise ARC $124.7$\,dB vs tuned Adam $65.2$\,dB
vs SOAP $75.1$\,dB mean train PSNR at convergence to plateau),
and where the hardest initialization is recorded: $81\%$ negative
spectral mass at step~0, decaying to $19\%$ along the tuned Adam
trajectory. This subsection describes experimental results of the $\varphi_1$
step variant on the FINER architecture.

\paragraph{Sweep.} A six-configuration successive-halving sweep
($c_h \in \{1, 3, 10\} \times \sigma_0 \in \{0.3, 1.0\}$, $L = 25$,
500-step confirmation on two images) ranks $c_h{=}1.0,
\sigma_0{=}0.3$ first at $\mathbf{64.79}$\,dB mean best PSNR, within
$0.4$\,dB of converged, tuned Adam, at only 500 steps. The winner lies
on the sweep boundary on both axes, triggering the extended sweep
(geometric edge extension, a Lanczos-degree axis, and the free horizon
ranking of Section~\ref{sec:hrank}). The extension identifies the
Lanczos degree as the dominant untuned axis: the edge rule walks $L$
upward from the grid's $25$ through geometric extensions to a
confirmed winner at $L = 163$ ($c_h = 1.0$, $\sigma_0 = 0.3$) with
$\mathbf{90.65}$\,dB mean best PSNR at 500 steps, $+25.9$\,dB over
the $L = 25$ configuration of the grid above, with the response
flattening beyond $L \approx 102$ ($89.4/90.3/90.7$ at
$L = 102/129/163$) while per-cell wall clock grows from $393$\,s to
$3.3$\,ks. The in-subspace horizon ranking did not reach the
confirmation rung. The depth axis dominated. A matched-depth
cubic-step control separates the deep-basis effect from the step
rule: at the same Lanczos depth ($L{=}163$) and step budget, the
cubic step rule reaches $72.1$\,dB where the $\varphi_1$ rule reaches
$90.65$\,dB, so the margin is carried by the step rule rather than by
subspace depth.

\paragraph{Bounded grid.} At the tuned 500-step configuration, the
recorded experimental design (20 images $\times$ 3 seeds, 500 steps) gives
$\mathbf{63.53 \pm 3.83}$\,dB mean best PSNR over 60 cells, the
strongest 500-step experimental setup recorded on this benchmark: $+1.75$\,dB over
the cubic-step blockwise ARC ($61.78 \pm 4.22$), $+9.2$\,dB over the
tuned Adam experimental setup ($54.34 \pm 1.48$), $+16.3$\,dB over CubicKrylov
($47.21 \pm 9.04$), and $+27.6$\,dB over DSO-v2r
($35.95 \pm 2.18$), with per-cell wall clock ${\sim}400$\,s.

\paragraph{Spectral-bias mitigation at 500 steps.} Gradient-type
training drains the error component along each Hessian (equivalently,
tangent-kernel) eigenmode at a rate proportional to its eigenvalue
\citep{jacot2018ntk}, the mechanism behind band-ordered convergence in
coordinate networks. A band-uniform residual is therefore the
observable signature of a step rule that equalizes the per-mode rates.
Over the full grid (60 cells; identical images and seeds per row),
the per-band residual power decomposition reads:

\begin{table}[h]
\centering\small
\caption{FINER, 500 steps, mean residual FFT power per band over the
60 grid cells. hl = median high/low band ratio. Values near 1
indicate a band-uniform residual.}
\label{tab:finerband}
\begin{tabular}{lccccc}
\toprule
optimizer & Low & Mid-Low & Mid-High & High & hl \\
\midrule
ARC-$\varphi_1$ & $\mathbf{2.41{\times}10^{-2}}$ &
 $\mathbf{2.21{\times}10^{-2}}$ & $\mathbf{2.19{\times}10^{-2}}$ &
 $\mathbf{2.18{\times}10^{-2}}$ & 1.10 \\
ARC-block (cubic) & $3.94{\times}10^{-2}$ & $3.57{\times}10^{-2}$ &
 $3.52{\times}10^{-2}$ & $3.49{\times}10^{-2}$ & 1.07 \\
Adam (source paper experimental setup) & $6.61{\times}10^{-2}$ & $6.46{\times}10^{-2}$ &
 $6.46{\times}10^{-2}$ & $6.48{\times}10^{-2}$ & 0.81 \\
Adam (untuned) & $2.1{\times}10^{1}$ & $1.8{\times}10^{1}$ &
 $1.8{\times}10^{1}$ & $1.7{\times}10^{1}$ & 0.37 \\
\bottomrule
\end{tabular}
\end{table}

ARC-$\varphi_1$'s high-band residual power is ${\sim}3\times$ below
the tuned Adam experimental setup's ($2.18{\times}10^{-2}$ vs
$6.48{\times}10^{-2}$) while its low-band residual is lower by the
same factor: every band improves together, so the mitigation is not a
redistribution of error across frequencies. It also improves on the
cubic-step baseline by ${\sim}1.6\times$ per band, consistent with its
$+1.75$\,dB mean PSNR margin on the same cells. The remaining residual
is band-uniform to within $10\%$ across Low--High (hl $\approx 1.10$):
no frequency preference remains at this accuracy level. This is the
500-step picture. Run to full convergence (below), the gap between
ARC-$\varphi_1$ and Adam widens by more than 50\,dB while the
reconstructions remain visually indistinguishable at native
resolution, which is the point the convergence figures make directly.

\paragraph{Run to full convergence.}
\label{sec:finerconv}
The 500-step and iso-wall-clock numbers above left one more setup to try: run for a very extended budget to be sure no Adam tuning settings were missed.  We run both the extended-sweep winner
($L_{\max}=163$, $c_h=1.0$, $\sigma_0=0.3$) and tuned Adam on FINER
image~0 to their own plateaus. Adam, extended to a flat
$1{,}000{,}001$-step budget, reaches $\mathbf{78.2}$\,dB in
$4{,}214$\,s. ARC-$\varphi_1$, run for $2{,}824$ sweeps, reaches
$\mathbf{133.5}$\,dB in $17{,}173$\,s (${\approx}4.8$ hours) and is
still ascending when the run is stopped. This resolves the convergence
comparison left open by the budget-limited results above: the $+55.3$\,dB gap at full
convergence is larger than every budget-limited
comparison in this section, including the iso-wall-clock reading
below. Figure~\ref{fig:finerconv} shows both converged reconstructions
side by side with their pixelwise log-error maps: the reconstructions
are visually identical at native resolution, and the $55$\,dB difference can be observed mostly via the error map, three to four orders of magnitude
smaller for ARC-$\varphi_1$ across the whole image, not concentrated
in any particular region or frequency band.

\begin{figure}[h]
\centering
\includegraphics[width=0.95\textwidth]{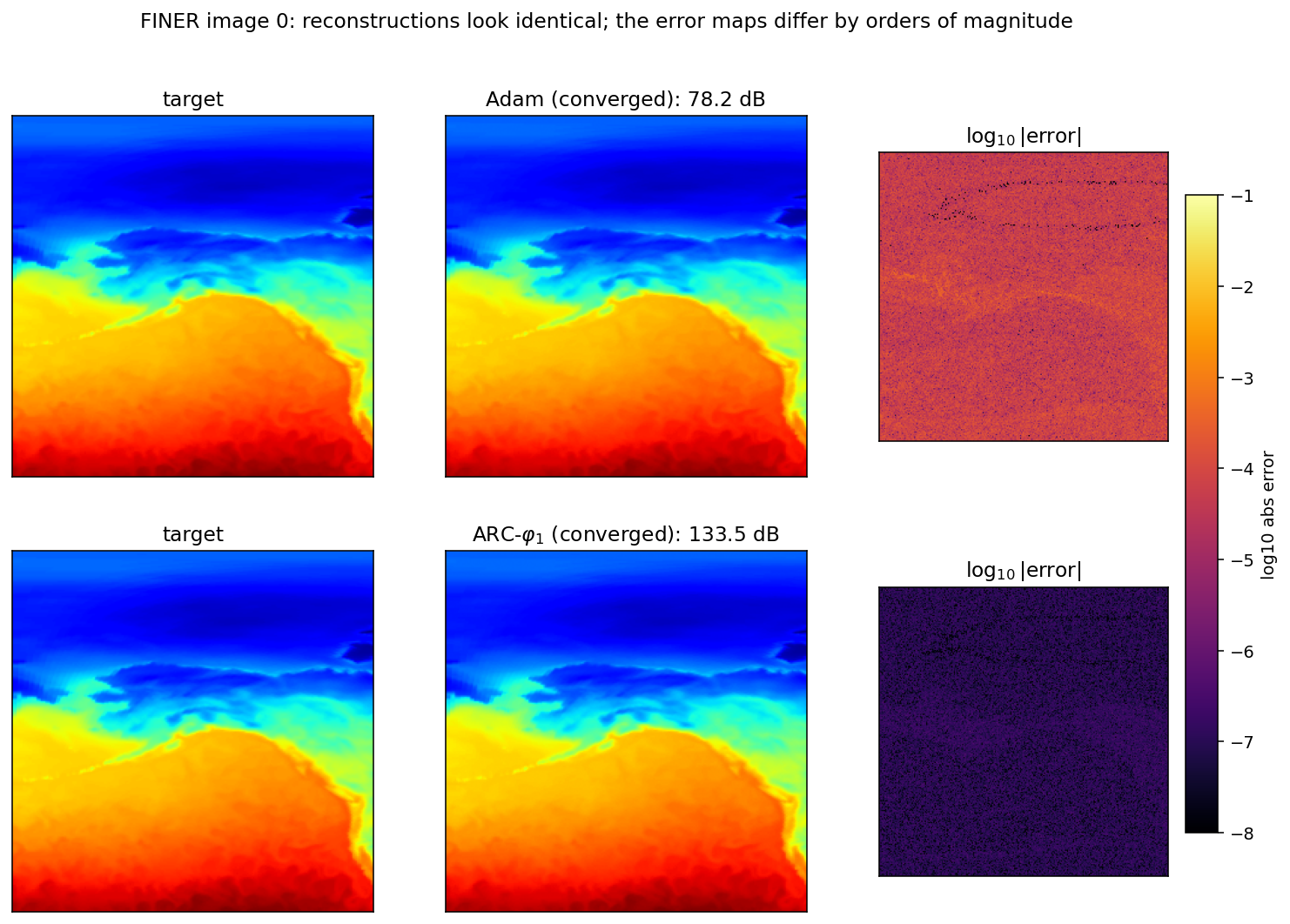}
\caption{FINER image~0, native resolution, both optimizers run to
their own full convergence. Top row: tuned Adam at $1{,}000{,}001$
steps ($4{,}214$\,s), $78.2$\,dB. Bottom row: ARC-$\varphi_1$
($L_{\max}=163$) at $2{,}824$ sweeps ($17{,}173$\,s), $133.5$\,dB.
Rightmost column: $\log_{10}$ pixelwise absolute error. The color
scale is shared across both rows and spans eight decades. The
reconstructions (left two columns) are visually indistinguishable
from the target. The entire $55$\,dB gap is visible only in the error
map.}
\label{fig:finerconv}
\end{figure}

\paragraph{Iso-wall-clock comparison.} Step-budget comparisons favor neither side outright: a first-order step is ${\sim}33\times$ cheaper
in gradient-equivalents, so equal step counts hide the cost asymmetry,
while the full-convergence comparison above runs the two experimental setups for
very different wall times ($4{,}214$\,s vs $17{,}173$\,s). We
therefore also report both experimental setups at \emph{equal wall clock}, pinned to
to the tuned Adam runs on FINER image~0. \emph{Tuning, Adam:} the
experimental setup starts from the published FINER $(\beta_1, \beta_2, \epsilon)$
and sweeps the learning rate from $0.05$ down to $10^{-8}$ at a
$10^6$-step budget per rung. The best rung is $\mathrm{lr} = 10^{-4}$,
and the learning rate ladder, $\beta_2/\epsilon$ refinements, and
sub-$10^{-4}$ rate probes all confirm no stronger Adam setting exists.
Full-budget run one is Adam's plateau-verified endpoint: $67.7$\,dB at $10^5$
steps in $421$\,s, the point where its termination criterion fires. Full-budget run two
is Adam's extended budget ceiling: $78.2$\,dB, the same full-
convergence value reported above, reached over a flat $10^6$-step
budget taking $4{,}214$\,s. \emph{Tuning, ARC-$\varphi_1$:} the
extended-sweep winner of this section, $L_{\max} = 163$, $c_h = 1.0$,
$\sigma_0 = 0.3$, $M_{\mathrm{amp}} = 10^6$, with the stabilized
tolerances and $\sigma$-bounds of the block-$\sigma$ control
unchanged. A single trajectory under this configuration, logged every
sweep as (wall, sweep, PSNR) out to $4.4$\,ks, supplies both full-budget run
interpretations: ARC-$\varphi_1$ reaches $58.3$\,dB in $61$ sweeps at the
$421$\,s full-budget run and $95.6$\,dB in $684$ sweeps at the $4{,}207$\,s
full-budget run: matched wall clock, identical image, seed, and hardware
class. The same log locates the crossovers: ARC-$\varphi_1$ passes
Adam's plateau value at ${\sim}970$\,s (sweep ${\sim}150$) and Adam's
extended budget ceiling at ${\sim}1.9$\,ks (sweep ${\sim}302$), and is
still ascending at ${\sim}1$\,dB per $25$ sweeps when the log ends
($96.9$\,dB at $4.4$\,ks), consistent with the much larger
$133.5$\,dB reached once the run is allowed to continue to
$17{,}173$\,s above. We add two further baselines at the same two
full-budget runs, both on the identical image~0 setup: SOAP
\citep{vyas2024soap}, tuned by inheriting Adam's own best learning
rate ($10^{-4}$) rather than a separate sweep, and L-BFGS (three
memory sizes, $m \in \{10, 20, 50\}$, strong-Wolfe line search,
untuned beyond memory size).
\begin{table}[!htbp]
\centering
\small
\begin{tabular}{lcc}
\toprule
Optimizer & $421$\,s full-budget run & $4{,}207$\,s full-budget run \\
\midrule
Adam (tuned) & $67.7$\,dB & $78.2$\,dB \\
SOAP (Adam's lr) & $\approx 72.9$\,dB & $\approx 77.3$\,dB \\
L-BFGS ($m{=}10,20,50$) & $7.3$\,dB & $7.3$\,dB \\
ARC-$\varphi_1$ ($L_{\max}{=}163$) & $58.3$\,dB & $\mathbf{95.6}$\,dB \\
\bottomrule
\end{tabular}
\caption{Iso-wall-clock comparison at Adam's two full-budget runs, FINER ESM
image~0. SOAP values are linearly interpolated between its logged
checkpoints. All others are direct readings.}
\label{tab:isowall}
\end{table}
SOAP is the strongest optimizer at the short budget but plateaus almost
exactly where tuned Adam does at the extensive budget of $10^6$ steps, and costs more total
wall clock ($5{,}886$\,s) to reach its own overall best ($77.65$\,dB)
than Adam takes to reach its ceiling. L-BFGS does not merely converge
slowly: every memory size plateaus at $7.3$\,dB by iteration $500$
($4$--$5$\,s) and never moves again. It gives PSNR value near a constant (DC-only)
reconstruction, consistent with a quasi-Newton step and its line
search collapsing at the initialization saddle with no bounded escape
mechanism for negative curvature and no stochastic perturbation to
make it escape a saddle. ARC-$\varphi_1$ is the only optimizer still climbing at the
extensive step budget, ahead of every other baseline by double digits of PSNR (in dB), and
the full-convergence run above shows it continues to climb for more
than $4\times$ longer still, opening its performance advantage further.



\section{Comparison with Other ARC-Based Approaches}
\label{sec:arc-comparison}

This section contrasts the design choices of this work, per-tensor
blocks with an independent cubic constant and acceptance test per
block and a Chebyshev-bounded Krylov subsolver driven only by Hessian--vector
products, with recent ARC-based methods.
Table~\ref{tab:arccomp} collates the families closest to the claim of
Appendix~\ref{sec:feasibility} by the curvature computation or approximation method, the
largest demonstrated scale, and the exactness of the cubic model step
on the true curvature. Table~\ref{tab:subsolvers} compares the
corresponding inner subproblem solvers by computation and working
memory.

\begin{table}[t]
\centering\small
\caption{Curvature model, demonstrated scale, and cubic model
exactness of the second-order families reviewed in this section.
Exactness is meant in the sense of Lemma~\ref{lem:krylov}: the step
minimizes the cubic model over the true (block) Hessian, accessed
through Hessian--vector products, within the constructed subspace.}
\label{tab:arccomp}
\begin{tabular}{p{0.24\textwidth}p{0.19\textwidth}p{0.17\textwidth}p{0.28\textwidth}}
\toprule
Method & Curvature in the step & Demonstrated scale & Exact cubic model step on true curvature \\
\midrule
AdaCubic \citep{tsingalis2026adacubic} & Hutchinson diagonal &
deep learning training & no: diagonal surrogate \\
Block-Kronecker Fisher \citep{gomes2025adafisher} & per-layer
Kronecker factors & deep learning training & no: factored surrogate \\
Full Gauss--Newton study \citep{abreu2026gn} & full or layerwise
Gauss--Newton & 150M parameter transformers & outside the cubic Newton
family; supports per-tensor blocking \\
Quasi-Newton ARC \citep{forristal2022arclqn,ranganath2025sr1} &
limited-memory low-rank & deep network training & exact solve, but on
a surrogate Hessian \\
True-Hessian subspace ARC
\citep{bellavia2025subspace,aldaas2025extkrylov,dussault2024arcqk,cartis2025rarc,tansley2025lowrank}
& exact Hessian (subspace or shifted solves) & standard test sets,
moderate dimension & not demonstrated at network scale \\
Stochastic, sketched, and multilevel cubic Newton
\citep{pasechnyuk2025vr,scheinberg2023sarc,chen2025len,higuchi2025rshtr,tsipinakis2026multilevel}
& sampled, sketched, or compressed & small to moderate & subspace or
compression replaces the exact model \\
\midrule
\textsc{ARC-Block} (this work) & exact block Hessian via HVPs &
91.4M parameters; single 88.5M tensor & exact on every block
(Lemma~\ref{lem:krylov}), under either rule that sets the subspace
dimension \\
\bottomrule
\end{tabular}
\end{table}

\begin{table}[t]
\centering\small
\caption{Inner subproblem solvers of the methods in
Table~\ref{tab:arccomp}, by computation per subproblem solve and
working memory beyond the iterate and gradient. Notation: $n$ is the
problem (or block, $n_b$) dimension, $L$ the number of Krylov steps,
$\ell$ the sketch or subspace dimension, $m$ the quasi-Newton memory.
Computation excludes the gradient evaluation itself.}
\label{tab:subsolvers}
\begin{tabular}{p{0.22\textwidth}p{0.17\textwidth}p{0.26\textwidth}p{0.23\textwidth}}
\toprule
Inner solver & Methods & Computation per solve & Working memory \\
\midrule
Lanczos/GLTR tridiagonalization + secular solve &
\citet{cartis2011a,gould1999gltr,carmon2018krylov} &
$L$ HVPs + $\mathcal{O}(L)$ secular iteration &
basis of $L$ $n$-vectors, or a second Lanczos pass with
$\mathcal{O}(1)$ vectors \\
Multi-shift Lanczos conjugate gradient (CG) over a shift grid &
\citet{dussault2024arcqk} &
one Krylov process, $L$ HVPs shared by all shifts &
$\mathcal{O}(1)$ $n$-vectors per shift; no stored basis \\
Reused polynomial/rational Krylov subspace &
\citet{bellavia2025subspace} &
basis built once, reused across outer iterations; rational variant
needs shifted direct solves &
stored basis; factors for the rational variant \\
Extended-Krylov solution-family basis &
\citet{aldaas2025extkrylov} &
one matrix factorization + basis extension; reduced solve serves every
regularization weight &
factorization + basis \\
Sketched reduced cubic model &
\citet{cartis2025rarc,tansley2025lowrank,higuchi2025rshtr} &
$\ell$ projection products + dense reduced solve
$\mathcal{O}(\ell^{3})$ &
$\ell \times n$ sketch or basis \\
CG / minimal-residual (MINRES) iteration on a shifted quadratic &
\citet{zhou2025regnewton,zeng2026newtoncg,zeng2026minres} &
one HVP per iteration, short recurrence; no cubic model &
$\mathcal{O}(1)$ $n$-vectors \\
Closed-form low-rank quasi-Newton cubic &
\citet{forristal2022arclqn,ranganath2025sr1} &
$\mathcal{O}(m^{2}n)$ dense algebra; no HVPs &
$2m$ $n$-vectors of curvature pairs \\
Separable scalar cubic on a Hutchinson diagonal &
\citet{tsingalis2026adacubic} &
a few HVPs for the diagonal estimate + $\mathcal{O}(n)$ per-coordinate
closed form &
$\mathcal{O}(n)$ diagonal \\
Kronecker-factor inversion &
\citet{gomes2025adafisher} &
small per-layer factor inverses &
two small factor matrices per layer \\
\midrule
CubicKrylov step (this work, Section~\ref{sec:krylovtheory}) &
per block &
$L{+}1$ HVPs + tridiagonal secular solve on an
$(L{+}1)\times(L{+}1)$ matrix &
Lanczos basis of $L{+}1$ $n_b$-vectors (full reorthogonalization) \\
$\varphi_1$ step in the same subspace (this work,
Section~\ref{sec:phi1}) &
per block &
$L{+}2$ HVPs + eigendecomposition of the $(L{+}1)\times(L{+}1)$
tridiagonal &
same Lanczos basis \\
Chebyshev three term recurrence (this work, DSO ablation, small
blocks) &
per block &
explicit $\Hb_b$ build + degree-$L$ recurrence,
$\mathcal{O}(Ln_b^{2})$ &
$\Hb_b$ ($n_b^{2}$) + $\mathcal{O}(1)$ vectors \\
\bottomrule
\end{tabular}
\end{table}

Appendix~\ref{app:arcdetail} reviews these families in detail:
cubic model minimization in Krylov subspaces, blockwise and
partition-based cubic methods, quasi-Newton cubic models, Chebyshev
recurrence stability, parameter-free and inexact-Hessian ARC theory,
stochastic and lazy cubic Newton, subspace and multilevel methods,
Krylov-type inner solvers, second-order methods at deep learning
scale, trust-region hybrids, and cubic regularization beyond
unconstrained training.

\section{Limitations}
\label{sec:limitations}

The reachability of the per-block cubic step comes at a price: on FINER the cost
is ${\sim}33\times$ more gradient-equivalents than converged Adam, and
on landscapes whose ill-conditioning is axis-aligned
(Section~\ref{sec:fingerprint}) a tuned first-order method reaches the
same accuracy at a fraction of the wall clock. The guarantees of
Section~\ref{sec:theory} provide approximate second-order stationarity
only. By Remark~\ref{rem:global}, the Nemirovski--Yudin lower bound
rules out global-minimum guarantees for any optimizer on general
non-convex objectives, and the structural assumptions under which such
guarantees become possible (gradient dominance) do not hold for
sinusoidal INR landscapes. The bf16 study of Appendix~\ref{sec:bf16}
shows both subsolver families degrade under a low-precision
Hessian--vector oracle, so mixed-precision deployment requires an
fp32 oracle. Finally, the evaluation is confined to
coordinate-network regression (image, SDF, super-resolution) with
full-batch losses. Stochastic mini-batch training at
transformer-language-model scale is outside the scope of the recorded
experiments.

\paragraph{Limitations specific to the $\varphi_1$ subsolver
(Section~\ref{sec:phi1}).} The optimality of a $\varphi_1$ step rule 
has not been shown. Thus the $\mathcal{O}(\eps^{-3/2})$ rate is not proven.
What is proven instead is a decrease rule of Cauchy type
(Section~\ref{sec:phi1krylov}): on the frozen quadratic model the
step satisfies the exact decrease law of Eq.~\eqref{eq:phi1decrease},
strictly positive for every sign pattern of the spectrum and bounded
below by $\tfrac14\norm{g}^2\min(h,1/\norm{G})$, with the exact
gradient contraction $\nabla m(s_h)=e^{-hG}g$. Under
Assumption~\ref{ass:lip}, accepted steps therefore decrease $f$ by
$\Omega\big(\norm{g_b}^2\min(h_b,1/\norm{\Hb_b})\big)$, which yields
an $\eps$-first-order point in $\mathcal{O}(\eps^{-2})$ accepted
steps, the standard trust-region complexity class, and monotonicity
of the outer scheme follows from the per-block acceptance test
(Proposition~\ref{prop:perblock}). Second-order stationarity
transfers only generically: the clamped escape of
Lemma~\ref{lem:clamp} requires a nonzero gradient component along the
bottom eigenvector.

The 500-step and iso-wall-clock FINER numbers of Section~\ref{sec:finer}
are now supplemented by a full-convergence run
($133.5$\,dB at $2{,}824$ sweeps, $17{,}173$\,s), but that run is a
single image and seed.
More broadly, the FINER comparison of Adam against ARC-$\varphi_1$
covers only two ESM images so far: highly tuned Adam takes around 70
minutes to reach its peak on one image, so running ARC-$\varphi_1$
for the same wall-clock time puts the per-image cost near $2.3$
hours, around 46 hours for the first 20 ESM images and around 253
hours (roughly ten and a half days) for all 110 images on the single
RTX PRO 6000 GPU (96\,GB) used throughout. The loss landscape
analysis of FINER on this dataset records $81\%$ of the Hessian
spectral mass in negative curvature at initialization
(Section~\ref{sec:finer}), so it is very likely that Adam exhibits
the same behavior on the other images as well, stuck in saddles.
Nevertheless, more extensive experimentation on a wider range of neural networks with a greater
variation of loss landscapes is definitely needed and will be done.


Finally, the diagonal dominance statistics that
motivate cheap diagonal variants of the fingerprint
(Section~\ref{sec:fingerprint}) depend on the search trajectory. A
diagonal $\varphi_1$ variant fails at FINER's saddle rich landscape.

\section{Conclusion}

We introduced a blockwise stabilized adaptive cubic regularization
optimizer that partitions the parameters by tensor, maintains an
independent cubic constant $M_b$ and acceptance test per block, and
treats the cubic subproblem solver as an interchangeable component:
lazy exact steps on small blocks, a Chebyshev-bounded Krylov subspace
on large tensors, and a three term recurrence in $\mathcal{O}(n_b)$ memory where
solver memory is scarce. The evaluated subsolver families differ in
how the degree is set. The cubic subsolver shared by the CubicKrylov
and stabilized blockwise ARC variants takes its Lanczos degree, and with
it the Krylov subspace dimension, from the Chebyshev second kind bound
driven by the cubic shift, computed before the solve
(Proposition~\ref{prop:degree}). The
$\varphi_1$ subsolver reuses the same Lanczos build at a budget set
from the per-block trust parameter (Section~\ref{sec:phi1krylov}). The
CubicCheby-DSO recurrence runs the Chebyshev polynomial itself, an
operational degree in $\mathcal{O}(n_b)$ memory: the precomputed degree only
bounds the sweep, a residual norm test stops it sooner, and in the
current implementation and experimentation it matched the stored-basis
solvers, such as CubicKrylov and ARC-$\varphi_1$, only where solver
memory is the limiting factor, its main
advantage (Appendix~\ref{sec:coststructure}). For experiments on even
larger architectures, where the $(L{+}1)$-vector basis footprint
grows past the memory budget on more blocks, a scheme built on the
Chebyshev recurrence, such as CubicCheby-DSO, might show a much
greater advantage. At the 91.4M parameter scale the method is
practical precisely because the trust-region scheme is applied
\emph{per block}, with each block's cubic constant adapted to its own
curvature scale: the stabilized configuration reaches 64.4\,dB where
the pre-stabilization configuration reaches 51.65\,dB and a global
single subspace ARC stalls at 4.0\,dB. The convergence matched INR study and the landscape
fingerprint identify the conditions under which such advantage in performance occurs: 
cross-coupled ill-conditioning that no per-coordinate method can
mitigate and saddle structure that traps curvature-blind methods at
stationary points with relatively high loss values. The same measurements identify the
settings in which the added cost is wasteful. One of the findings of
this paper is that per-block adaptive cubic regularization increases feasibility and
performance of the evaluated second order optimizers at practical neural network scales, 
regardless of a subproblem solver.

We then showed that the subproblem solver is exchangeable in a second,
stronger sense: replacing the cubic minimizer with the
exponential relaxation ($\varphi_1$) step rule of
Section~\ref{sec:phi1}: Newton on stiff modes, gradient on flat
modes, clamped exponential escape on negative modes, horizon driven
by the same per-block trust constant -- improves terminal accuracy
by one to five orders of magnitude on small hard landscape controls
(Section~\ref{sec:small}) and, on FINER, reaches converged Adam performance
faster both in steps and wall-clock time (as Adam struggles with saddles), 
exceeds tuned Adam's extended budget ceiling
by $+17.4$\,dB at equal wall clock while still improving, and, run to
its own full convergence, reaches $133.5$\,dB against Adam's own
converged $78.2$\,dB, a $+55.3$\,dB advantage that neither SOAP nor
L-BFGS narrows (Section~\ref{sec:finer}). All at identical
per-trial oracle cost and memory envelope to the CubicKrylov step.
The companion landscape fingerprint study of
\citet{podorozh2026adamstall}, summarized in
Section~\ref{sec:fingerprint}, independently arrives at the same
dispatch rule from the first-order side: Adam-preconditioned
conditioning and a debiased diagonal-mass estimator identify exactly
the axis-aligned landscapes where tuned first-order methods such as Adam perform well.
Yet, on the landscapes with cross-coupled ill-conditioning or saddle dominated ones Adam and its variants get stalled or stuck in saddles. While our second order optimizers, regardless
of which subsolver is used to exploit that structure, perform well on it.
Taken together, the two main subsolvers of this paper, CubicKrylov and
$\varphi_1$, are both exchangeable implementations of the same
blockwise, stabilized outer scheme, and on the flat, saddle dominated
loss landscapes of sinusoidal INRs the $\varphi_1$ implementation is, at
terminal accuracy, the stronger performer of the two. The proof of a single-step
optimality for the $\varphi_1$ implementation, and evaluation of its behavior at the tens of millions parameter scale  
(e.g. 91.4M ViSIR), are left for future work at this time.


\clearpage
\appendix
\begin{center}
{\LARGE\bfseries Appendix}\\[0.4em]
{\large Supplementary Analyses and Experimental Evaluations}
\end{center}
\vspace{0.8em}
\section{Feasibility of Second-Order Steps at the 91.4M-Parameter Scale}
\label{sec:feasibility}


In this section we assess the feasibility of the evaluated optimization methods
at large scale. For a benchmark with millions of parameters, we used the $91.4M$ parameter
ViSIR.

\paragraph{The cubic step is applied to every tensor, including the largest.}
The 91.4M parameter ViSIR model of
Section~\ref{sec:fullvisir} contains 45 parameter tensors. Under the block
partition of Section~\ref{sec:blocktheory} with maximal exact-block size
$512$, the 23 tensors with at most $512$ entries (LayerNorm parameters,
small biases, and similar vector-shaped parameters; $7{,}938$ parameters
in total) are optimized with the lazy exact-Hessian cubic step of
Algorithm~\ref{alg:blockcn}. The remaining 22 tensors (the patch
embedding, positional embedding, attention projections, SIREN linear
weights, and the decoder) are large blocks and receive the matrix-free
cubic-Krylov step. No tensor is excluded, and no tensor falls back to a
first-order update.

The largest block is the outermost decoder weight, of shape
$172{,}800\times512$: $88{,}473{,}600$ parameters, i.e.\ $96.8\%$ of the
model held in a single tensor. Its cubic step is computed from
$L{+}1=11$ Hessian-vector products, and the only additional storage is
the Lanczos basis, eleven vectors of the tensor's own size
(${\approx}3.9$\,GB in float32). Dense Hessian of this block alone would occupy
${\sim}3\times10^{7}$\,GB, and a factorization-based cubic solve on it
would require on the order of $n_b^3\approx7\times10^{23}$ arithmetic
operations. Since the charge of $L{+}2$ grad-equivalents per step
(Table~\ref{tab:cost}) does not depend on the block size, this tensor
costs the optimizer no more \emph{oracle} charge than a block of a few
hundred entries per step. The wall-clock cost of each HVP and the
Lanczos-basis memory still scale linearly with the tensor size.

\paragraph{The feasibility of the evaluated optimizers on ViSIR-nano model.}
The dividing line is whether a method requires a dense representation or
factorization of the (block) Hessian, or only Hessian-vector
products, Hessian diagonals, or per-dimension factored statistics.
The dense-solve cubic Newton family (the original
method~\citep{nesterov2006cubic}, its lazy and adaptive
variants~\citep{doikov2023lazy}, the finite-difference
implementation~\citep{doikov2023fo}, and SSCN with exact subspace
Hessians~\citep{zhao2025sscn}) solves a regularized system in a dense
Hessian and is therefore confined to models near the
$15$k-parameter feasibility boundary studied in Section~\ref{sec:nano}
(their per-step complexity classes are collated in the companion
recurrence paper). Among
cubic regularized methods, only the matrix-free Krylov variants,
adaptive cubic regularization with Lanczos subproblem
solves~\citep{cartis2011a,gould1999gltr,carmon2018krylov,dussault2024arcqk} and the
CubicKrylov step of this work, remain feasible at $10^{8}$
parameters: their per-step cost is a fixed number of HVPs, independent of
the tensor size. Within that family, the multi-shift
ARC$_{q}$K solver of \citet{dussault2024arcqk} makes the shift search
itself matrix-free by solving a grid of shifted systems with a single
Lanczos-CG process in the full space. The recurrence of this work
instead reuses per-block Krylov bases across $\sigma_b$ updates. The other second-order optimizers of the multi-image nano benchmark
(Table~\ref{tab:nanoesm20}) that remain feasible by never representing the Hessian
beyond cheap surrogates: the Chebyshev ablations and AdaHessian use
Hutchinson diagonal estimates from a few HVPs, SOAP maintains
per-dimension factored moment matrices, and Muon~\citep{Jordan2024Muon}
orthogonalizes per-tensor updates. Rank-$\tau$ spectral
preconditioning~\citep{doikov2024spectral} is likewise matrix-free in
principle, at a cost of $\tau$ HVPs per preconditioner rebuild. The
distinction matters for the following comparisons: at 91.4M ViSIR scale
the CubicKrylov and the ARC-block optimizers introduced
later in this section are, to our knowledge, the only members of
the cubic Newton family in this evaluation whose steps remain exact in the
cubic model sense on every block at this scale, rather than an approximation through a
diagonal or factored surrogate.

\paragraph{Comparison to ARC}
The use of Krylov subspace by itself is not sufficient to make
an optimizer feasible at $91.4$M parameter scale.

We checked this by implementing ARC~\citep{cartis2011a,cartis2011b} in its global form,
Algorithm~2.1 with a Lanczos subproblem solve over the whole
$91.4$M-dimensional parameter vector (degree bound $d\le15$, termination
criterion TC.s, noise-regularized acceptance ratio), and ran it on the
single image reconstruction experimental design of Section~\ref{sec:experiments} ($91.4M$ ViSIR).
TC.s is the residual norm test of \citet{cartis2011a}: the Lanczos
iterations stop once the norm of the cubic model gradient at the
current inner iterate falls below its threshold, so this baseline sets
its Krylov subspace dimension at run time from the residual, in
contrast to the precomputed degree of Section~\ref{sec:krylovtheory}.
Published applications of ARC-type methods to neural networks, to our
knowledge, use sub-sampled or stochastic Hessian estimates on models in
the $10^{4}$--$10^{5}$ parameter
range~\citep{kohler2017,tripuraneni2018,xu2020inexact}. We are not aware
of existing experimentations on variants of ARC with exact Hessian--vector
products, the Lanczos subproblem solver, and the unmodified acceptance
test that were performed at the $10^{7}$--$10^{8}$ parameter scale (as in this section). 

The original ARC per-step cost is indeed modest, ${\approx}0.3$\,s and ${\approx}11$
grad-equivalents per accepted step, yet the method stalls at
$4.0$\,dB PSNR (ViSIR), while CubicKrylov reaches $51.7$\,dB: after
$10{,}000$ steps ($2{,}858$\,s, $1.1\times10^{5}$ grad-equivalents,
(its oracle budget is greater than the budget consumed by CubicKrylov in its entire run). 
The reason for the stall of ARC optimizer can be deduced from the regularization trace: with a
\emph{single global} weight $\sigma$, every trial step must satisfy the
acceptance test along all $91.4$M directions simultaneously, so
$\sigma$ is driven as high as $5{\times}10^{11}$, 46\% of all trials
are rejected, and the accepted steps are shrunk to noise level for the
entire network.
The same implementation at $n=881$ is among the strongest methods
(the approximation-fidelity study of the companion recurrence paper~\citep{podorozh2026recurrence}). At the scale of $15$k parameters it is in the first
place in this evaluation (Appendix~\ref{app:nanobudget}). The per-block
decomposition is more than a cost optimization: replacing one
global $\sigma$ by per-block constants $M_b$, each adapted to its own
tensor's curvature scale, is what allows large steps in
well-conditioned blocks while the ill-conditioned ones are regularized
individually.

\paragraph{Isolating the mechanism: a block-$\sigma$ ARC control.}
To determine if the failure is solely due to the $\sigma$ being global, we ran a
controlled ablation: the identical ARC implementation with one
regularization weight $\sigma_b$ \emph{per parameter tensor}, applied
in Gauss--Seidel sweeps (per block: a Lanczos cubic solve on the
block-restricted Hessian-vector operator, a trial update of that block
only, and its own accept-or-reject ratio test with a fresh function
evaluation). Nothing else changes: same $\eta$/$\gamma$ constants, same
TC.s residual norm termination of the Lanczos iterations, same degree limit. The difference in performance between a small scale and
large scale is striking. Where the
global $\sigma$ ARC variant stalls at $4.0$\,dB, the block variant reaches
\textbf{61.9\,dB} after 13 sweeps, in 19\,s and $8.2{\times}10^{3}$
grad-equivalents, on the identical experimental design (by comparison, the Cubic
Krylov reference on this image reaches 51.7\,dB at
$3.7{\times}10^{4}$ grad-equivalents). The computed weights confirm this
result: after 150 sweeps the $\sigma_b$ span more than twelve orders
of magnitude across the 45 tensors (from ${\sim}10^{-5}$ to
${\sim}10^{7}$, median 64), a spread for which a single global value is utterly infeasible. 
Two points to be made about this result. First, the plain block
variant is not monotone late in the run: its noise offset in the ratio
test is absolute, so once the loss falls to ${\sim}10^{-5}$ the offset
exceeds the loss itself, loss-increasing block trials pass the test,
and the trajectory oscillates (final sweep 43.7\,dB). A stabilized
variant that makes the offset proportional to the current loss and
requires an actual decrease for acceptance raises the best to
64.4\,dB and the final sweep to 56.3\,dB. The remaining fluctuation
is due to the architecture of the benchmark used (ViSIR), 
which trains with active dropout, so the
sampled objective is stochastic for every optimizer in the benchmark.

Second, block-coordinate cubic regularization carries no
$\mathcal{O}(\eps^{-3/2})$ guarantee of the Cartis et~al.\ theory:
the closest published block-cubic method, the randomized block cubic
Newton of \citet{doikov2018rbcn}, samples random blocks and its
convergence theory covers convex composite objectives, not this
deterministic non-convex setting, and its greedy per-block
acceptance charges one extra function evaluation
per block per sweep. The control benchmark nevertheless gives the following answer:
per-block adaptive regularization, the architecture that CubicKrylov
instantiates with its $M_b$ rule (adapted from gradient information
alone, without the per-block function evaluations), is what makes it perform well
on this large scale ViSIR architecture with a dropout, 
and the contrast between the results for the two ARC variants
at $91.4$M ViSIR are due to this as well.

\paragraph{A gradient-based acceptance rule.} The per-block function
evaluation that the ratio test charges on every block trial is the
one cost of the block-ARC design that CubicKrylov's $M_b$ rule
avoids, and the reduced-operator method of \citet{doikov2025gcb}
suggests a middle ground: accept a trial step when the
\emph{computed gradient} at the trial point matches the cubic
model's prediction of it, a test whose ingredients (two gradients)
the sweep computes anyway. Transplanted verbatim, the test fails:
in single precision the true reduced operator sinks below the
gradient-noise floor as the loss approaches ${\sim}10^{-5}$, noise
passes the test on loss-increasing trials, and the nano run
collapses to $10.8$\,dB. And, for the same reason, the plain ratio
test loses monotonicity above, now amplified because the gradient
carries the noise undamped. Two guards repair it: accept on floor
when the reduced operator is within gradient-noise tolerance of
zero (the model is then exact to measurable precision, and
$\sigma_b$ is left untouched), and a monotone guard that rejects
any trial whose computed loss increases beyond a relative noise
tolerance. So guarded, the rule reaches $36.49$\,dB on the nano
control (Table~\ref{tab:nanobudget}), level with the variants 
that test the minimized function value, and its behavior on the full
multi-image benchmark is reported alongside the other ARC variants
in the extended budget benchmark of the companion recurrence paper~\citep{podorozh2026recurrence}.

\section{The Cost Structure of the Subsolver: stored basis (CubicKrylov) vs Chebyshev recurrence as a subproblem solver}
\label{sec:coststructure}


The cubic subproblem of Section~\ref{sec:theory} is, at its core, a
family of shifted linear solves (Section~\ref{sec:cubic}), and both
solver families considered in this paper build their steps in the same
Krylov subspace at one HVP per degree: the Lanczos process with a
stored basis (Lemma~\ref{lem:krylov}) and the Chebyshev second kind
three term recurrence Eq.~\eqref{eq:dsorec} with three vectors of
state. It is therefore natural to ask whether replacing the stored
basis by the recurrence, the substitution DSO makes for its
preconditioned gradient step, would make the cubic step, or the ARC
baseline of Appendix~\ref{sec:feasibility}, faster. The answer
is divided into two distinct costs: memory and per-HVP overhead, where
the recurrence is cheaper, and the number of HVPs required to reach a
given model decrease, where the stored basis is provably cheaper, by a
mechanism specific to the cubic subproblem.

\paragraph{Benefits of using the three term recurrence.} The Lanczos solver must retain
$\Qb_L$ to reconstruct the step $s_L=\Qb_L y^\star$: $(L{+}1)$ vectors
of $n_b$ float64 entries, ${\approx}11$\,GB for the $88.5$M-parameter
ViSIR decoder at $L=15$, plus full reorthogonalization: $O(Ln_b)$
inner products per Lanczos step, each a global reduction. The
recurrence keeps three $n_b$-vectors regardless of degree, performs no
inner products beyond an optional residual norm check that can stop
the sweep before the degree limit, and streams pure
\texttt{axpy} arithmetic, a memory-access pattern that GPUs execute at
close to peak bandwidth. When the basis footprint is the limiting
factor, with higher degree limits, larger single tensors, or several
models resident on one device, the recurrence is the better option of
the two.

\paragraph{Downsides of the three term recurrence.} The substitution loses two
structural advantages of the stored basis. First, per HVP the
Krylov step is optimal: by Corollary~\ref{cor:dominate} the
Rayleigh--Ritz solve extracts the exact cubic model minimizer over
$\mathcal{K}_L(\Hb,g)$, whereas any fixed-coefficient recurrence step
is one particular polynomial in that same subspace. The recurrence
moreover needs its spectral interval in advance, estimated by extra
probe HVPs (power iterations or Hutchinson probes) whose cost precedes
the solve, and its minimax optimality holds only for the assumed
interval, while Lanczos adapts to the computed spectrum and right-hand
side with no prior bounds. Second, and specific to the cubic model:
the Krylov subspace $\mathcal{K}_L(\Hb,g)$ is independent of the
spectral shift $\lambda$ and of the regularization weight. With
$\Tb_L$ stored, every trial value of $\lambda$ in the secular
iteration of Lemma~\ref{lem:secular}, and every re-solve after a
rejected step changes $M_b$ or $\sigma$, costs $O(L)$ scalar
arithmetic on the tridiagonal eigendecomposition and \emph{zero} new
HVPs. A recurrence has no stored subspace to revisit: each secular
trial and each rejection retry is a fresh degree-$L$ sweep, so the
HVP count multiplies by the number of secular iterations (bracketing
solvers typically need several). Since one HVP is a double-backward
pass through the network and dominates wall-clock time at scale, this
multiplication generally outweighs the per-iteration savings.

\paragraph{Application scopes for the solvers.} The results of
Appendix~\ref{sec:feasibility} show which of the two costs dominates
at scale. At $91.4$M parameters the subproblem solve is not the
bottleneck: with the basis stored, the HVPs are spent once per step
to build the subspace, every secular trial afterwards costs only
$O(L)$ scalar arithmetic, and an accepted ARC-global step completes
in ${\approx}0.3$\,s (for this benchmark on this hardware). The failure observed at this scale is due to
the single global $\sigma$. Replacing the
stored basis by the recurrence would therefore relieve the one
constraint the basis does impose, its ${\approx}11$\,GB memory
footprint, while multiplying the HVP cost of every secular solve.
The recurrence is instead preferable in the
cases when the spectral bounds are cached and EMA updated
across steps (Section~\ref{sec:dso}), so the estimation cost is spread across steps, and a preconditioned \emph{gradient} step that requires no
secular iteration, so each update is exactly one sweep. A controlled
inner solver comparison is reported in the companion recurrence paper~\citep{podorozh2026recurrence}:
on reduced Newton systems where both solvers reproduce the outer
trajectory exactly, the fixed coefficient Chebyshev semi-iteration
requires ${\sim}540\times$ more matrix--vector products than the
adaptive CubicKrylov solver.

\paragraph{The evaluated hybrid designs.} We implemented several hybrid designs. One of them uses 
the cubic step of Algorithm~\ref{alg:blockcn} with the
Lanczos solve replaced by Chebyshev sweeps inside the secular
iteration. Then, we evaluated it under the equal-budget nano experimental design
of Table~\ref{tab:nanobudget}. The spectral interval is computed by 
Hutchinson probes, either re-estimated at every block step
(\emph{fresh bounds}, as in DSO-BlockHess) or EMA-cached
across steps with a warm-started shift (\emph{EMA-cached bounds}, as in
DSO-Adaptive). A first version (v1) precomputes the degree $L=1.3\sqrt{\kappa}$ 
inside a doubling-bracket secular solver, and a second (v2) adds an early exit
that stops the sweep once the residual norm falls below its threshold,
turning the precomputed degree into an upper bound rather than an
exact sweep length, and a slope-bounded
bracketing that exploits $\varphi'(\lambda)\le-1$ to compute the secular
root after one evaluation. The results substantiate the HVP
argument above with actual numbers. The best hybrid, v2 with EMA-cached
bounds, reaches $35.18$\,dB at $14.4$ HVPs per block step: the
shift passes the acceptance test in a single sweep at
the steady state, the expected behavior of Chebyshev recurrence from
DSO. Fresh bounds cost $24.4$--$34.3$ HVPs per block step and
$0.3$--$1.0$\,dB more, and v1's full-degree sweeps under EMA caching
(without the early exit due to the residual norm) give the worst PSNR value of the four ($33.55$\,dB). All
four variants run in $O(1)$ solver memory, three recurrence
vectors, no stored basis, and all four trail the stored-basis
CubicKrylov ($36.64$\,dB) by $1.5$--$3.1$\,dB at the same budget: with
the basis stored, every secular trial and every
rejection retry after an $M_b$ update re-solves on the stored
tridiagonalization at zero HVPs, while the recurrence performs a fresh
sweep: $51$k--$92$k rejected block trials in these runs. 

The trade-off is therefore ${\sim}1.5$\,dB at equal oracle budget
for a degree independent memory footprint. At nano scale ($\sim$ 15k parameters) 
that saving does not give anything: the $L{+}1$ Lanczos vectors of a block fit comfortably
in memory, so the stored-basis solver, CubicKrylov, wins outright over
all four hybrid variants (v1 and v2, under fresh and EMA-cached
bounds alike). The hybrid becomes the viable variant only at scales
where storing $L{+}1$ vectors of the block dimension exceeds the
memory budget, as it does for the $91.4$M-parameter model of
Appendix~\ref{sec:feasibility} with the ${\approx}11$\,GB basis
footprint quoted above, while the recurrence keeps three vectors at
any degree.

\paragraph{Refined hybrids.} Three refinements of the v2 hybrid
were further experimented with. The first
(\emph{v3}) adds a linear regularization floor to the secular solve:
the doubly regularized subproblem of the reduced-operator method of
\citet{doikov2025gcb} has a linear term $\alpha\|s\|$ alongside
the cubic one, and importing it as a tolerance-derived lower bound
$\alpha=\sqrt{M\,\tau\,\|g_b\|}$ on the spectral shift removes the
near-singular shifts at which the recurrence needs its highest
degrees. The second (\emph{v2r}) replaces the fixed refresh
schedule of the EMA-cached spectral bounds by a gradient-triggered
one: the gradient-normalized smoothness of \citet{semenov2025gns}
measures how far a cached curvature model remains valid in units of
the current gradient norm, so the Hutchinson probes are computed again
exactly when $\|g_b\|$ has dropped by a fixed factor since the last
estimate (with a hard bound on the visit count). The third
(\emph{grS}) is the most aggressive: the gradient regularized step
of the same paper replaces the secular iteration outright by the
closed-form shift $\lambda=\|g_b\|/\gamma_b$, reducing every block
step to exactly one recurrence sweep. Under the equal-budget nano
experimental design the ordering is v2r ($35.46$\,dB, $13.2$ HVPs per block
step), v3 ($35.34$\,dB, $18.2$), v2 EMA-cached ($35.18$\,dB,
$14.4$), grS ($34.93$\,dB, $12.8$): the gradient-triggered refresh
is the one refinement that beats the fixed schedule on both quality
and cost, while the closed-form shift obtains its lower HVP count at
the cost of
$0.5$\,dB.

\paragraph{The hybrids at full scale.} The three refined hybrids
were then run unchanged on the 91.4M parameter ViSIR, their intended scale, 
first on the single image experimental design of
Appendix~\ref{sec:feasibility} (two-hour limit), then on the full
extended budget ESM-20 benchmark of the companion recurrence paper~\citep{podorozh2026recurrence}. On the
single image, v2r reaches $64.2$\,dB best PSNR, v3 $62.9$\,dB, and
grS $59.8$\,dB, at a peak allocation of $7.0$--$8.7$\,GB \emph{for
the entire training run}, the $(L{+}1)$-vector Lanczos basis of
the ARC runs occupies ${\approx}11$\,GB on the decoder block alone, 
and grS indeed runs at $1.0$ HVP per block step. On ESM-20 the
comparison results in the following reliability ordering: v2r averages
$63.8\pm4.0$\,dB and v3 $61.2\pm4.0$\,dB with \emph{zero} failures
across the 60 runs, both above the stored-basis CubicKrylov reference
($58.5$\,dB) and below the per-block ARC controls
($67.0$--$67.4$\,dB), while grS fails outright on 6 of 60 runs
(stalling at ${\sim}4$\,dB; $60.4\pm6.6$\,dB over its successful
runs). 

The closed-form shift, in other words, is cheap and usually
sufficient, but its reliability depends entirely on a single scalar:
$\gamma_b$, the per-block gradient-normalized smoothness estimate
that converts the current gradient norm into the spectral shift
$\lambda=\|g_b\|/\gamma_b$.

When the cached $\gamma_b$ is stale or
inaccurate, the shift is wrong, and grS applies the resulting step
anyway: replacing the secular iteration removed the cubic acceptance
test along with it. The secular solving hybrids v2r and v3 keep that
test and reject such steps instead of applying them. The full scale
runs thereby confirm both halves of the nano conclusion. Where the
stored basis fits in memory, the experimental setups using the
CubicKrylov subsolver still lead. Where solver memory is scarce, the
refined recurrence hybrids become competitive: the best of them
(v2r) comes within $3.6$\,dB of the ARC leaders at a third of their
solver memory and ${\sim}30\%$ less wall-clock time per run.

\subsection{Memory-Matched Comparison}
\label{sec:memmatched}

The CubicKrylov subsolver (stored-basis Lanczos) keeps $L{+}1$ float64 basis vectors
of the block dimension, so on the 88.5M parameter decoder block each
vector costs ${\sim}0.708$~GB and a solver-memory budget $B$ limits the
degree to $L_{\max}(B)=\lfloor B/0.708\,\mathrm{GB}\rfloor-1$. The
three term recurrence holds three block sized vectors at \emph{any}
degree. We therefore compare the two subsolvers at matched solver
memory: ARC-block (stabilized) at $L_{\max}\in\{4,10,21\}$ (budgets
${\sim}4/8/16$~GB) against the degree-free recurrence, on the 91.4M
single image experimental design (image~0, seed~0, 2\,h limits), plus degree-bounded
nano experiments ($L_{\max}\in\{2,4\}$, 6.53M-geval budget) probing the
terminal phase where the effective condition number approaches
$\kappa$.

\begin{table}[t]
\centering\small
\caption{Memory-matched subsolver comparison on the 91.4M parameter
ViSIR (single image, seed 0; time-limited rows: 2\,h wall, fp32 HVP oracle;
full-budget run rows: the recorded 150-sweep runs of
Section~\ref{sec:fullvisir91}). Both best and final PSNR are quoted
because the bounded runs oscillate late. Run-to-run variation of this
single image fit spans several dB (Appendix~\ref{app:isotime}).}
\label{tab:memmatched}
\begin{tabular}{lccccc}
\toprule
Subsolver & $L_{\max}$ & best PSNR (dB) & final PSNR (dB) & peak GB
 & gevals \\
\midrule
Lanczos basis & 4  & 74.1 & 43.3 & 9.6 & $2.8{\times}10^{6}$ \\
Lanczos basis & 10 & \textbf{78.6} & 67.1 & 9.6 & $2.8{\times}10^{6}$ \\
Lanczos basis & 21 & 69.4 & 63.4 & 9.6 & $2.8{\times}10^{6}$ \\
Lanczos basis (full-budget run) & 15 & 64.4 & --- & ${\sim}11$ & --- \\
Chebyshev recurrence (full-budget run) & free & 64.2 & --- & 7.0--8.7 & --- \\
\bottomrule
\end{tabular}
\end{table}

Three observations. First, at these degree bounds the measured peak
memory is identical ($9.6$\,GB) across $L_{\max}\in\{4,10,21\}$: the
HVP autograd graph on the 88.5M parameter block
dominates the memory footprint, so the fp64-basis budget formula
$L_{\max}(B)$ enforces a bound only above $L\approx15$ (this run required
${\sim}11$\,GB). Second, accuracy at equal wall clock is not monotone
in the degree, $L_{\max}=10$ attains the best value, so a bounded
basis does not degrade below the degree-free recurrence at the same
memory size. Third, the same bounds at nano scale (6.53M-geval budget) lose
less than $1$\,dB: $35.6$\,dB at $L_{\max}=2$ and $36.2$\,dB at
$L_{\max}=4$ against $36.4$\,dB for the unbounded per-block ARC of
Table~\ref{tab:nanobudget}. The recurrence's remaining advantage is
therefore its constant, degree-independent memory footprint, observed at lower memory requirement
7.0--8.7\,GB against ${\sim}11$\,GB at similar accuracy in these experiments.

\subsection{Robustness to a bf16 HVP Oracle}
\label{sec:bf16}

Mixed-precision training makes the Hessian-vector oracle itself noisy.
We rebuild the per-block loss, gradient, and HVP graph under
\texttt{bfloat16} autocast (fp32 master weights; accept-or-reject
function evaluations stay fp32, isolating subsolver numerics) and rerun
both subsolvers on the nano experimental design and the 91.4M single image
experimental design. Both solvers project the same noisy operator, but the
CubicKrylov solver (stored basis) additionally relies on the mutual consistency of
$L$ successive HVP columns, while the recurrence uses each HVP once
with fixed coefficients. The measured quantities are the best-PSNR
delta against the fp32 controls, the rejection rate (bf16 noise enters
the per-block ratio test through the model decrease), and the sampled
stationarity residual $\|\nabla m_b(s)\|/\|g_b\|$ of the returned
step.

\begin{table}[t]
\centering\small
\caption{bf16 HVP-oracle robustness at matched configurations. Nano:
6.53M-geval budget, 2{,}400\,s limit; 91.4M: single image, 2\,h limit
(basis at $L{=}15$). fp32 references: the per-block ARC and
CubicCheby-DSO v2r rows of Table~\ref{tab:nanobudget} (nano) and the
full-budget run runs of Table~\ref{tab:memmatched} (91.4M).}
\label{tab:bf16}
\begin{tabular}{llcccc}
\toprule
Scale & Subsolver & fp32 (dB) & bf16 (dB) & $\Delta$ &
 residual (med/max) \\
\midrule
nano  & Lanczos basis & 36.4 & 30.5 & $-5.8$ &
 $1.8{\times}10^{-5}$ / $0.17$ \\
nano  & recurrence    & 35.5 & 26.5 & $-9.0$ & $0.75$ / $0.98$ \\
91.4M & Lanczos basis & 64.4 & 63.5 & $-0.9$ & --- \\
91.4M & recurrence    & 64.2 & 59.7 & $-4.5$ & --- \\
\bottomrule
\end{tabular}
\end{table}

The mechanism evaluation results: under bf16 the sampled basis
orthogonality stays at fp64 roundoff
($\max|Q^{\top}Q-I|\approx10^{-16}$--$10^{-13}$), while the returned
steps' stationarity residuals grow (to $0.17$ at the tail for the
basis; to $0.75$--$0.98$ throughout for the recurrence) and the nano
rejection rate rises from below $1\%$ to $27\%$ for the basis solver,
the damage enters through the projected model and the recurrence's
fixed-coefficient solve, not through lost orthogonality. The
recurrence loses more at both scales ($-4.5$ against $-0.9$\,dB at
91.4M): it applies each noisy Hessian--vector product once, with
coefficients fixed in advance, whereas the adaptive basis solve can
partially compensate.
Mixed-precision deployment of either subsolver therefore requires an
fp32 HVP implementation.

\section{Experiments on INR architectures}
\label{sec:inrstudy}

In this section we evaluate the
method against first-order methods on prominent INR architectures: SIREN~\citep{siren2020},
FINER~\citep{liu2024finer}, and the real-valued Gabor form of
WIRE~\citep{saragadam2023wire}, on $240\times240$ RGB image
regression (one network per image, 3 hidden layers of width 256,
${\sim}199$k parameters), and fit signed distance functions of the
Thai statue with IGR supervision~\citep{gropp2020igr} on the SIREN and
FINER backbones. Adam and SOAP are tuned as in the corresponding source papers and
run until convergence under a plateau criterion. The second-order experimental setups
are the blockwise ARC step and CubicKrylov, run to their own plateaus. Costs are reported in
gradient-equivalents. The metric \emph{hl} is the residual FFT power ratio
(Mid-High$+$High)$/$(DC$+$Low), near $1$ for band-uniform residuals.

\begin{table}[t]
\centering\small
\caption{2D image fitting at convergence (mean best PSNR, dB).
First-order experimental setups: 20 images; second-order
experimental setups: images 0--4 (ARC-block
on SIREN: 0--1; WIRE: 0--1 for all setups). Restricting the first-order
means to the same images changes them by less than $0.4$\,dB. conv
$a/b$: the run reached the plateau criterion on $a$ of $b$ images;
$+N$: $N$ runs ended at the time limit while still improving (their
PSNR is a lower bound).}
\label{tab:inr2d}
\begin{tabular}{llccccc}
\toprule
Arch & method & PSNR & hl & steps & gevals & conv \\
\midrule
FINER & Adam       & 65.2  & 0.22 & 112k & 1.1e5 & 20/20 \\
FINER & SOAP       & 75.1  & 0.05 & 200k & 2.0e5 & 20/20 \\
FINER & ARC-block  & \textbf{124.7} & 1.22 & 15k & 3.7e6 & 2/5$+$3 \\
FINER & CubicKrylov & 86.8  & 1.32 & 35k & 4.3e6 & 5/5 \\
SIREN & Adam       & 70.7  & 0.93 & 150k & 1.5e5 & 20/20 \\
SIREN & SOAP       & 71.4  & 0.05 & 218k & 2.2e5 & 20/20 \\
SIREN & ARC-block  & 69.9  & 2431$^{*}$ & 35k & 1.1e7 & 2/2 \\
SIREN & CubicKrylov & 58.7  & 502$^{*}$ & 34k & 4.2e6 & 5/5 \\
WIRE  & Adam       & 30.5  & 0.04 & 103k & 1.0e5 & 2/2 \\
WIRE  & SOAP       & 31.6  & 0.07 & 100k & 1.0e5 & 2/2 \\
WIRE  & ARC-block  & 29.4  & 0.01 & 2.2k & 4.7e5 & 2/2 \\
\bottomrule
\end{tabular}
\end{table}

$^{*}$hl is a within-run ratio: on SIREN the second-order runs drive
the DC and Low residual to the machine-precision floor
(${\sim}10^{-7}$--$10^{-6}$ per coefficient, 3--6 orders below Adam)
while matching Adam's high-band accuracy, so the large values arise
from the denominator. Judged on absolute per-band residual power,
ARC-block is at or below Adam in every band on image~0. Total MSE is dominated by the coefficient-rich
High band.

\begin{table}[t]
\centering\small
\caption{SDF fitting, Thai statue, IGR supervision, 100k training
samples (one run per row, seed 0). conv = plateau criterion reached
(``limit'' means that time limit reached while still improving).}
\label{tab:sdf}
\begin{tabular}{llccccc}
\toprule
Backbone & method & chamfer-L1 & train dB & test dB & steps & conv \\
\midrule
FINER & Adam       & 0.0842 & 76.1  & 27.3 & 100k & yes \\
FINER & SOAP       & 0.0153 & 82.1  & 34.5 & 165k & yes \\
FINER & ARC-block  & 0.0177 & \textbf{131.0} & 34.0 & 9.1k & yes \\
FINER & CubicKrylov & 0.0185 & 106.8 & 32.5 & 11.9k & yes \\
SIREN & Adam       & 0.0092 & 92.7  & 47.7 & 178k & yes \\
SIREN & SOAP       & \textbf{0.0089} & 80.8 & 47.1 & 282k & yes \\
SIREN & ARC-block  & 0.0135 & 71.8  & 41.2 & 14.6k & limit \\
SIREN & CubicKrylov & 0.0102 & 61.3  & \textbf{49.5} & 17.4k & yes \\
\bottomrule
\end{tabular}
\end{table}

The comparison distinguishes two cases. On FINER the separation is a matter of
reachability: converged Adam plateaus at 65.2\,dB in 2D and at a
chamfer error $4.6$--$5.5\times$ above every curvature-aware method in
3D, and no additional budget affects these results. The blockwise
cubic step continues to 120--129\,dB with a band-uniform residual. On
SIREN the methods are comparable at convergence (CubicKrylov reaches the best SDF test accuracy, 49.5\,dB), and the
wall-clock comparison favors the first-order method, CubicKrylov
required $8.3\times$ Adam's wall time.

\section{Extended Comparison with ARC-Based Approaches}
\label{app:arcdetail}

This appendix reviews, family by family, the ARC-based methods
summarized in Tables~\ref{tab:arccomp} and~\ref{tab:subsolvers} of
Section~\ref{sec:arc-comparison}.

\paragraph{Cubic model minimization in Krylov subspaces.}
\citet{bellavia2025subspace} compute the ARC trial step by minimizing
the cubic model in a low-dimensional subspace, polynomial or rational
Krylov, that is reused across several iterations, and retain the
worst-case complexity of classic ARC. Their construction operates in
the full space with a single regularization parameter and is aimed at
problems where sparse factorizations or direct solves are available.
The recurrence of this work instead reuses per-block Krylov bases
under a per-block $\sigma_b$ with Hessian--vector products only.
\citet{aldaas2025extkrylov} observe that the solutions of trust-region
and norm-regularization subproblems, viewed as functions of the
regularization weight, span a very low-dimensional subspace, and build
a basis for the whole family by an extended Krylov iteration requiring
a single matrix factorization: one subspace then serves every value of
the weight. The observation matches the needs of an adaptive cubic
constant, but the factorization step is not available in the
matrix-free setting considered here. \citet{tansley2025lowrank} adapt
the dimension of a random subspace to the rank of the projected
Hessian and retain the full-dimensional cubic regularization rate.
Random projections replace the deterministic tensor blocks used here.
\citet{zhou2025regnewton} reach the same $\mathcal{O}(\eps^{-3/2})$
complexity with a quadratic local rate by regularizing the Newton
system with a quantity built from current and previous gradients and
solving it by conjugate gradients with a negative-curvature monitor.
Their method is parameter-free but adapts one global regularizer
rather than one per block.

\paragraph{Blockwise and partition-based cubic methods.}
\citet{cristofari2024block} selects one block per iteration by a
greedy Gauss--Southwell rule on the stationarity violation and
approximately minimizes a cubic model of that block, with global
convergence and $\mathcal{O}(\eps^{-3/2})$ worst-case complexity.
\textsc{ARC-Block} instead updates all blocks in every step, each with
its own cubic constant and acceptance test.
\citet{wolinski2024partition} computes exact projections of the
Hessian and of third-order derivatives onto the subspaces induced by a
partition of the parameters and outputs one learning rate per subset.
The per-tensor granularity is shared with \textsc{ARC-Block}, but no
cubic model, regularization constant, or acceptance test is attached
to a subset.

\paragraph{Quasi-Newton cubic models with closed-form subproblems.}
\citet{ranganath2025sr1} combine limited-memory symmetric-rank-one
curvature approximations, which admit indefiniteness and negative
curvature, with a cubic scheme whose subproblems have closed-form
solutions for suitable regularization choices. The ARCLQN solver of
\citet{forristal2022arclqn} solves the cubic subproblem exactly and
matrix-free by exploiting the internal structure of limited-memory
quasi-Newton matrices. Both replace the exact block Hessian with a
low-rank surrogate. The CubicKrylov and $\varphi_1$ subsolvers of this
work apply the exact blockwise Hessian through Hessian--vector
products.

\paragraph{Stability of Chebyshev recurrences under Hessian variation.}
\citet{pasechnyuk2026cheby} quantify the $\ell_2$ gain of a
prefix-exact Chebyshev recurrence under time-dependent Hessian
perturbations and prove the bound sharp in the causal two-term class.
This analysis addresses the same failure mode that the negative-mode
clamp (Section~\ref{sec:clamp}) and the trust-controlled horizons of
the $\varphi_1$ subsolver control in practice, but their bound is
proven for the recurrence in isolation under an adversarial
perturbation model. This paper instead couples the same recurrence to
a per-block adaptive cubic constant and an outer acceptance test, and
demonstrates the resulting stability empirically at the
91.4M parameter scale (Appendix~\ref{sec:feasibility}) rather than
through a closed-form gain bound.

\paragraph{Parameter-free and inexact-Hessian ARC theory.}
\citet{cartisjerad2025secant} approximate the highest-order tensor of
a $p$-th-order regularization method from lower-order derivatives and
refresh the approximation every $m$ steps while retaining complexity
to second-order stationarity: theoretical cover for reusing stale
per-block curvature between refreshes. \citet{marumo2026recipe} obtain
complexity bounds without backtracking or acceptance tests by
regularizing the local model with an exponent above the model-error
order, and \citet{fang2026paramfree} give a parameter-free
cubic regularized Newton method with sharp complexity under a
generalized smoothness condition. Both remove the constants that the
per-block ratio test of \textsc{ARC-Block} estimates online.
\citet{shestakov2025inexact} analyze adaptive regularized Newton
methods under a range of inexact Hessian models and Bregman
geometries, complementary theory for approximate block curvature, but
their experiments do not extend past moderate-dimensional problems.
The per-block Krylov subsolver of this paper is instead evaluated up
to a single 88.5M parameter tensor (Appendix~\ref{sec:feasibility}).

\paragraph{Stochastic, variance-reduced, and lazy cubic Newton.}
\citet{pasechnyuk2025vr} give a variance-reduced cubic Newton method
for finite sums whose analysis assumes only an inexact
cubic subproblem certificate, which is the interface a per-block
Krylov subsolver satisfies. \citet{scheinberg2023sarc} prove
high-probability complexity bounds for stochastic ARC under
probabilistic zeroth-, first- and second-order oracles: the oracle
conditions a noisy per-block acceptance test would need.
\citet{chen2025len} quantify the computational gain of reusing
Hessians across iterations while keeping near-optimal iteration
counts, a cost model directly comparable to reusing a block's Krylov
basis across $\sigma_b$ updates. None of these analyses is combined
with a per-block adaptive cubic constant or evaluated on
network-scale training. This paper considers only the full-batch,
deterministic setting (Section~\ref{sec:limitations}), and combining
the per-block acceptance test with a stochastic or variance-reduced
oracle is left open.

\paragraph{Subspace and multilevel second-order methods.}
\citet{fuji2025subspace} prove that the randomized subspace
regularized Newton method attains at best a local linear rate in
general, an argument for structured per-tensor blocks over random
projections. \citet{higuchi2025rshtr} reach
$\mathcal{O}(\eps^{-3/2})$ iterations with a homogenized trust-region
subproblem, an eigenvalue problem solvable by Krylov methods and
solved as a single problem rather than per block.
\citet{tsipinakis2026multilevel} prove super-linear rates for
stochastic low-rank Newton methods through a multigrid correspondence,
with a truncated variant aimed at saddle escape in high dimension.
Curvature is compressed by level rather than partitioned by tensor.

\paragraph{Krylov-type inner solvers for regularized Newton systems.}
\citet{zeng2026newtoncg} adaptively regularize the Newton system
inside a conjugate-gradient solver with auto-conditioning, removing
the nested line searches of earlier Newton-CG methods under H\"older
continuous Hessians. \citet{zeng2026minres} generate descent
directions with the minimal-residual Krylov method MINRES and exploit
detected non-positive curvature for saddle avoidance. Both spend the
same Hessian--vector-product budget as a Lanczos or Chebyshev cubic
subsolver but attach it to a quadratic rather than a cubic model.

\paragraph{Second-order methods at deep learning scale.}
\citet{abreu2026gn} apply full Gauss--Newton preconditioning to
transformers of up to 150M parameters and find that a layerwise
preconditioner ignoring cross-layer information nearly matches full
Gauss--Newton: direct empirical support for partitioning curvature by
tensor, obtained for a Gauss--Newton rather than a cubic model.
\citet{gomes2025adafisher} preconditions with a per-layer
block-Kronecker approximation of the Fisher information matrix. The
curvature within a block is factored, whereas \textsc{ARC-Block}
applies the exact block Hessian through Hessian--vector products.

\paragraph{Trust-region and ARC hybrids.}
\citet{hamad2024tr} give an adaptive trust-region method with inexact
subproblem solves whose complexity bound improves earlier
trust-region results, benchmarked head-to-head against ARC.
\citet{ha2026regastro} add an adaptively regularized local model to
adaptive-sampling trust-region optimization and reach almost-sure
$\mathcal{O}(\eps^{-3/2})$ iteration complexity, up to logarithmic
factors, under subexponential noise. Both adapt a single global radius
or regularizer, with no per-block granularity.

\paragraph{Cubic regularization beyond unconstrained training.}
Cubic regularization has also moved into settings not considered here:
Riemannian adaptive regularization with explicit
Hessian--vector-product complexity accounting under subproblem solver
inexactness \citep{zhangjiang2024riemannian}, equality-constrained
sequential ARC descending from the multi-shift ARC$_{q}$K line
\citep{pei2025ssarcqk}, parameter-free cubic regularization for
convex-concave minimax problems \citep{wang2024minimax}, and federated
cubic Newton with differential privacy \citep{huo2024dpfcrn}. These
works transfer the ARC template to other problem classes. None
partitions the regularization within a single training problem.

\section{Iso-Time and Oracle-Budget Comparisons at 91.4M Parameters ViSIR}
\label{app:isotime}

\subsection{Iso-time results}
\label{sec:isotimeresults}
In the iso-time experiment, every optimizer is granted the same wall-clock
budget $T=148$\,s (the 100-step time of the CubicKrylov optimizer on the
idle GPU). Cheaper
first-order and diagonal methods therefore run many more steps.
The results are summarized in Table~\ref{tab:isotime} and Figure~\ref{fig:isotime}. 

First, given equal wall-clock time, the CubicKrylov optimizer still performs better than
the first-order and the variants with the diagonal Hutchinson approximation by a wide margin.
It also leads in the experiment that lets it perform 
$100$ steps and allows all other methods to run for the same wall-clock time that those $100$ steps took. Every first-order or diagonal method plateaus far below
it (Adam and SOAP near $20$--$21$\,dB, Muon and AdaHessian lower) despite running
thousands of steps in the same amount of time. Second, we note that this is a
\emph{single image} reconstruction task, so high PSNR reflects the
quality of the one image reconstruction rather than generalization. 

\begin{table}[h]
\centering\small
\caption{Iso-time comparison at $T=148$\,s per method ($91.4M$ ViSIR,
image \#1). ``steps'' is the number of steps in time T}
\label{tab:isotime}
\begin{tabular}{lcc}
\toprule
Method & PSNR (dB) & steps in $T$ \\
\midrule
\textbf{CubicKrylov} & \textbf{67.28} & 100 \\
Cheby-2 $n{=}5$ & 52.46 & 187 \\
Cheby-1 $n{=}3$ & 33.84 & 151 \\
Cheby-2 $n{=}3$ & 25.69 & 201 \\
Cheby-1 $n{=}5$ & 25.19 & 141 \\
Newton $n{=}3$ & 24.26 & 243 \\
Newton $n{=}5$ & 22.67 & 217 \\
Adam & 20.71 & 17{,}830 \\
SOAP & 20.61 & 3{,}311 \\
Muon+AdamW & 11.43 & 9{,}962 \\
AdaHessian & 5.89 & 7{,}427 \\
\bottomrule
\end{tabular}
\end{table}

\begin{figure}[h]
\centering
\includegraphics[width=0.49\textwidth]{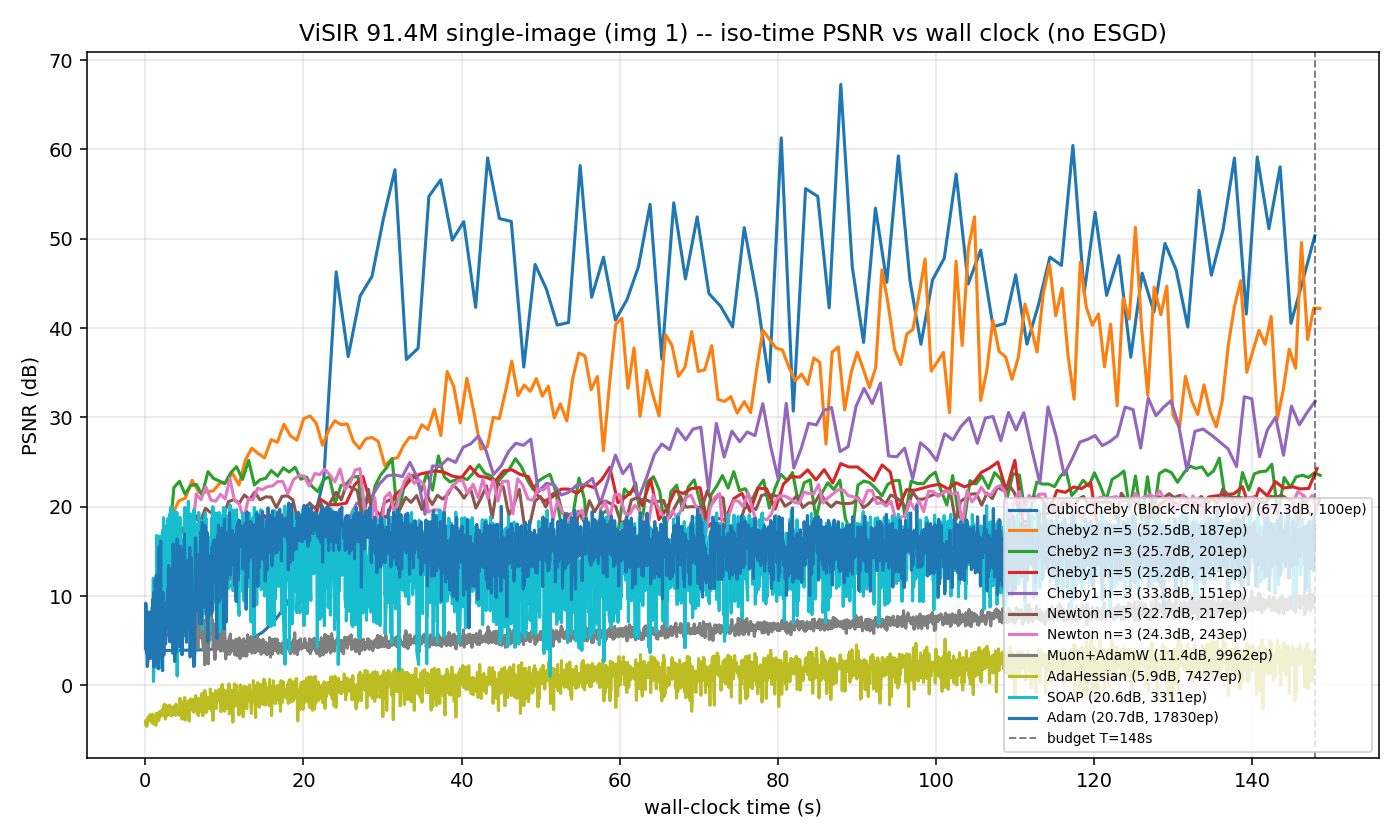}
\includegraphics[width=0.49\textwidth]{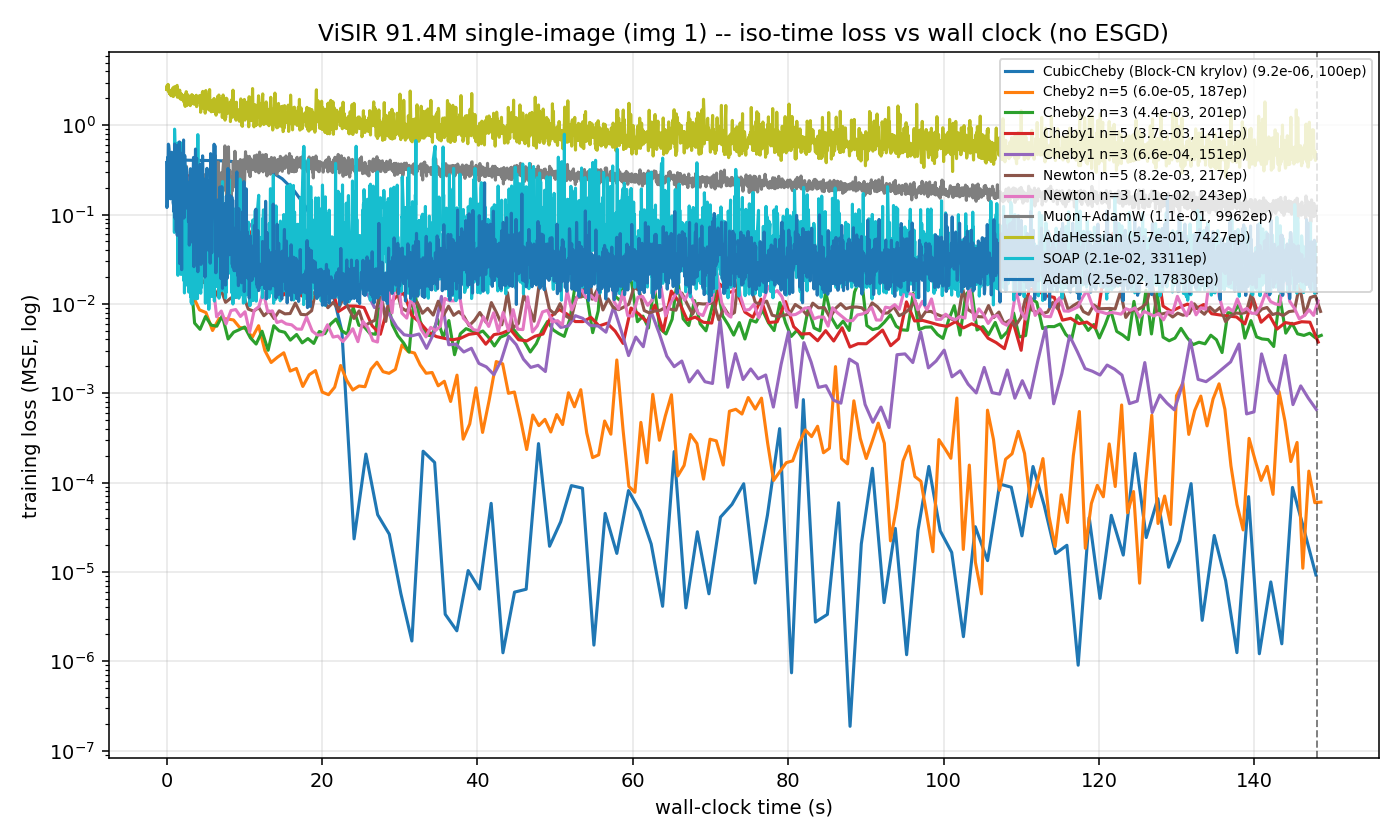}
\caption{Iso-time training curves (PSNR and MSE loss vs.\ wall-clock time) at the
$T=148$\,s budget.}
\label{fig:isotime}
\end{figure}

\subsection{Convergence vs.\ oracle cost (gradient-equivalents)}
\label{sec:gevals}
The comparison metric common in the second-order optimization literature is
the number of \emph{oracle calls}, gradient
and Hessian(-vector) evaluations, counted here in \emph{gradient-equivalents}
(gevals; one HVP $\approx$ one backward pass) \citep{doikov_thesis,doikov2023poly} rather than
the number of steps or wall-clock time.
Note that this metric is hardware-independent and correctly credits methods
that reuse curvature information.
Table~\ref{tab:gevals} and Figure~\ref{fig:gevals} report PSNR against cumulative
gevals.

The \emph{per-step} oracle cost is evaluated by this experiment. The matrix-free
Krylov step of CubicKrylov costs ${\approx}368$ gevals/step (lazy small-block
Hessians plus a degree-$10$ Krylov solve on the large tensors), whereas the
full Hessian Chebyshev-ON/OFF/Newton steps cost ${\approx}8000$ gevals/step
because they rebuild the exact small-block Hessians ($\sum_{n_b\le512} n_b\approx7900$
gevals) \emph{every} step. This is a ${\sim}22\times$ reduction. Consequently, within
the $100$-step oracle budget of CubicKrylov ($36{,}836$ gevals) the full Hessian
methods complete only ${\sim}5$ steps and reach $\le19$\,dB, while CubicKrylov
reaches $67$\,dB. First-order methods (1 geval/step) plateau near $20$\,dB.
In this experiment the block size independence of the Krylov
step (Section~\ref{sec:algorithm}) is crucial. It reduces the cost of a
curvature-aware cubic step from ${\sim}8000$ to ${\sim}368$ gevals.

\begin{table}[h]
\centering\small
\caption{Best PSNR reached within a matched oracle budget of $36{,}836$
gradient-equivalents (CubicKrylov's 100-step cost) on $91.4M$ ViSIR.
$^{\ddagger}$See Table~\ref{tab:isotime}.}
\label{tab:gevals}
\begin{tabular}{lcc}
\toprule
Method & gevals/step & PSNR at ${\le}36{,}836$ gevals \\
\midrule
\textbf{CubicKrylov} & \textbf{368} & \textbf{67.3}$^{\ddagger}$ \\
SOAP & 1 & 20.6 \\
Adam & 1 & 20.4 \\
Cheby-ON $n{=}5$ (Chebyshev-2) & 8049 & 19.1 \\
Newton $n{=}5$ (damped) & 8049 & 13.8 \\
Muon+AdamW & 1 & 11.3 \\
Cheby-1 $n{=}5$ & 8049 & 7.5 \\
AdaHessian & 2 & 5.8 \\
\bottomrule
\end{tabular}
\end{table}

\begin{figure}[h]
\centering
\includegraphics[width=0.7\textwidth]{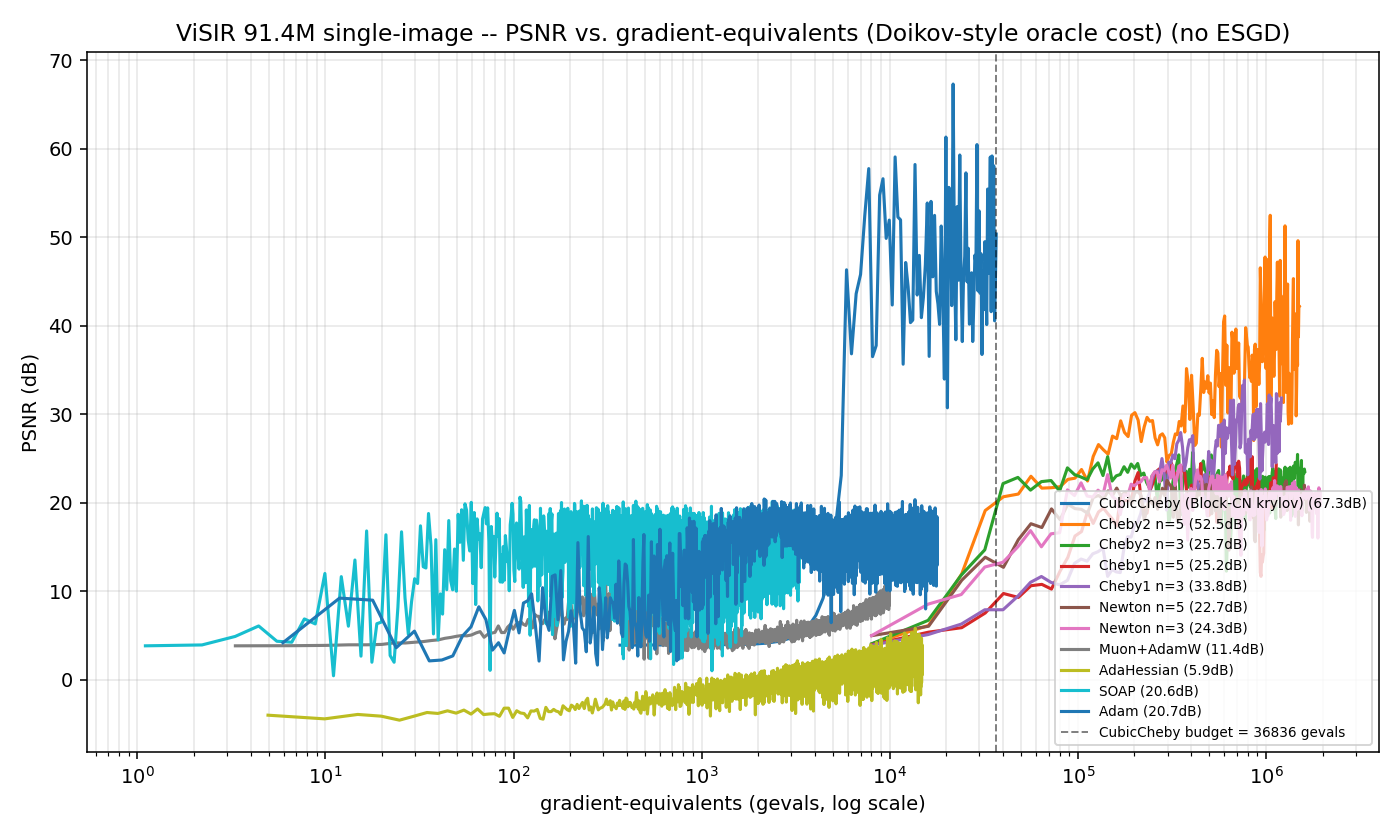}
\caption{PSNR vs.\ cumulative gradient-equivalents (log scale). At matched oracle budget CubicKrylov (named CubicCheby in the plot)
dominates the full Hessian Chebyshev steps by ${\sim}48$\,dB (they exhaust
the budget in ${\sim}5$ steps), because its matrix-free Krylov step is
${\sim}22\times$ cheaper per step.}
\label{fig:gevals}
\end{figure}

\section{Multi-Image, Multi-Seed ViSIR-Nano Benchmark (ESM-20)}
\label{app:esm20nano}
To rule out a single image artifact, the nano configurations of Section~\ref{sec:nano} were
rerun in the style of the ESM-20 benchmark of
Appendix~\ref{app:nanowall}: the first 20 ESM images with three random seeds
each. Two reference methods are added at this scale: first kind spectral
preconditioned gradient descent with $\tau{=}10$
\citep{doikov2024spectral}, and a second implementation of
DSO-BlockHess that eigendecomposes every per-layer Hessian and
preconditions with absolute eigenvalues (the implementation of the
companion paper's approximation-fidelity study), for ten methods and 600 runs
total. The budget rule differs from the 91.4M ESM-20
experiment, which grants every optimizer the same 150 optimization steps:
here, for each (image, seed) pair, CubicKrylov runs its 100-step ViSIR-nano
experiment first and every reference method is then granted
\emph{twice} the grad-equivalents CubicKrylov spent on that pair. A
fixed step count would be meaningless across this set, since a single
dense Hessian build costs 15.2k grad-equivalents, and the doubled oracle
budget rules out budget starvation as the explanation for the ranking.

\begin{table}[h]
\centering\small
\caption{ESM-20-style nano benchmark (20 images $\times$ 3 seeds,
ViSIR-Nano, 15{,}235 parameters). Best PSNR per run, aggregated over the
60 runs per method; mean wall-clock and total grad-equivalents per run.
Right: paired per-image comparison against CubicKrylov (mean over seeds):
mean PSNR advantage of CubicKrylov, its per-image win rate, and the
Wilcoxon signed-rank $p$-value ($N{=}20$ images).}
\label{tab:nanoesm20}
\resizebox{\textwidth}{!}{%
\begin{tabular}{lcccc|ccc}
\toprule
 & \multicolumn{4}{c|}{aggregate over 60 runs} &
   \multicolumn{3}{c}{paired vs.\ CubicKrylov} \\
Method & PSNR (dB) & median & wall (s) & gevals &
  $\Delta$PSNR & win\% & $p_W$ \\
\midrule
\textbf{CubicKrylov} (nano) & $\mathbf{32.8\pm1.4}$ & \textbf{32.9} & 21 &
  $1.8{\times}10^4$ & --- & --- & --- \\
Reg.\ Newton lazy, $m{=}n$~\citep{doikov2023lazy} & $32.4\pm1.2$ & 32.5 &
  298 & $3.0{\times}10^4$ & $+0.3$ & 85\% & $3.9{\times}10^{-4}$ \\
CN-Lazy, $m{=}n$~\citep{doikov2023lazy} & $32.1\pm1.1$ & 32.1 &
  305 & $3.0{\times}10^4$ & $+0.7$ & 95\% & $3.8{\times}10^{-6}$ \\
Spectral-precond.\ GD $\tau{=}10$~\citep{doikov2024spectral} &
  $27.7\pm3.3$ & 28.5 &
  116 & $3.6{\times}10^4$ & $+5.1$ & 100\% & $1.9{\times}10^{-6}$ \\
SSCN $\tau{=}64$~\citep{zhao2025sscn} & $24.4\pm1.2$ & 24.4 &
  126 & $3.6{\times}10^4$ & $+8.4$ & 100\% & $1.9{\times}10^{-6}$ \\
FD-CNM, $m{=}50$~\citep{doikov2023fo} & $14.8\pm4.2$ & 14.6 &
  124 & $4.6{\times}10^4$ & $+17.9$ & 100\% & $1.9{\times}10^{-6}$ \\
DSO-BlockHess (per-layer full Hessian) & $9.0\pm0.7$ & 9.0 &
  9 & $4.6{\times}10^4$ & $+23.8$ & 100\% & $1.9{\times}10^{-6}$ \\
DSO-BlockHess (abs-eig, rebuild every step) & $8.2\pm0.8$ & 8.2 &
  5 & $4.6{\times}10^4$ & $+24.6$ & 100\% & $1.9{\times}10^{-6}$ \\
Adaptive cubic Newton, $m{=}1$~\citep{doikov2023lazy} & $7.9\pm1.3$ & 8.1 &
  182 & $4.6{\times}10^4$ & $+24.9$ & 100\% & $1.9{\times}10^{-6}$ \\
Cubic Newton (fresh Hessian)~\citep{nesterov2006cubic} & $3.9\pm0.9$ & 4.4 &
  183 & $4.6{\times}10^4$ & $+28.9$ & 100\% & $1.9{\times}10^{-6}$ \\
\bottomrule
\end{tabular}}
\end{table}

\begin{figure}[h]
\centering
\includegraphics[width=0.95\textwidth]{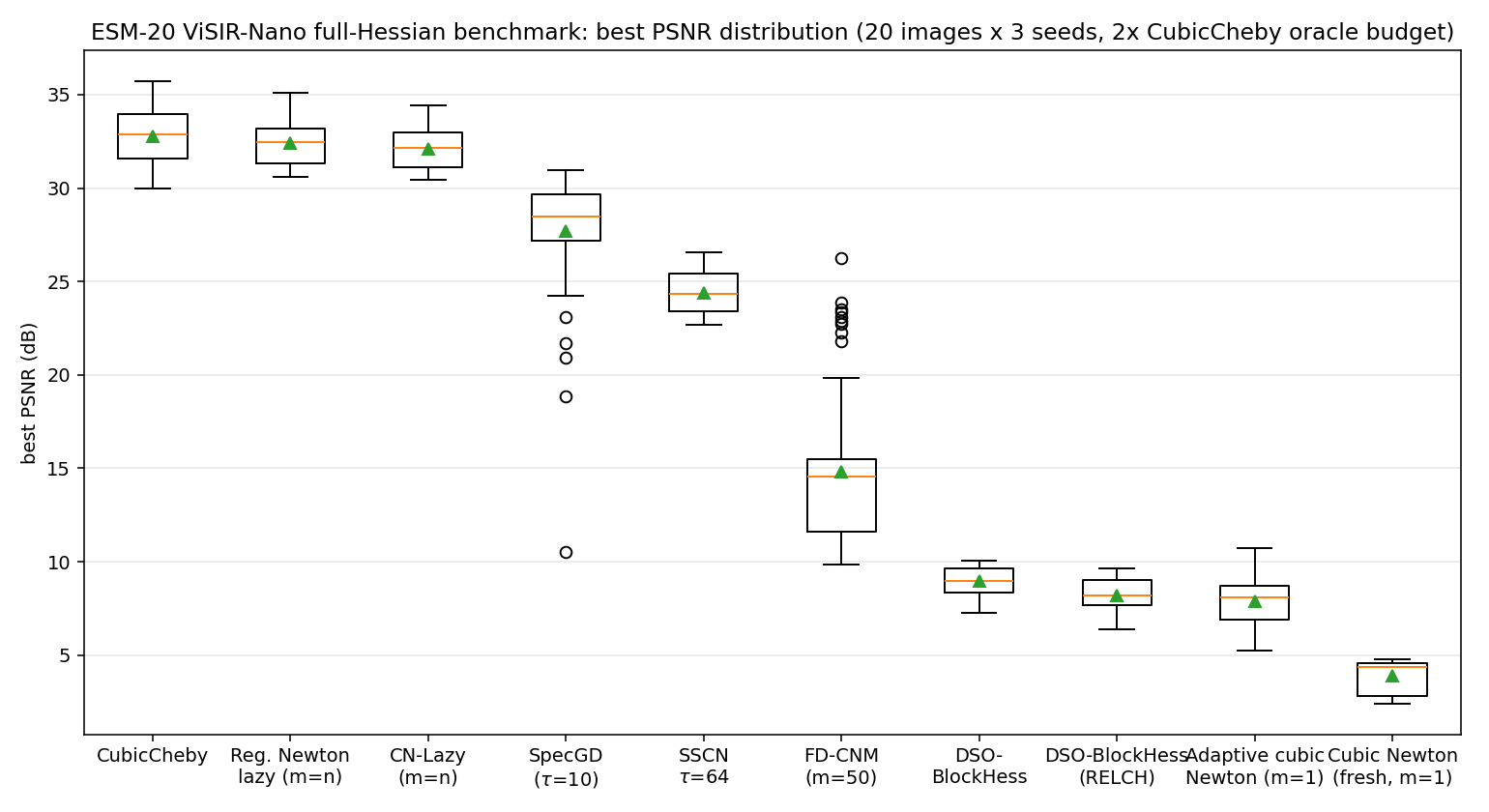}
\caption{Best-PSNR distributions over the 60 runs per method (20 ESM
images $\times$ 3 seeds, ViSIR-Nano). Every reference method is
granted twice the CubicKrylov oracle budget of the same (image, seed)
pair. Boxes show quartiles; triangles the means.}
\label{fig:nanoesm20}
\end{figure}

The single image ranking is stable across images and seeds
(Table~\ref{tab:nanoesm20}, Figure~\ref{fig:nanoesm20}). CubicKrylov
reaches $32.8\pm1.4$\,dB and beats gradient regularized Newton on 17 of
20 images ($+0.3$\,dB mean, Wilcoxon $p=3.9\times10^{-4}$) and lazy cubic
Newton on 19 of 20 ($+0.7$\,dB, $p=3.8\times10^{-6}$), while consuming
$59\%$ of their oracle budget and ${\sim}14\times$ less wall-clock. The
margins differ from those of the 91.4M ESM-20 benchmark: at nano
scale, where the dense $15\mathrm{k}\times15\mathrm{k}$ Hessian is
affordable, the lazy $m{=}n$ variants are genuine competitors whose
distributions overlap CubicKrylov's, and the advantage of the matrix-free
block step is its cost. The quality of the steps is on par. First kind spectral preconditioned
gradient descent~\citep{doikov2024spectral} falls between the lazy
variants and SSCN at $27.7\pm3.3$\,dB: its rank-10 preconditioner is
cheap to maintain, but ten eigendirections cannot equalize the
contraction rates of a SIREN spectrum. Every other configuration
loses on all 20 images at the smallest attainable $p$-value, for the
same reasons as in the single image reference block: frequent dense
rebuilds exhaust the budget within a handful of steps, and the
$\tau{=}64$ subspace of SSCN cannot match full-space curvature. The
small DSO-BlockHess wall-clock entries deserve a caveat: per-layer
rebuilds are vectorized, hence wall-clock-cheap but oracle-expensive
(one rebuild of all layers costs $n$ grad-equivalents), so the budget
is spent within ${\sim}21$ steps in ${\sim}9$\,s for the
rebuild-every-10 variant and within 3 steps for the abs-eig variant
that rebuilds every step. For the plain-Newton variant this is not budget
starvation: an equal-time control on image \#0 (300\,s, 670 steps,
${\approx}10^{6}$ grad-equivalents) still plateaus at $9.2$\,dB. The
abs-eig variant, by contrast, recovers to 33.1\,dB when the budget rule
is lifted. Appendix~\ref{app:nanobudget} reports an equal-budget control
that separates rebuild accounting from genuine step quality for the
whole reference panel.

\section{Wall-Clock Statistics of the Multi-Image Nano Benchmark}
\label{app:nanowall}


The wall-clock column of Table~\ref{tab:nanoesm20} reports the
\emph{mean} per-run time over the 60 runs of each method (20 images
$\times$ 3 seeds). Table~\ref{tab:nanowall} reports the spread behind those
means.

\begin{table}[h]
\centering\small
\caption{Per-run wall-clock statistics of the multi-image ViSIR-nano benchmark
(Table~\ref{tab:nanoesm20}): mean, standard deviation, extremes, and
coefficient of variation over the 60 runs per method.}
\label{tab:nanowall}
\begin{tabular}{lccccc}
\toprule
Method & mean (s) & std (s) & min (s) & max (s) & CV \\
\midrule
CubicKrylov  & 20.5 & 8.7 & 14.5 & 60.4 & 43\% \\
Reg.\ Newton lazy, $m{=}n$ & 298.3 & 35.6 & 274.9 & 424.3 & 12\% \\
CN-Lazy, $m{=}n$ & 304.9 & 27.6 & 284.8 & 390.1 & 9\% \\
Spectral-precond.\ GD $\tau{=}10$ & 115.7 & 5.9 & 107.4 & 136.8 & 5\% \\
SSCN $\tau{=}64$ & 125.7 & 7.1 & 114.1 & 144.8 & 6\% \\
FD-CNM, $m{=}50$ & 123.9 & 22.5 & 106.3 & 232.5 & 18\% \\
DSO-BlockHess & 8.7 & 2.6 & 6.0 & 16.7 & 30\% \\
DSO-BlockHess (abs-eig) & 4.8 & 0.1 & 4.8 & 5.5 & 2\% \\
Adaptive cubic Newton, $m{=}1$ & 182.0 & 3.6 & 176.7 & 195.8 & 2\% \\
Cubic Newton (fresh Hessian) & 182.5 & 5.1 & 175.7 & 198.4 & 3\% \\
\bottomrule
\end{tabular}
\end{table}

Methods with fixed size dense linear solves are
nearly independent of the target image: the cost of building a dense
Hessian for a $15\mathrm{k}\times15\mathrm{k}$ parameter block and of
its eigendecomposition depends only on the block size, not on the
image being fit, so the fresh Hessian cubic Newton variants, the
spectral preconditioner, and SSCN vary by only 2--6\%. The lazy
$m{=}n$ methods vary moderately (coefficient of variation (CV)
${\sim}$9--12\%): their time is dominated by one dense build plus
15{,}000 cheap iterations whose timing fluctuates from run to run,
apparently with GPU load, rather than with the processed image.
CubicKrylov shows the largest \emph{relative} spread (CV 43\%,
per-image means from 14.9 to 31.1\,s, worst single run 60.4\,s)
precisely because its work does depend on the target image: harder
images cause more rejected steps, each rejection forces a fresh lazy
Hessian rebuild in phase A, and phase B spends additional
adaptive $M_b$ retries. The two DSO-BlockHess variants run at a small
absolute time scale (5--17\,s) because of the early oracle budget
exhaustion discussed in Section~\ref{sec:nano}. At these short times
the abs-eig variant, the implementation that eigendecomposes every
per-layer Hessian and preconditions with the absolute values of the
eigenvalues (Section~\ref{sec:nano}), is nearly deterministic (CV
2\%), while the plain Newton variant varies more (CV 30\%).

The comparison in Table~\ref{tab:nanoesm20} corresponds to these results:
the \emph{worst} CubicKrylov run (60.4\,s) is still ${\sim}4.5\times$
faster than the \emph{fastest} run of either lazy cubic Newton reference
with $m{=}n$ (274.9\,s). It has the ${\sim}14\times$ mean wall-clock advantage.

\FloatBarrier    
\clearpage       

\section{Equal Budget Control at Nano Scale (15k parameters ViSIR replica): Hessian Recomputation Cost
versus Step Quality}
\label{app:nanobudget}


The oracle budget rule of the nano benchmark (twice the
grad-equivalents CubicKrylov spends; Section~\ref{sec:nano}) affects the results of the methods with a dense per-layer Hessian computation very much. The methods 
that compute Hessian frequently are stopped after a handful of steps.
This raises a fairness question: does the low ranking of the
DSO-BlockHess variants (and of cubic Newton with Hessian recomputation every step) reflect
poor \emph{steps} or merely expensive \emph{accounting}? On other
benchmarks the same second-kind family is strong: the DSO-Adaptive variant
that recomputes the full block Hessian only every $K$ steps, EMA-smooths
it ($\beta{=}0.9$), runs a Hutchinson-diagonal Chebyshev step on
well-conditioned blocks, and adds an element-wise negative-curvature
escape performs better than every
Chebyshev \emph{first}-kind run by two to three orders of magnitude in
MSE on a saddle-rich 2-D regression benchmark, a ranking that was
re-verified for this appendix with a fresh three-seed rerun accurately reproducing
the results achieved earlier on this same benchmark. 
The control experiment below checks whether
that ordering is preserved if the methods are run until convergence.

\paragraph{Experimental design.}
All runs use the single image nano mode experimental design (ESM image \#0, seed 42,
15{,}235 parameters). Grad-equivalents for DSO-Adaptive are charged
conservatively at the harness level, without modifying the optimizer:
one per closure call (including line-search evaluations), one per
Hutchinson Hessian-vector product (3 per block per step), and the block
size per full block Hessian computation. Three stages: (i)~DSO-Adaptive inside the
benchmark budget of $2\times17.7\mathrm{k}$ grad-equivalents, with
Hessian re-computation frequency of $K\in\{5,10,20\}$; (ii)~DSO-Adaptive unconstrained,
to locate its convergence plateau; (iii)~every optimizer of the nano
benchmark, plus the global-Krylov and per-block ARC implementations of
Appendix~\ref{sec:feasibility}, granted that plateau budget, which is equal to
$6.53{\times}10^{6}$ grad-equivalents (${\sim}185\times$ the benchmark
rule). Additionally, a 2{,}400\,s wall-clock limit was used to have a reasonable termination condition for this case.

\paragraph{Stage (i): inside the benchmark budget.}
DSO-Adaptive does not have sufficient budget to reach its reasonable performance level exactly like the other methods that recompute Hessian frequently: $K{=}5$ gives 11 steps (12.6\,dB), $K{=}10$ 21 steps
(16.6\,dB), $K{=}20$ 41 steps (19.9\,dB). Doubling the Hessian recomputation frequency
doubles the usable steps and adds ${\sim}3.5$\,dB, but even $K{=}20$
remains far below the first-kind spectral preconditioner (28\,dB),
because the curvature based mode dispatcher of DSO-Adaptive routes ${\sim}96\%$ of the parameters
through full block Hessian mode (instead of diagonal Hutchinson mode) of DSO-Adaptive and the method keeps expending its geval budget at a great rate.

\paragraph{Stage (ii): the plateau.}
Unconstrained, DSO-Adaptive ($K{=}5$) converges to $30.0$\,dB in
${\sim}2{,}400$\,s and ${\sim}3{,}000$ steps, consuming
$6.53{\times}10^{6}$ grad-equivalents (8{,}867 block rebuilds; ten
steps skipped where \texttt{eigvalsh} failed to converge on a
near-singular EMA Hessian).

\begin{table}[h]
\centering\small
\caption{Equal-budget control: every method granted DSO-Adaptive's
plateau budget of $6.53{\times}10^{6}$ grad-equivalents
(${\sim}185\times$ the benchmark rule) and a 2{,}400\,s wall-clock limit, on
ESM image \#0, seed 42. ``Limited by'' is the stopping constraint.}
\label{tab:nanobudget}
\resizebox{\textwidth}{!}{%
\begin{tabular}{lcccl}
\toprule
Method & best PSNR (dB) & steps & gevals used & limited by \\
\midrule
ARC, global Krylov~\citep{cartis2011a,cartis2011b} & \textbf{37.21} &
  16{,}391 & $4.6{\times}10^5$ & wall clock \\
\textbf{CubicKrylov} (nano) & 36.64 & 7{,}334 & $5.8{\times}10^6$ &
  wall clock \\
ARC, per-block $\sigma_b$, reduced-operator
  acceptance~\citep{doikov2025gcb} & 36.49 & 2{,}790 sweeps &
  $1.4{\times}10^6$ & wall clock \\
ARC, per-block $\sigma_b$ (Appendix~\ref{sec:feasibility}) & 36.37 &
  1{,}795 sweeps & $1.2{\times}10^6$ & wall clock \\
Reg.\ Newton lazy, $m{=}n$~\citep{doikov2023lazy} & 36.36 & 120{,}000 &
  $2.4{\times}10^5$ & wall clock \\
CN-Lazy, $m{=}n$~\citep{doikov2023lazy} & 36.18 & 115{,}000 &
  $2.4{\times}10^5$ & wall clock \\
CubicCheby-DSO v2r, grad-triggered refresh
  (Appendix~\ref{sec:coststructure}) & 35.46 & 8{,}113 & $6.5{\times}10^6$
  & gevals \\
CubicCheby-DSO v3, $\alpha$-floor (Appendix~\ref{sec:coststructure}) &
  35.34 & 7{,}889 & $6.5{\times}10^6$ & gevals \\
CubicCheby-DSO v2, EMA-cached bounds (Appendix~\ref{sec:coststructure}) &
  35.18 & 7{,}997 & $6.5{\times}10^6$ & gevals \\
CubicCheby-DSO grS, gradient regularized shift
  (Appendix~\ref{sec:coststructure}) & 34.93 & 8{,}018 & $6.5{\times}10^6$
  & gevals \\
CubicCheby-DSO v1, fresh bounds (Appendix~\ref{sec:coststructure}) &
  34.92 & 5{,}350 & $5.0{\times}10^6$ & wall clock \\
FD-CNM, $m{=}50$~\citep{doikov2023fo} & 34.70 & 893 & $5.4{\times}10^5$ &
  own convergence \\
CubicCheby-DSO v2, fresh bounds (Appendix~\ref{sec:coststructure}) &
  34.20 & 6{,}358 & $5.6{\times}10^6$ & wall clock \\
CubicCheby-DSO v1, EMA-cached bounds (Appendix~\ref{sec:coststructure}) &
  33.55 & 5{,}919 & $5.3{\times}10^6$ & wall clock \\
DSO-BlockHess (abs-eig, every step) & 33.07 & 429 & $6.5{\times}10^6$ &
  gevals \\
DSO-Adaptive, $K{=}5$ (reference) & 29.95 & 2{,}496 & $6.5{\times}10^6$ &
  own plateau \\
Adaptive cubic Newton, $m{=}1$~\citep{doikov2023lazy} & 29.53 & 80 &
  $6.1{\times}10^5$ & wall clock \\
SSCN $\tau{=}64$~\citep{zhao2025sscn} & 28.87 & 10{,}000 &
  $6.5{\times}10^5$ & step limit \\
Spectral-precond.\ GD $\tau{=}10$~\citep{doikov2024spectral} & 28.00 &
  32{,}000 & $7.7{\times}10^5$ & saturated \\
DSO-BlockHess (plain Newton, rebuild every 10) & 8.20 & 4{,}281 &
  $6.5{\times}10^6$ & algorithmic \\
Cubic Newton (fresh Hessian)~\citep{nesterov2006cubic} & 3.42 & 9 &
  $1.4{\times}10^5$ & \texttt{eigh} failure \\
\bottomrule
\end{tabular}}
\end{table}

\paragraph{Stage (iii): interpretation.}
Table~\ref{tab:nanobudget} distinguishes three failure modes that are not clearly separated in the budgeted benchmark of Table~\ref{tab:nanoesm20}.

\emph{Rebuild accounting.} The abs-eig DSO-BlockHess (exact block Hessian with absolute eigenvalues), which scores
$8.2$\,dB under the benchmark rule (three affordable steps within the geval budget), reaches \textbf{33.1\,dB}, it needed greater budget to give a good result. The same holds in
attenuated form for DSO-Adaptive ($12.6\to30.0$\,dB). The second-kind
DSO family therefore does beat first-kind spectral preconditioning at
nano scale too, by $2$--$5$\,dB, consistent with its results on the
saddle rich regression benchmark and on the 91.4M parameter fidelity
study. 

\emph{Algorithmic failure.} The plain Newton DSO-BlockHess variant stays at $8.2$\,dB after
$6.5{\times}10^{6}$ grad-equivalents because its unregularized Newton
step on severely indefinite layer Hessians is attracted to saddles (since the eigenvalues are absolute, the saddle escape mechanism due to the Chebyshev second kind preconditioning does not function), the
failure already diagnosed in Section~\ref{sec:nano}. Increasing the budget does not help it because it cannot escape saddles. Plain cubic Newton (with exact Hessian recomputations every step) crashes at step~9 when its own trajectory drives the dense
Hessian into a state whose eigendecomposition fails to converge in
LAPACK. 

\emph{CubicKrylov performance.} Among the methods evaluated on these geval budget constrained 
benchmarks, CubicKrylov leads at \emph{both} budget levels, 31.66\,dB at
$17.7\mathrm{k}$ grad-equivalents and 36.64\,dB at $6.5{\times}10^{6}$. 
The lazy cubic Newton variants with $m{=}n$ are close to
within $0.3$--$0.5$\,dB here, but only by spending the full 2{,}400\,s
wall budget on $10^5$ iterations, and the strongest DSO variant with exact Hessian recomputation remains $3.6$\,dB behind at $150\times$ the
spent oracle cost.

\emph{ARC-global and the scale boundary.} The one method that edges
past CubicKrylov in this control is global-Krylov ARC: \textbf{37.21\,dB}
after 16{,}391 accepted steps (21 rejections; $\sigma$ settles at its
$10^{-8}$ floor) at only $4.6{\times}10^{5}$ grad-equivalents. At $n=15$k the loss landscape is still benign enough that a
\emph{single} global regularization weight is adequate: nearly every
trial step is accepted, the multiplicative update drives $\sigma$ down
to its lower clamp $\sigma_{\min}=10^{-8}$, the cubic term becomes
negligible, the whole-space Lanczos step is nearly an unregularized
Newton step, and the method converges rapidly. The per-block ARC
variants of Appendix~\ref{sec:feasibility} reach the same PSNR range
(36.37\,dB plain, 36.44\,dB stabilized, 36.49\,dB with the guarded
gradient-based acceptance, and every $\sigma_b$ ends at the same
clamp), so at this scale it does not matter whether the regularization
weight is global or per-block, and the block bookkeeping only adds
sweep overhead. This makes the full-scale outcome the more instructive
one: the identical global implementation stalls at $4.0$\,dB on the
91.4M parameter model (Appendix~\ref{sec:feasibility}), where a single
$\sigma$ must satisfy the acceptance test along all $91.4$M directions
at once, so rejections escalate it to ${\sim}10^{10}$ and the accepted
steps shrink to noise level, while the per-block variant reaches
61.9\,dB there. The nano control and the full-scale runs therefore lie
on opposite sides of the model size at which the global-$\sigma$
formulation stops scaling and per-block regularization constants
become the enabling mechanism.

Among the methods of the budgeted benchmark of
Table~\ref{tab:nanoesm20}, the matrix-free CubicKrylov step leads at
both budget levels tested, 31.66\,dB at $17.7$k grad-equivalents and
36.64\,dB at $6.5{\times}10^{6}$, and the only entry of
Table~\ref{tab:nanobudget} ahead of it in this control, global-Krylov
ARC at 37.21\,dB, leads by less than $0.6$\,dB. The methods of
Table~\ref{tab:nanobudget} that have also been run unchanged at the
91.4M parameter scale, CubicKrylov, the per-block ARC variants of
Appendix~\ref{sec:feasibility}, and the refined recurrence hybrids of
Appendix~\ref{sec:coststructure}, all remain effective at that scale
through the same mechanism of per-block regularization
(Appendix~\ref{sec:feasibility} and the extended budget benchmark of
the companion recurrence paper~\citep{podorozh2026recurrence}). These results are consistent with the
nano ablation of Section~\ref{sec:nano}, which evaluates the same
methods without the equal oracle budget imposed in this control.



\begin{thebibliography}{99}
\small

\bibitem[Nesterov and Polyak(2006)]{nesterov2006cubic}
Y.~Nesterov and B.~T. Polyak.
Cubic regularization of the Newton method and its global performance.
\emph{Mathematical Programming}, 108(1):177--205, 2006.

\bibitem[Cartis et~al.(2010)]{cartis2010steepest}
C.~Cartis, N.~I.~M. Gould, and P.~L. Toint.
On the complexity of steepest descent, Newton's and regularized
Newton's methods for nonconvex unconstrained optimization problems.
\emph{SIAM Journal on Optimization}, 20(6):2833--2852, 2010.

\bibitem[Cartis et~al.(2011a)]{cartis2011a}
C.~Cartis, N.~I.~M. Gould, and P.~L. Toint.
Adaptive cubic regularization methods for unconstrained optimization.
Part I: motivation, convergence and numerical results.
\emph{Mathematical Programming}, 127(2):245--295, 2011.

\bibitem[Cartis et~al.(2011b)]{cartis2011b}
C.~Cartis, N.~I.~M. Gould, and P.~L. Toint.
Adaptive cubic regularization methods for unconstrained optimization.
Part II: worst-case function- and derivative-evaluation complexity.
\emph{Mathematical Programming}, 130(2):295--319, 2011.

\bibitem[Cartis et~al.(2025)]{cartis2025rarc}
C.~Cartis, Z.~Shao, and E.~Tansley.
Random subspace cubic-regularization methods, with applications to
low-rank functions.
arXiv preprint arXiv:2501.09734, 2025.

\bibitem[Bellavia et~al.(2025)]{bellavia2025subspace}
S.~Bellavia, D.~Palitta, M.~Porcelli, and V.~Simoncini.
Regularized methods via cubic model subspace minimization for
nonconvex optimization.
\emph{Computational Optimization and Applications}, 2025.
doi:10.1007/s10589-025-00655-2.

\bibitem[Al~Daas and Gould(2025)]{aldaas2025extkrylov}
H.~Al~Daas and N.~I.~M. Gould.
Extended-Krylov-subspace methods for trust-region and
norm-regularization subproblems.
arXiv preprint arXiv:2511.11135, 2025.

\bibitem[Cristofari(2024)]{cristofari2024block}
A.~Cristofari.
Block cubic Newton with greedy selection.
arXiv preprint arXiv:2407.18150, 2024.

\bibitem[Wolinski(2024)]{wolinski2024partition}
P.~Wolinski.
Gathering and exploiting higher-order information when training large
structured models.
OpenReview preprint, 2024.
URL \url{https://openreview.net/forum?id=EiYr9ArUFl}.

\bibitem[Ranganath et~al.(2025)]{ranganath2025sr1}
A.~Ranganath, M.~Singhal, and R.~Marcia.
Symmetric rank-one quasi-Newton methods for deep learning using cubic
regularization.
arXiv preprint arXiv:2502.12298, 2025.

\bibitem[Forristal et~al.(2022)]{forristal2022arclqn}
J.~Forristal, J.~Griffin, W.~Zhou, and S.~A. Yektamaram.
A novel fast exact subproblem solver for stochastic quasi-Newton cubic
regularized optimization.
\emph{CoRR}, 2022.
URL \url{https://openreview.net/forum?id=FxrirjIbhI}.

\bibitem[Pasechnyuk-Vilensky and Tak\'a\v{c}(2026)]{pasechnyuk2026cheby}
D.~Pasechnyuk-Vilensky and M.~Tak\'a\v{c}.
Chebyshev-exact acceleration under Hessian variation, I: Sine-Jacobi
method.
arXiv preprint arXiv:2606.16671, 2026.

\bibitem[Tansley and Cartis(2025)]{tansley2025lowrank}
E.~Tansley and C.~Cartis.
Scalable second-order optimization algorithms for minimizing low-rank
functions.
arXiv preprint arXiv:2501.03718, 2025.

\bibitem[Zhou et~al.(2025)]{zhou2025regnewton}
Y.~Zhou, J.~Xu, B.~Li, C.~Bao, C.~Ding, and J.~Zhu.
A regularized Newton method for nonconvex optimization with global
$\mathcal{O}(\epsilon^{-3/2})$ complexity and quadratic local rate.
In \emph{Advances in Neural Information Processing Systems (NeurIPS)},
2025.

\bibitem[Cartis and Jerad(2025)]{cartisjerad2025secant}
C.~Cartis and S.~Jerad.
On global rates for regularization methods based on secant derivative
approximations.
arXiv preprint arXiv:2509.07580, 2025.

\bibitem[Marumo and Takeda(2026)]{marumo2026recipe}
N.~Marumo and A.~Takeda.
A general recipe for parameter-free nonconvex optimization via
higher-order regularization.
arXiv preprint arXiv:2605.30891, 2026.

\bibitem[Fang et~al.(2026)]{fang2026paramfree}
S.~Fang, N.~Marumo, and A.~Takeda.
Parameter-free cubic-regularized Newton method: Sharp complexity and
generalized smoothness.
arXiv preprint arXiv:2607.10741, 2026.

\bibitem[Shestakov et~al.(2025)]{shestakov2025inexact}
A.~Shestakov, N.~Bashirov, A.~Semenov, A.~Gasnikov, M.~Tak\'a\v{c},
A.~Beznosikov, and D.~Kamzolov.
Adaptive regularized Newton method with inexact Hessian.
arXiv preprint arXiv:2512.08775, 2025.

\bibitem[Pasechnyuk-Vilensky et~al.(2025)]{pasechnyuk2025vr}
D.~Pasechnyuk-Vilensky, D.~Kamzolov, and M.~Tak\'a\v{c}.
Cubic regularized Newton method with variance reduction for finite-sum
non-convex problems.
arXiv preprint arXiv:2510.08714, 2025.

\bibitem[Scheinberg and Xie(2023)]{scheinberg2023sarc}
K.~Scheinberg and M.~Xie.
First- and second-order stochastic adaptive regularization with
cubics: High probability iteration and sample complexity.
arXiv preprint arXiv:2308.13161, 2023.

\bibitem[Chen et~al.(2025)]{chen2025len}
L.~Chen, C.~Liu, L.~Luo, and J.~Zhang.
Computationally faster Newton methods by lazy evaluations.
arXiv preprint arXiv:2501.17488, 2025.

\bibitem[Fuji et~al.(2025)]{fuji2025subspace}
T.~Fuji, P.-L. Poirion, and A.~Takeda.
Theoretical analysis of the randomized subspace regularized Newton
method for non-convex optimization.
\emph{Open Journal of Mathematical Optimization}, 6:article no.~8,
2025.

\bibitem[Higuchi et~al.(2025)]{higuchi2025rshtr}
R.~Higuchi, P.-L. Poirion, and A.~Takeda.
Improving convergence guarantees of random subspace second-order
algorithm for nonconvex optimization.
In \emph{International Conference on Learning Representations (ICLR)},
2025.

\bibitem[Tsipinakis et~al.(2026)]{tsipinakis2026multilevel}
N.~Tsipinakis, P.~Tigas, and P.~Parpas.
A multilevel low-rank Newton method with super-linear convergence rate
and its application to non-convex problems.
\emph{Transactions on Machine Learning Research}, 2026.

\bibitem[Zeng et~al.(2026a)]{zeng2026newtoncg}
Z.~Zeng, J.~Zhang, and C.~He.
Adaptive Newton-CG methods with global and local analysis for
unconstrained optimization with H\"older continuous Hessian.
arXiv preprint arXiv:2604.02763, 2026.

\bibitem[Zeng et~al.(2026b)]{zeng2026minres}
H.~Zeng, Y.~Liu, W.~Ouyang, and A.~Milzarek.
A MINRES-based linesearch algorithm for nonconvex optimization with
non-positive curvature detection.
arXiv preprint arXiv:2601.01575, 2026.

\bibitem[Abreu et~al.(2026)]{abreu2026gn}
N.~Abreu, N.~Vyas, S.~M. Kakade, and D.~Morwani.
The potential of second-order optimization for LLMs: A study with full
Gauss-Newton.
In \emph{International Conference on Learning Representations (ICLR)},
2026.

\bibitem[Martins Gomes(2025)]{gomes2025adafisher}
D.~Martins Gomes.
Towards practical second-order optimizers in deep learning: Insights
from Fisher information analysis.
arXiv preprint arXiv:2504.20096, 2025.

\bibitem[Hamad and Hinder(2024)]{hamad2024tr}
F.~Hamad and O.~Hinder.
A simple and practical adaptive trust-region method.
arXiv preprint arXiv:2412.02079, 2024.

\bibitem[Ha et~al.(2026)]{ha2026regastro}
Y.~Ha, S.~Shashaani, and Q.~Tran-Dinh.
Adaptive regularization within trust region methods for stochastic
nonconvex optimization.
arXiv preprint arXiv:2604.15457, 2026.

\bibitem[Zhang and Jiang(2024)]{zhangjiang2024riemannian}
C.~Zhang and R.~Jiang.
Riemannian adaptive regularized Newton methods with H\"older
continuous Hessians.
OpenReview preprint, 2024.
URL \url{https://openreview.net/forum?id=zvJgU0IJjB}.

\bibitem[Pei et~al.(2025)]{pei2025ssarcqk}
Y.~Pei, S.~Shao, M.~Silva~Louzeiro, and D.~Zhu.
A scalable sequential adaptive cubic regularization algorithm for
optimization with general equality constraints.
arXiv preprint arXiv:2503.11254, 2025.

\bibitem[Wang et~al.(2024)]{wang2024minimax}
J.~Wang, Z.~Xu, and H.~Zhang.
A fully parameter-free second-order algorithm for convex-concave
minimax problems.
arXiv preprint arXiv:2407.03571, 2024.

\bibitem[Huo et~al.(2024)]{huo2024dpfcrn}
W.~Huo, C.~Liu, K.~Ding, K.~H. Johansson, and L.~Shi.
Federated cubic regularized Newton learning with
sparsification-amplified differential privacy.
arXiv preprint arXiv:2408.04315, 2024.

\bibitem[d'Aspremont et~al.(2021)]{daspremont2021acceleration}
A.~d'Aspremont, D.~Scieur, and A.~Taylor.
Acceleration methods.
\emph{Foundations and Trends in Optimization}, 5(1--2):1--245, 2021.

\bibitem[Kohler and Lucchi(2017)]{kohler2017}
J.~M. Kohler and A.~Lucchi.
Sub-sampled cubic regularization for non-convex optimization.
In \emph{International Conference on Machine Learning (ICML)},
PMLR 70:1895--1904, 2017.

\bibitem[Dussault et~al.(2024)]{dussault2024arcqk}
J.-P. Dussault, T.~Migot, and D.~Orban.
Scalable adaptive cubic regularization methods.
\emph{Mathematical Programming}, 207(1):191--225, 2024.

\bibitem[Tsingalis et~al.(2026)]{tsingalis2026adacubic}
I.~Tsingalis, C.~Kotropoulos, and C.~Briat.
AdaCubic: an adaptive cubic regularization optimizer for deep learning.
\emph{Transactions on Machine Learning Research}, 2026.

\bibitem[Tripuraneni et~al.(2018)]{tripuraneni2018}
N.~Tripuraneni, M.~Stern, C.~Jin, J.~Regier, and M.~I. Jordan.
Stochastic cubic regularization for fast nonconvex optimization.
In \emph{Advances in Neural Information Processing Systems~31
(NeurIPS)}, pages 2899--2908, 2018.

\bibitem[Xu et~al.(2020)]{xu2020inexact}
P.~Xu, F.~Roosta, and M.~W. Mahoney.
Newton-type methods for non-convex optimization under inexact Hessian
information.
\emph{Mathematical Programming}, 184(1):35--70, 2020.

\bibitem[Carmon and Duchi(2019)]{carmon2018}
Y.~Carmon and J.~C. Duchi.
Gradient descent finds the cubic-regularized nonconvex Newton step.
\emph{SIAM Journal on Optimization}, 29(3):2146--2178, 2019.

\bibitem[Carmon and Duchi(2018)]{carmon2018krylov}
Y.~Carmon and J.~C. Duchi.
Analysis of Krylov subspace solutions of regularized non-convex
quadratic problems.
In \emph{Advances in Neural Information Processing Systems~31
(NeurIPS)}, 2018.

\bibitem[Carmon et~al.(2020)]{carmon2020lower}
Y.~Carmon, J.~C. Duchi, O.~Hinder, and A.~Sidford.
Lower bounds for finding stationary points I.
\emph{Mathematical Programming}, 184(1):71--120, 2020.

\bibitem[Nemirovski and Yudin(1983)]{nemirovski1983}
A.~S. Nemirovski and D.~B. Yudin.
\emph{Problem Complexity and Method Efficiency in Optimization}.
Wiley-Interscience, 1983.

\bibitem[Masiha et~al.(2022)]{masiha2022}
S.~Masiha, S.~Salehkaleybar, N.~He, N.~Kiyavash, and P.~Thiran.
Stochastic second-order methods improve best-known sample complexity of
SGD for gradient-dominated functions.
In \emph{Advances in Neural Information Processing Systems~35
(NeurIPS)}, 2022.

\bibitem[Doikov et~al.(2024)]{doikov2024spectral}
N.~Doikov, S.~U. Stich, and M.~Jaggi.
Spectral preconditioning for gradient methods on graded non-convex
functions.
In \emph{International Conference on Machine Learning (ICML)}, 2024.

\bibitem[Doikov et~al.(2023)]{doikov2023lazy}
N.~Doikov, E.~M. Chayti, and M.~Jaggi.
Second-order optimization with lazy Hessians.
In \emph{International Conference on Machine Learning (ICML)}, 2023.

\bibitem[Doikov and Nesterov(2025)]{doikov2025gcb}
N.~Doikov and Y.~Nesterov.
Universal reduced-operator method and high-order global curvature
bounds.
\emph{arXiv preprint arXiv:2511.07341}, 2025.

\bibitem[Semenov et~al.(2025)]{semenov2025gns}
A.~Semenov, M.~Jaggi, and N.~Doikov.
Gradient-normalized smoothness for optimization with approximate
Hessians.
\emph{arXiv preprint arXiv:2506.13710}, 2025.

\bibitem[Doikov and Richt\'arik(2018)]{doikov2018rbcn}
N.~Doikov and P.~Richt\'arik.
Randomized block cubic Newton method.
In \emph{International Conference on Machine Learning (ICML)}, 2018.

\bibitem[Doikov and Grapiglia(2023)]{doikov2023fo}
N.~Doikov and G.~N. Grapiglia.
First and zeroth-order implementations of the regularized Newton
method with lazy approximated Hessians.
\emph{arXiv preprint arXiv:2309.02412}, 2023.

\bibitem[Doikov and Rodomanov(2023)]{doikov2023poly}
N.~Doikov and A.~Rodomanov.
Polynomial preconditioning for gradient methods.
In \emph{International Conference on Machine Learning (ICML)}, 2023.

\bibitem[Doikov(2021)]{doikov_thesis}
N.~Doikov.
\emph{New Second-Order and Tensor Methods in Convex Optimization}.
PhD thesis, Universit\'e catholique de Louvain (UCLouvain), 2021.

\bibitem[Chayti et~al.(2025)]{chayti2025momentum}
E.~M. Chayti, N.~Doikov, and M.~Jaggi.
Improving stochastic cubic Newton with momentum.
arXiv preprint, 2025. 

\bibitem[Chayti et~al.(2024)]{chayti2024unified}
E.~M. Chayti, M.~Jaggi, and N.~Doikov.
Unified convergence theory of stochastic and variance-reduced cubic Newton
methods.
\emph{Transactions on Machine Learning Research (TMLR)}, 2024.

\bibitem[Agafonov et~al.(2024)]{agafonov2024inexact}
A.~Agafonov, D.~Kamzolov, P.~Dvurechensky, and A.~Gasnikov.
Inexact tensor methods and their application to stochastic convex
optimization.
\emph{Optimization Methods and Software}, 2024. 

\bibitem[Zhu and Chang(2022)]{zhu2022acrl}
Z.~Zhu and J.~Chang.
Solving the adaptive cubic regularization sub-problem using the Lanczos
method.
\emph{Symmetry}, 14(10):2191, 2022.

\bibitem[Zhao et~al.(2025)]{zhao2025sscn}
J.~Zhao, A.~Lucchi, and N.~Doikov.
Cubic regularized subspace Newton for non-convex optimization.
In \emph{International Conference on Artificial Intelligence and Statistics
(AISTATS)}, 2025.

\bibitem[Jarre(2013)]{jarre2011chebyrosen}
F.~Jarre.
On Nesterov's smooth Chebyshev--Rosenbrock function.
\emph{Optimization Methods and Software}, 28(3):478--500, 2013.
(Preprint, University of D\"usseldorf, 2011.)

\bibitem[Podorozhny(2026a)]{podorozh2026}
R.~M. Podorozhny.
Mitigating spectral bias in INRs via a spectral Chebyshev second kind
optimizer (DSO). Under submission to NeurIPS 2026.

\bibitem[Podorozhny(2026b)]{podorozh2026siam}
R.~M. Podorozhny.
Curvature-aware optimization via Chebyshev polynomials of the second kind for
deep learning.
SIAM Conference on Optimization (SIAM OP), 2026. Oral presentation.

\bibitem[Podorozhny(2026c)]{podorozh2026recurrence}
R.~M. Podorozhny.
Cubic Newton in Chebyshev-bounded Krylov subspace.
Companion paper, preprint, 2026.

\bibitem[Podorozhny(2026e)]{podorozh2026hessian}
R.~M. Podorozhny.
Bulk, outliers, and kernel modes: the Hessian structure of sinusoidal
implicit neural representations.
Companion paper, preprint, 2026.

\bibitem[Sitzmann et~al.(2020)]{siren2020}
V.~Sitzmann, J.~N.~P. Martel, A.~W. Bergman, D.~B. Lindell, and
G.~Wetzstein.
Implicit neural representations with periodic activation functions.
In \emph{Advances in Neural Information Processing Systems (NeurIPS)},
2020. Oral presentation.

\bibitem[Zeraatkar et~al.(2025)]{zeraatkar2025}
N.~Zeraatkar et~al.
ViSIR/ViFOR: vision-transformer sinusoidal INRs for the Earth system model
super-resolution.
2025.

\bibitem[Rahaman et~al.(2019)]{rahaman2019}
N.~Rahaman, A.~Baratin, D.~Arpit, F.~Draxler, M.~Lin, F.~Hamprecht,
Y.~Bengio, and A.~Courville.
On the spectral bias of neural networks.
In \emph{International Conference on Machine Learning (ICML)}, 2019.

\bibitem[Tancik et~al.(2020)]{tancik2020fourier}
M.~Tancik, P.~P. Srinivasan, B.~Mildenhall, S.~Fridovich-Keil,
N.~Raghavan, U.~Singhal, R.~Ramamoorthi, J.~T. Barron, and R.~Ng.
Fourier features let networks learn high frequency functions in low
dimensional domains.
In \emph{Advances in Neural Information Processing Systems~33
(NeurIPS)}, 2020.

\bibitem[Saragadam et~al.(2023)]{saragadam2023wire}
V.~Saragadam, D.~LeJeune, J.~Tan, G.~Balakrishnan, A.~Veeraraghavan,
and R.~G. Baraniuk.
WIRE: wavelet implicit neural representations.
In \emph{IEEE/CVF Conference on Computer Vision and Pattern
Recognition (CVPR)}, 2023.

\bibitem[Liu et~al.(2024)]{liu2024finer}
Z.~Liu, H.~Zhu, Q.~Zhang, J.~Fu, W.~Deng, Z.~Ma, Y.~Guo, and X.~Cao.
FINER: flexible spectral-bias tuning in implicit neural representation
by variable-periodic activation functions.
In \emph{IEEE/CVF Conference on Computer Vision and Pattern
Recognition (CVPR)}, 2024.

\bibitem[Kania et~al.(2025)]{kania2025fresh}
A.~Kania, M.~Mihajlovi\'c, S.~Prokudin, J.~Tabor, and P.~Spurek.
FreSh: frequency shifting for accelerated neural representation
learning.
In \emph{International Conference on Learning Representations (ICLR)},
2025.

\bibitem[Chng et~al.(2025)]{chng2025precond}
S.-F. Chng, H.~Saratchandran, and S.~Lucey.
Preconditioners for the stochastic training of neural fields.
In \emph{IEEE/CVF Conference on Computer Vision and Pattern
Recognition (CVPR)}, pages 27222--27232, 2025.

\bibitem[Shi et~al.(2025)]{shi2025iga}
K.~Shi, H.~Chen, L.~Zhang, and S.~Gu.
Inductive gradient adjustment for spectral bias in implicit neural
representations.
In \emph{International Conference on Machine Learning (ICML)},
PMLR 267:54864--54891, 2025.

\bibitem[Ling et~al.(2025)]{ling2025stochprecond}
S.~Ling, M.~Nimier-David, A.~Jacobson, and N.~Sharp.
Stochastic preconditioning for neural field optimization.
\emph{ACM Transactions on Graphics (Proc.\ SIGGRAPH)}, 44(4), 2025.

\bibitem[Gropp et~al.(2020)]{gropp2020igr}
A.~Gropp, L.~Yariv, N.~Haim, M.~Atzmon, and Y.~Lipman.
Implicit geometric regularization for learning shapes.
In \emph{International Conference on Machine Learning (ICML)},
PMLR 119:3789--3799, 2020.

\bibitem[Kingma and Ba(2015)]{kingma2015adam}
D.~P. Kingma and J.~Ba.
Adam: a method for stochastic optimization.
In \emph{International Conference on Learning Representations (ICLR)},
2015.

\bibitem[Loshchilov and Hutter(2019)]{loshchilov2019}
I.~Loshchilov and F.~Hutter.
Decoupled weight decay regularization.
In \emph{International Conference on Learning Representations (ICLR)},
2019.

\bibitem[Jordan et~al.(2024)]{Jordan2024Muon}
J.~Jordan, K.~Keller, Y.~Jin, V.~Boza, J.~You, F.~Cesista, L.~Newhouse, and J.~Bernstein.
Muon: An Optimizer for Hidden Layers in Neural Networks.
\url{https://kellerjordan.github.io/posts/muon/}, 2024.

\bibitem[Zhang et~al.(2018)]{zhang2018lpips}
R.~Zhang, P.~Isola, A.~A. Efros, E.~Shechtman, and O.~Wang.
The unreasonable effectiveness of deep features as a perceptual metric.
In \emph{IEEE/CVF Conference on Computer Vision and Pattern Recognition
(CVPR)}, 2018.

\bibitem[Gould et~al.(1999)]{gould1999gltr}
N.~I.~M. Gould, S.~Lucidi, M.~Roma, and P.~L. Toint.
Solving the trust-region subproblem using the Lanczos method.
\emph{SIAM Journal on Optimization}, 9(2):504--525, 1999.

\bibitem[Saad(2003)]{saad2003}
Y.~Saad.
\emph{Iterative Methods for Sparse Linear Systems}.
SIAM, 2nd edition, 2003.

\bibitem[Golub and Van~Loan(2013)]{golub2013}
G.~H. Golub and C.~F. Van~Loan.
\emph{Matrix Computations}.
JHU Press, 4th edition, 2013.

\bibitem[Mishchenko(2023)]{mishchenko2023}
K.~Mishchenko.
Regularized Newton method with global $\mathcal{O}(1/k^2)$
convergence.
\emph{SIAM Journal on Optimization}, 33(3):1440--1462, 2023.

\bibitem[Jiang et al.(2024)]{jiang2024}
R.~Jiang, P.~Raman, S.~Sabach, A.~Mokhtari, M.~Hong, and V.~Cevher.
Krylov cubic regularized Newton: a subspace second-order method with
dimension-free convergence rate.
In \emph{International Conference on Artificial Intelligence and
Statistics (AISTATS)}, PMLR 238, 2024.

\bibitem[Das et~al.(2024)]{das2024precond}
R.~Das, N.~Agarwal, S.~Sanghavi, and I.~S. Dhillon.
Towards quantifying the preconditioning effect of Adam.
\emph{arXiv preprint arXiv:2402.07114}, 2024.

\bibitem[Jiang et~al.(2023)]{jiang2023geometry}
K.~Jiang, D.~Malik, and Y.~Li.
How does adaptive optimization impact local neural network geometry?
In \emph{Advances in Neural Information Processing Systems~36
(NeurIPS)}, 2023.

\bibitem[Zhang et~al.(2025)]{zhangmaes2025rotation}
T.~H. Zhang, L.~Maes, A.~Milligan, A.~Jolicoeur-Martineau,
I.~Mitliagkas, D.~Scieur, S.~Lacoste-Julien, and C.~Guille-Escuret.
Understanding Adam requires better rotation dependent assumptions.
In \emph{Advances in Neural Information Processing Systems~38
(NeurIPS)}, 2025.

\bibitem[Xie et~al.(2024)]{xie2024linf}
S.~Xie, M.~A. Mohamadi, and Z.~Li.
Adam exploits $\ell_\infty$-geometry of loss landscape via
coordinate-wise adaptivity.
\emph{arXiv preprint arXiv:2410.08198}, 2024.

\bibitem[Zhang et~al.(2024)]{zhang2024transformers}
Y.~Zhang, C.~Chen, T.~Ding, Z.~Li, R.~Sun, and Z.-Q. Luo.
Why transformers need Adam: a Hessian perspective.
In \emph{Advances in Neural Information Processing Systems~37
(NeurIPS)}, 2024.

\bibitem[Dong et~al.(2025)]{dong2025hessianstruct}
Z.~Dong, Y.~Zhang, J.~Yao, and R.~Sun.
Towards quantifying the Hessian structure of neural networks.
\emph{arXiv preprint arXiv:2505.02809}, 2025.

\bibitem[Sagun et~al.(2018)]{sagun2018hessian}
Sagun, L., Evci, U., G\"uney, V.~U., Dauphin, Y., and Bottou, L.
\newblock Empirical analysis of the {H}essian of over-parametrized
neural networks.
\newblock \emph{ICLR Workshop}, 2018.

\bibitem[Jacot et~al.(2018)]{jacot2018ntk}
Jacot, A., Gabriel, F., and Hongler, C.
\newblock Neural tangent kernel: Convergence and generalization in
neural networks.
\newblock \emph{NeurIPS}, 2018.

\bibitem[Gur-Ari et~al.(2018)]{gurari2018subspace}
G.~Gur-Ari, D.~A. Roberts, and E.~Dyer.
Gradient descent happens in a tiny subspace.
\emph{arXiv preprint arXiv:1812.04754}, 2018.

\bibitem[Ghorbani et~al.(2019)]{ghorbani2019hessian}
B.~Ghorbani, S.~Krishnan, and Y.~Xiao.
An investigation into neural net optimization via Hessian eigenvalue
density.
In \emph{International Conference on Machine Learning (ICML)},
PMLR 97:2232--2241, 2019.

\bibitem[Dauphin et~al.(2014)]{dauphin2014saddle}
Y.~N. Dauphin, R.~Pascanu, C.~Gulcehre, K.~Cho, S.~Ganguli, and
Y.~Bengio.
Identifying and attacking the saddle point problem in high-dimensional
non-convex optimization.
In \emph{Advances in Neural Information Processing Systems~27
(NeurIPS)}, 2014.

\bibitem[Cohen et~al.(2022)]{cohen2022adaptive}
J.~Cohen, B.~Ghorbani, S.~Krishnan, N.~Agarwal, S.~Medapati,
M.~Badura, D.~Suo, D.~Cardoze, Z.~Nado, G.~E. Dahl, and J.~Gilmer.
Adaptive gradient methods at the edge of stability.
\emph{arXiv preprint arXiv:2207.14484}, 2022.

\bibitem[van~der~Sluis(1969)]{vandersluis1969}
A.~van~der~Sluis.
Condition numbers and equilibration of matrices.
\emph{Numerische Mathematik}, 14(1):14--23, 1969.

\bibitem[Griewank(1981)]{griewank1981modification}
A.~Griewank.
The modification of Newton's method for unconstrained optimization by
bounding cubic terms.
\emph{Technical Report NA/12}, DAMTP, University of Cambridge, 1981.

\bibitem[Hochbruck and Lubich(1997)]{hochbruck1997krylov}
M.~Hochbruck and C.~Lubich.
On Krylov subspace approximations to the matrix exponential operator.
\emph{SIAM Journal on Numerical Analysis}, 34(5):1911--1925, 1997.

\bibitem[Hochbruck and Ostermann(2010)]{hochbruck2010exponential}
M.~Hochbruck and A.~Ostermann.
Exponential integrators.
\emph{Acta Numerica}, 19:209--286, 2010.

\bibitem[Saad(1992)]{saad1992krylov}
Y.~Saad.
Analysis of some Krylov subspace approximations to the matrix
exponential operator.
\emph{SIAM Journal on Numerical Analysis}, 29(1):209--228, 1992.

\bibitem[Vyas et~al.(2024)]{vyas2024soap}
N.~Vyas, D.~Morwani, R.~Zhao, I.~Shapira, D.~Brandfonbrener, L.~Janson,
and S.~Kakade.
SOAP: Improving and stabilizing Shampoo using Adam.
\emph{arXiv preprint arXiv:2409.11321}, 2024.

\bibitem[Podorozhny(2026d)]{podorozh2026adamstall}
R.~M. Podorozhny.
Loss landscape features that make Adam stall: a diagnostic study of
coordinate-network training.
Companion paper, preprint, 2026.

\bibitem[Zhu et~al.(2024)]{zhu2024finerpp}
H.~Zhu, Z.~Liu, Q.~Zhang, J.~Fu, W.~Deng, Z.~Ma, Y.~Guo, and X.~Cao.
FINER++: building a family of variable-periodic functions for
activating implicit neural representation.
\emph{arXiv preprint arXiv:2407.19434}, 2024.

\end{thebibliography}
\end{document}